\def\eqref#1{equation~\ref{#1}}
\def\1{\bm{1}}
\DeclareMathAlphabet{\mathsfit}{\encodingdefault}{\sfdefault}{m}{sl}
\SetMathAlphabet{\mathsfit}{bold}{\encodingdefault}{\sfdefault}{bx}{n}
\newif\ifshowdiff
\newcommand{\diff}[1]{%
  \ifshowdiff
    {\color{orange}#1}%
  \else
    #1%
  \fi
}
\newtheorem{Theorem}{Theorem}
\newtheorem*{Theorem*}{Theorem}
\newtheorem{Corollary}{Corollary}
\title{Operator Learning with Domain Decomposition for Geometry Generalization in PDE Solving}
\author{%
  Jianing Huang\textsuperscript{\Letter} \thanks{Corresponding author: jianing.huang@cn.bosch.com}, Kaixuan Zhang, Youjia Wu, Ze Cheng \\
  Corporate Research, Bosch (China) Investment Co., Ltd.\\
  \texttt{\{jianing.huang, kaixuan.zhang, youjia.wu, ze.cheng\}@cn.bosch.com} \\
  % examples of more authors
  % \And
  % Coauthor \\
  % Affiliation \\
  % Address \\
  % \texttt{email} \\
  % \AND
  % Coauthor \\
  % Affiliation \\
  % Address \\
  % \texttt{email} \\
  % \And
  % Coauthor \\
  % Affiliation \\
  % Address \\
  % \texttt{email} \\
  % \And
  % Coauthor \\
  % Affiliation \\
  % Address \\
  % \texttt{email} \\
}
\begin{document}

\maketitle

\begin{abstract}
  % Neural operator can learn complicated mappings between function spaces over complex domains and thus is gaining popularity in solving \textit{partial differential equations} (PDEs). However, neural operator learning requires large amount of data and cannot quickly adapt to new geometries of underlying domain. To alleviate these issues, we propose domain decomposition neural operator (DDNO), a local-to-global framework to solve PDEs by combining neural operator learning with classical domain decomposition methods (DDMs) in solving PDEs. Extensive experiments are performed to show improvement in accuracy, more efficient use of data and ability to quickly adapt to new geometries of domains without needs of generating new data.

  Neural operators have become increasingly popular in solving \textit{partial differential equations} (PDEs) due to their superior capability to capture intricate mappings between function spaces over complex domains. However, the data-hungry nature of operator learning inevitably poses a bottleneck for their widespread applications. At the core of the challenge lies the absence of transferability of neural operators to new geometries. To tackle this issue, we propose operator learning with domain decomposition, a local-to-global framework to solve PDEs on arbitrary geometries. Under this framework, we devise an iterative scheme \textit{Schwarz Neural Inference} (SNI). This scheme allows for partitioning of the problem domain into smaller subdomains, on which local problems can be solved with neural operators, and stitching local solutions to construct a global solution. Additionally, we provide a theoretical analysis of the convergence rate and error bound. We conduct extensive experiments on several representative linear and nonlinear PDEs with diverse boundary conditions and achieve remarkable geometry generalization compared to alternative methods.
  These analysis and experiments demonstrate the proposed framework's potential in addressing challenges related to geometry generalization and data efficiency. The code is publicly available at this repository: \url{https://github.com/questionstorer/sni}.

\end{abstract}

% Our approach stands tall as the unrivaled leader, showcasing unmatched geometry generalization capabilities that outshine all alternative methods in the field.
\section{Introduction}
\label{intro}

% Partial differential equations (PDEs) are widely used by scientists to describe laws of nature and simulation of physical world traditionally requires delicately designed numerical solvers. To improve the efficiency of solving PDEs and make use of historical data, neural operators \cite{kovachki2023neural} that map the space of functional dependence to the solution space for PDEs become increasingly popular. Powerful neural operators have been proposed to approximate classical numerical PDE solvers with great accuracy while being much more efficient. 
% Moreover, neural operators are developed to become more flexible in their input format \cite{li2020fourier, li2023fourier, li2024geometry, liu2023nuno, hao2023gnot}, allowing them to handle multiple input functions and non-uniform meshes.

\textit{Partial differential equation} (PDE) solving is of paramount importance in comprehending  natural phenomena, optimizing engineering systems, and enabling multidisciplinary applications~\citep{evans2022partial}. 
% Traditional PDE solvers are computationally expensive%bits challenges such as grid dependency, high dimensionality, stability, and convergence
% ~\cite{liu2013finite, lu2019deeponet} and learning-based methods are being investigated as potential alternatives to overcome their computational limitations. 
The computational cost associated with traditional PDE solvers~\citep{liu2013finite, lu2019deeponet} has prompted the exploration of learning-based methods as potential alternatives to overcome these limitations.
Neural operators~\citep{li2020fourier, li2023fourier, li2024geometry, liu2023nuno, hao2023gnot}, as an extension of traditional neural networks, aim to learn mappings between the functional dependencies of PDEs and their corresponding solution spaces. They offer highly accurate approximations to classical numerical PDE solvers while significantly improving computational efficiency. Despite its success, operator learning, as a data-driven approach, encounters the inherent `chicken-and-egg’ problem, revealing an interdependence between operator learning and the availability of data. This dilemma arises from the challenge of simultaneously addressing the inefficiency of classical solvers and acquiring an ample amount of data for neural operator training.

Existing works in alleviating the above challenges explore symmetries of PDEs. Lie point symmetry data augmentation (LPSDA) \citep{brandstetter2022lie} generates potentially infinitely many new solutions of PDE from existing solution by exploiting symmetries of differential operator defining the PDE. Subsequent work \citep{mialon2023self} applies LPSDA for self-supervised learning. However, LPSDA only partially alleviate the problem in data efficiency and the problem of how to quickly generalize to new geometry is untouched. While existing neural operators have shown capabilities in handling diverse geometries through approaches such as geometry parametrization \citep{li2023fourier} or coordinate representation \citep{hao2023gnot}, they lack the ability to generalize to entirely novel geometries that differ significantly from those present in the training data distribution. The inability to quickly adapt neural operators to unseen geometries without further generating new data hinders the applicability of neural operator learning to real-world problems in industry.

% To tackle this challenge, a natural idea is to break down a domain into some basic shapes where neural operator can generalize well. The classical domain decomposition methods (DDMs) \cite{toselli2004domain, mathew2008domain} provide the right tool for this approach with the purpose of parallelizing numerical methods such as finite element methods (FEMs).
% An iterative scheme from DDMs can be applied by leveraging a neural operator trained on basic shapes to solve local problems until convergence.
% An iterative scheme is applied along with local numerical solvers to update solution until its convergence. The field of DDMs is a well-studied subject with sound theoretical guarantees in convergence rate and error bound. Various DDMs have been widely used and integrated into commercial simulator such as COMSOL multi-physics, Ansys [TODO:cite ansys] as well as open-source software such as FreeFEM \cite{MR3043640}. 

To tackle this challenge, a natural idea is to break down a domain into some basic shapes where neural operator can generalize well. \textit{Domain decomposition methods} (DDMs) \citep{toselli2004domain, mathew2008domain} provide the suitable tool for this purpose. 
%\cite{mao2024towards} proposes to combine operator learning with DDMs on uniform grids in order to accelerate traditional DDMs. For our purpose of solving PDEs with neural operator on arbitrary geometry, we introduce a local-to-global framework by combining operator learning with DDMs. 
Related efforts such as \cite{mao2024towards} have combined operator learning with DDMs on uniform grids to accelerate classical methods. 
In contrast, our work aims to extend this paradigm to arbitrary geometries through a local-to-global framework.
This framework consists of three parts:
    (1) Training data generation: %We devise principles for generating training data, which involve 
    creation of random basic shapes and imposition of appropriate boundary conditions on these shapes. This generated data serves as the training set for the neural operator in our framework.
    (2) Local operator learning: 
    % \ze{Neural operators are trained to learn solutions on basic shapes.}
    % \ze{training neural operators to learn solutions on localized basic shapes.}
    neural operator training to learn solutions on basic shapes.
    Data augmentation based on symmetries of PDEs is utilized to enable the neural operator to capture the intricate details and variations within these shapes.
    (3) Schwarz neural inference (SNI): a three-step algorithm for inference. Firstly, the computational domain is partitioned into smaller subdomains. Then, the learned operator is applied within each subdomain to obtain the local solution. Finally, an iterative process of stitching and updating the global solution is performed using additive Schwarz methods.

\textbf{Our Contributions.} We summarize our contributions below: 
\begin{itemize}
% \item We introduce a local-to-global framework that combines operator learning with DDMs. This work represents the first attempt, to our best knowledge, in directly tackling the inherent `chicken-and-egg' problem through the lens of geometry generalization for operator learning.
% \item We propose a novel data generation scheme and an inference algorithm SNI.%, specifically designed for this framework.
% \item 
% % The effectiveness of our approach is validated through theoretical analysis and empirical evaluations, demonstrating its capability in addressing challenges associated with geometry generalization. \ze{->
% We theoretically analyze the convergence and the error bound of the proposed algorithm for a wide range of elliptic PDEs. Through comprehensive experiments, we empirically validate the effectiveness of our framework on generalizing to new geometries for both linear and nonlinear PDEs. 
\item We introduce a local-to-global framework that integrates operator learning with domain decomposition methods as an attempt in tackling the geometry generalization challenge in operator learning.%, which attempts to solve the geometry generalization problem for neural operators.
\item We design a novel data generation scheme that leverages random shape generation and symmetries of PDEs to train local neural operators for solving PDEs on basic shapes.
\item We propose an iterative inference algorithm, SNI, built upon a trained local neural operator to obtain solutions on arbitrary geometries. We theoretically analyze the convergence and the error bound of the algorithm for a wide range of elliptic PDEs. Through comprehensive experiments, we empirically validate the effectiveness of our framework on generalizing to new geometries for both linear and nonlinear PDEs.
% }
\end{itemize}

\section{Problem Formulation and Preliminaries}
\label{pre}

In this section, we provide an introduction to the problem formulation and essential background on domain decomposition methods, which will be  utilized throughout the entirety of the paper.

\subsection{Problem formulation}
\label{problem}

% We consider PDEs defined on various domains. As in \cite{li2023fourier} we assume all the domains $\Omega$ are bounded orientable manifolds embedded in some background Euclidean space $\mathbb{R}^n$. 
Our primary focus is on stationary problems of PDEs defined in the following form:
% \begin{equation}
% \begin{aligned}
% \mathcal{L}(u)&=f \quad \text{in} \ \  \Omega \\
% % \mathcal{B}(u)&=g \quad \text{in} \ \  \partial\Omega=\Gamma_D\cup\Gamma_N
% u = u_D \quad\text{on} \ \ \Gamma_D,&\quad
% \frac{\partial u}{\partial n}  = g\quad\text{on} \ \ \Gamma_N, \quad \Gamma_D\cup\Gamma_N=\partial \Omega
% \end{aligned}
% \label{eq:PDE}
% \end{equation}

\begin{equation}
\begin{aligned}
\mathcal{L}(u) &= f \quad\quad \text{in} \ \  \Omega \\
u&=u_D \quad\  \text{on} \ \  \Gamma_D\\
\frac{\partial u}{\partial n}&=g \quad\quad \text{on} \ \  \Gamma_N\\
\end{aligned}
\label{eq:PDE}
\end{equation}

% \begin{equation}
% \mathcal{L}(u)=f \quad \text{in} \ \  \Omega,\quad u = u_D \quad\text{on} \ \ \Gamma_D,\quad
% \frac{\partial u}{\partial n}  = g\quad\text{on} \ \ \Gamma_N, \quad \Gamma_D\cup\Gamma_N=\partial \Omega
% \label{eq:PDE}
% \end{equation}

where $\mathcal{L}$ is a partial differential operator and $\Gamma_D\cup\Gamma_N=\partial\Omega$ denotes Dirichlet and Neumann boundary, respectively. We assume all the domains $\Omega$ are bounded orientable manifolds embedded in some ambient Euclidean space $\mathbb{R}^n$~\citep{li2023fourier}. 
Later we will extend our method to handle time-dependent equations.
% We mainly focus on stationary problem of PDEs defined in the following form and will extend our method to time-dependent parabolic equation. 
% \begin{equation}
% \begin{aligned}
% \mathcal{L}(u) &= f \quad\quad \text{in} \ \  \Omega \\
% u&=u_D \quad\  \text{on} \ \  \Gamma_D\\
% \frac{\partial u}{\partial n}&=g \quad\quad \text{on} \ \  \Gamma_N\\
% \end{aligned}
% \label{eq:PDE}
% \end{equation}

We consider situations where geometry of domain $\Omega_{\text{inf}}$ at inference time is decoupled from that of $\Omega_{\text{train}}$ in training time, i.e., $\Omega_{\text{inf}}$ does not have to fall in or resemble training geometries and can be of arbitrary shapes. For implementation we will mainly focus on $\Omega\subseteq \mathbb{R}^2$.

\subsection{Domain decomposition methods}
\label{ddm}

Domain decomposition methods (DDMs) solve Eq.~\ref{eq:PDE} by decomposing domain into subdomains and iteratively solve a coupled system of equations on each subdomain. An \textit{overlapping decomposition} of $\Omega$ is a collection of open subregions $\{\Omega_k\}_{k=1}^K$, $\Omega_k\subseteq \Omega$ for $k=1,\dots, K$ such that $\bigcup_{k=1}^K \Omega_k = \Omega$. 
%An overlapping decomposition constructed in practice has overlapping neighboring subdomains. 
We denote $V$ and $\{V_k\}_{k=1}^K$ to be finite element space associated with domain $\Omega$ and $\{\Omega_k\}_{k=1}^K$. We can define \textit{restriction operators} $\{R_k:V\rightarrow V_k\}_{k=1}^K$ restricting functions on $\Omega$ to $\{\Omega_k\}_{k=1}^K$ and \textit{extension operators} $\{R^\intercal_k:V_k\rightarrow V\}_{k=1}^K$ extending functions on $\{\Omega_k\}_{k=1}^K$ to $\Omega$ by zero. 

In the subsequent discussion, we 
% \ze{introduce -> recall} 
revisit the idea of \textit{additive Schwarz method} (ASM) in DDMs for overlapping decomposition. The \textit{additive Schwarz-Richardson iteration} \citep{mathew2008domain} has the following form: 
% \begin{equation}
% u^{n+1}=(1-\tau k)u^n + \tau \sum_{i=1}^k u_i^n
% \label{eq:Richardson}
% \end{equation}
\begin{equation}
u^{n+1}=u^n + \tau \sum_{k=1}^K \left[ R_k^\intercal w_k^{n+1}-R_k^\intercal R_k u^n \right]
\label{eq:Richardson_Schwarz}
\end{equation}
where $0 < \tau < \frac{1}{K}$ is a hyperparameter controlling the convergence rate, and $w_k^{n+1}$ is the solution of the following equation:
% by construction a function on $\Omega$ of the following equation on $\Omega_i$ extended by $u_i^n=u^n$ on $\Omega -\overline{\Omega_i}$.
\begin{equation}
\begin{aligned}
\mathcal{L}(w_k^{n+1}) &= 0 \quad\quad \text{in} \ \  \Omega_k \\
w_k^{n+1}&=u_D \quad\  \text{on} \ \  \partial\Omega_k\cap\Gamma_D \\
\frac{\partial w_k^{n+1}}{\partial n}&=g \quad\quad \text{on} \ \  \partial\Omega_k\cap\Gamma_N\\
w_k^{n+1} &= u^n \quad\ \  \text{on} \ \  \partial\Omega_k\cap \Omega
\end{aligned}
\label{eq:local_PDE}
\end{equation}

We denote the local operator $\mathcal{S}_k: (u^n, u_D, g)  \mapsto w_k^{n+1}$.
Note that the first two boundary conditions in Eq.~\ref{eq:local_PDE} is the boundary condition on the global boundary part of $\partial\Omega_k$ and is not updated during iteration. The last boundary condition is along the artificial boundary created by decomposition and the value is updated through iteration. Hence $\{\mathcal{S}_k\}_{k=1}^K$ can be considered as a single-input operator when the global boundary condition and decomposition are determined.
This iterative process can be shown to converge for FEMs under mild assumption on properties of equation and decomposition. Please refer to Appendix \ref{app:domain_decomposition} for more details.

\section{Operator Learning with Domain Decomposition}
\label{approach}

% In this section, we first introduce our local-to-global framework and then present a detailed implementation of the approach. 
% Subsequently, we provide theoretical results pertaining to the scheme and engage in a discussion on its potential extension to tackle more challenging problems.

% \subsection{Framework}
% \label{framework}

\begin{figure}[!t]
\begin{center}
\includegraphics[scale=0.43]{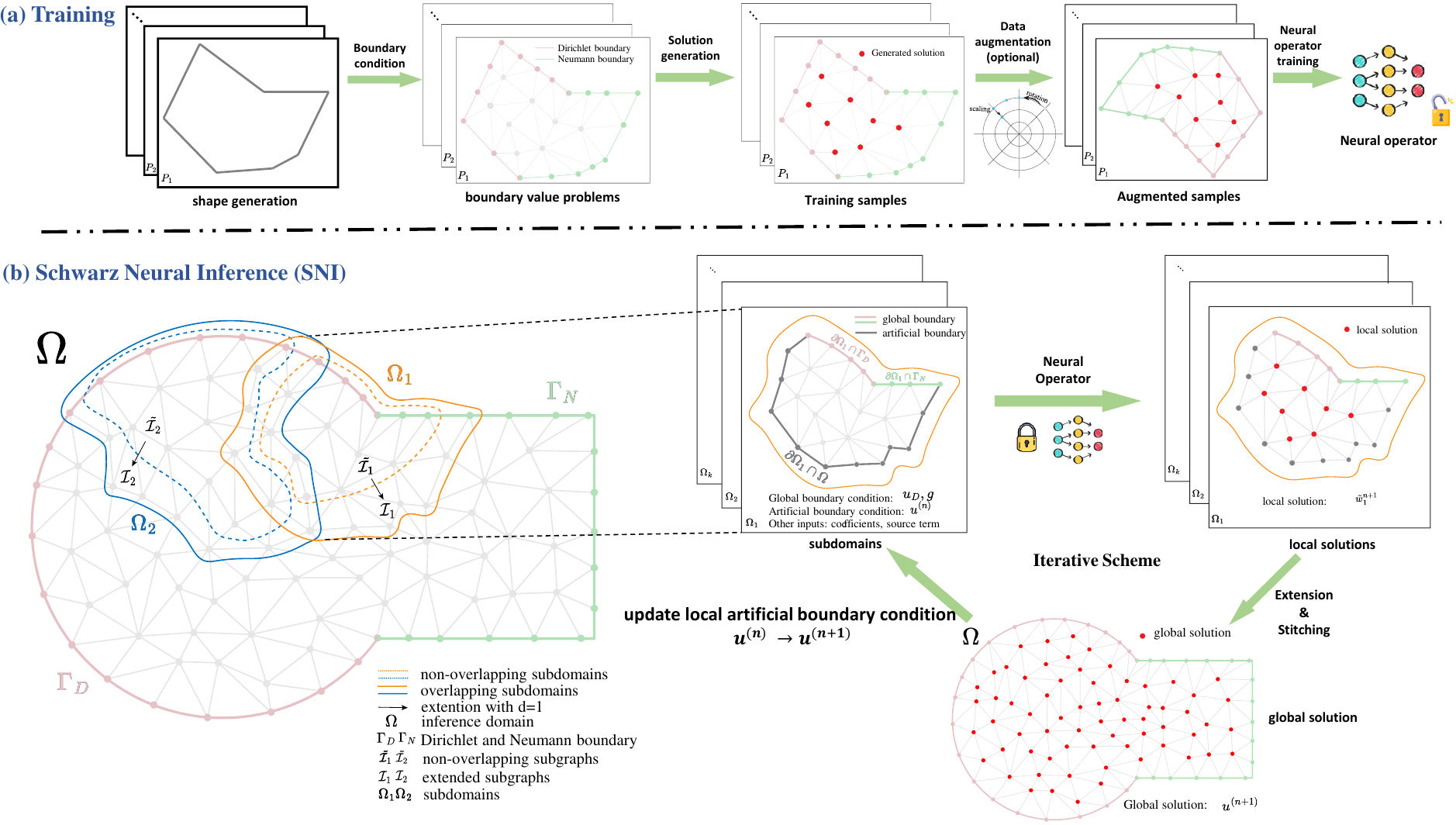}
\end{center}
\caption{An illustration of Operator Learning with Domain Decomposition Framework. %\ze{missing label in fig. Better explain notations in b), add(a), (b) and (c) and split training and inference using a line} 
(a) During training stage, the goal is to ensure that the neural operator can effectively model the local solution operator on various building blocks of shapes. These building blocks are selected and generated based on specific criteria, allowing for a more efficient and targeted learning process. Proper boundary conditions are then imposed to generate local solutions which serve as training data for neural operator. (b) During inference, for an arbitrary given domain, an automated decomposition algorithm is employed to decompose the domain into subdomains. By leveraging the trained local operator and Schwarz Neural Inference (SNI), global solution can be obtained by stitching local solutions on subdomains.}
% The decomposition should be carefully designed to ensure that the shapes of the subdomains resemble those encountered during training.
\label{fig:ddm_scheme}
\end{figure}

In order to solve PDE on arbitrary geometry with neural operator, a natural idea is to decompose domain into a prescribed family of building blocks (basic shapes) since it is not feasible to explicitly consider arbitrary shapes during training stage. 
For that purpose, we propose to train a neural operator to solve local problems on basic shapes and stitch local solutions together to get a global solution. An illustration of the proposed framework is presented in Figure~\ref{fig:ddm_scheme}. 
%\youjia{In order to solve PDE on arbitrary geometry with neural operator, a natural idea is to decompose domain into a union of smaller subdomains with basic shapes, train a neural operator to solve local problems on these subdomains and stitch local solutions together to get a global solution. An illustration of the proposed framework is presented in Figure~\ref{fig:ddm_scheme}.} %By incorporating domain decomposition and harnessing the expressive capability of local neural operator, accurate and efficient solutions can be obtained for intricate geometries. 
A detailed implementation will be discussed in the following subsections.

% During training stage, the goal is to ensure that the neural operator can effectively model the local solution operator on various building blocks of shapes. These building blocks are selected and generated based on specific criteria, allowing for a more efficient and targeted learning process. Proper boundary conditions are then imposed to generate local solutions which serve as training data for neural operator. 

% During inference, for an arbitrary given domain, an automated decomposition algorithm is employed to decompose the domain into subdomains. The decomposition should be carefully designed to ensure that the shapes of the subdomains resemble those encountered during training. By leveraging the trained local operator and domain decomposition methods, global solution can be obtained by stitching local solutions on subdomains.

% It is important to note that an understanding of the symmetries of the PDE is crucial throughout the entire process. This understanding allows for the appropriate imposition of boundary conditions during training data generation and direct application of symmetries during inference. 

% By incorporating domain knowledge and leveraging the divide and conquer approach, accurate and efficient solutions can be obtained for complex geometries. . A detailed implementation will be discussed in the following subsections. 

\subsection{Training Data Generation}
\label{data_generation}

Data generation serves the purpose of operator learning, which fundamentally aims to approximate the local solution operator $\mathcal{G}:\mathcal{P}\times \mathcal{H}\to \mathcal{U}$. Here, $\mathcal{P}$ denotes the space of basic shapes, $\mathcal{H}$ represents boundary conditions and other input functions,  
% such as coefficient fields and source terms, 
$\mathcal{U}$ represents the solution space.
Next we will delve into a comprehensive examination of how $\mathcal{P}$ and $\mathcal{H}$ are determined separately.

\textbf{Choice of basic shapes.} 
% For classical domain decomposition methods, once the decomposition is constructed, a numerical solver such as FEMs can be employed to solve local problems. Since we will replace the numerical solver with a neural operator and subdomains may have various shapes, care needs to be taken to ensure that the neural operator is able to solve local problems on these shapes.
% Given that the neural operator is applied locally to individual subdomains, which may exhibit diverse shapes, it is essential to ensure the neural operator's capability to effectively solve local problems across these varying shapes.
The selection of basic shapes cannot be arbitrary due to the requirement of ensuring the neural operator's capability in solving local problems across a wide range of shapes.
% However, such shapes cannot be chosen arbitrarily in the training dataset. 
Moreover, three necessary criteria should be set forth for basic shape generation: (1) sampling feasibility: it should be tractable to sample from and solve boundary value problems on basic shapes in $\mathcal{P}$. 
%in order to generate data for operator learning; 
(2) complete coverage: basic shapes in $\mathcal{P}$ should be flexible to cover any shape of domain. 
% Due to dilemma in data-driven approaches discussed in Sec.\ref{intro}, such shapes cannot be arbitrary and we need to specify a class $\mathcal{P}$ of basic shapes and a probability distribution $\mu$ on $\mathcal{P}$ such that two criteria are satisfied. 

For implementation, we focus on $\Omega\subseteq \mathbb{R}^2$. We propose to use the space $\mathcal{P}_s(n)$ of simple polygons with at most $n$ vertices (i.e. planar polygon without self-intersection and holes) uniformly bounded by a compact region in $\mathbb{R}^2$. Simple polygons are Lipschitz domains with straightforward sampling method~\citep{aueryrpg}
% (for example, using \cite{aueryrpg}) 
and flexible enough to constitute any discretized planar domain~\citep{preparata2012computational}. We note, however, that this is not the only choice of these basic shapes. We could equally use convex polygons, star-shaped polygons, etc. as long as the two aforementioned criteria are satisfied. 

\textbf{Imposing boundary conditions.} 
% Given $\mathcal{P}$ and a probability measure $\mu$ on $\mathcal{P}$, we can sample random shapes from $\mu$, 
% put a mesh and impose boundary conditions to generate a solution. 
% Question arises regarding what boundary condition we should impose. This present two complications.
The imposition of boundary conditions presents two complications: (1) Types of boundary conditions.
Neumann boundary conditions in Eq.~\ref{eq:PDE} will inevitably result in mixed boundary conditions in local subdomains. To generate solutions with mixed boundary conditions, we randomly divide the boundary of a basic shape into two connected components, representing the Dirichlet and Neumann boundaries, respectively. During inference, we have to carefully set hyperparameters for decomposition to make sure boundary of subdomains have at most two connected components for Dirichlet and Neumann boundaries.
(2) Functional range of boundary conditions. In general, the inference process for subdomains will encounter arbitrary ranges in boundary conditions.
However, it is practically infeasible to train the neural operator to handle unbounded boundary values. Instead, we generate random functions with values normalized within a bounded range for both boundary conditions and other input functions such as coefficient fields and source terms. We will handle this complication with symmetries of PDEs during inference.

% The first one comes from mixed boundary condition. If we want to solve Eq.\ref{eq:PDE} when Neumann boundary condition is present, i.e., $\Gamma_N\neq \varnothing$, local problems can involve mixed boundary condition. To generate solution with mixed boundary condition, we randomly split boundary of a basic shape into two connected components represent Dirichlet and Neumann boundaries. 
% During inference, we have to carefully set hyperparameters for decomposition to make sure boundary of subdomains have at most two connected components for Dirichlet and Neumann boundaries.

% The second one comes from functional range of boundary conditions. Theoretically we should train the neural operator to handle boundary conditions in arbitrary range of value because we have no prior control on values that can appear on the boundary of subdomains in iterations of DDMs during inference. In practice, it is infeasible and we only generate random functions with values normalized in a bounded range. We adopt this strategy for boundary condition and other input functions such as coefficient fields and source terms. We would handle this complication with symmetries of PDEs during inference time.

%For implementation, we generate solution using FEniCSx \cite{BarattaEtal2023, BasixJoss,ScroggsEtal2022} for the flexibility of sampling random boundary conditions and generating solutions on various random shapes.

\subsection{Local Operator Learning}
\label{local_operator_learning}

We now train a neural operator $\mathcal{G}^\dag$ to approximate the mapping $\mathcal{G}$. Our focus is not on design of neural operators, but on ensuring that the neural operator can solve local problems accurately.
% Based on the choice of basic shape space $\mathcal{P}$ in Section \ref{data_generation}, we now train a neural operator to solve local problems for domains in $\mathcal{P}$. Formally, a neural operator $\mathcal{G}^\dag$ is trained to approximate the following mapping: $\mathcal{G}:\mathcal{P}\times \mathcal{H}\to \mathcal{U}$, where $\mathcal{H}$ represent boundary conditions and other input functions such as coefficient fields and source terms, and $\mathcal{U}$ represents the solution space.
% \begin{equation}
% \mathcal{G}:\mathcal{P}\times \mathcal{H}\to \mathcal{U}
% \label{eq:neural_operator}
% \end{equation}

\textbf{Choice of neural operator architecture.} Our framework is orthogonal to the choice of neural operator architecture as long as the architecture can accommodate  flexible input/output formats and possesses sufficient expressive power to solve local problem with randomly varying domain and input functions. For implementation, we adopt GNOT \citep{hao2023gnot} which is a highly flexible transformer-based neural operator. We note that, however, training neural operator on highly varying geometries presents challenges to both design of architectures and training schemes. 

\textbf{Data augmentation.} %\ze{We should refer to Appendix A.4\&A.5, Table \ref{tab:symmetries} here.}
To enhance the generalization capabilities of the neural operator, Lie point symmetry data augmentation (LPSDA) \citep{brandstetter2022lie} can be naturally applied to local solutions during training. Examples of such transformations are rotation and scaling.
It is crucial to appropriately extend these transformations to boundary conditions and other input functions, taking into account the symmetries inherent in the PDEs. Please refer to Appendix~\ref{app:symmetry} for a detailed discussion. 

% There is a natural way to apply Lie point symmetry data augmentation (LPSDA) \cite{brandstetter2022lie} to local solutions in training time to improve the generalization ability of neural operator. We could apply transforms such as rotation and scaling laws on basic shapes and generate new solutions. These transforms should be properly extended to boundary conditions and other input functions based on understanding of symmetries of PDEs.

\begin{algorithm}[t]
\caption{Schwarz Neural Inference}\label{algm:infer}
\begin{algorithmic}[1]

\REQUIRE Domain $\Omega$; Global boundary data $B$ (Dirichlet/Neumann); Other input functions $H$ (e.g., coefficient fields, source terms, PDE parameters); Number of subdomains $K$; Depth of extension $d$; Local operator $\mathcal{G}^\dag$; Step size $\tau$; Convergence criterion $C$;
\ENSURE Global Solution $u$;\\

\STATE Apply METIS and extension to get overlapping decomposition $\{\Omega_k\}_{k=1}^K$, obtain restriction operators $\{R_k\}_{k=1}^K$ and extension operators $\{R_k^\intercal\}_{k=1}^K$;
\STATE Initialize the global solution $u^0$; 
\WHILE{convergence criterion $C$ not satisfied}
\STATE Form local inputs $\{(\Omega_k, B_k^n, H_k)\}_{k=1}^K$ from $\Omega$, $B$, $H$ and the last-step solution $u^n$;
\STATE Obtain preprocessing transforms $\{T_k\}_{k=1}^K$ and postprocessing transforms $\{\tilde{T}_k\}_{k=1}^K$;
\STATE Local inference on each subdomain:
    $\hat{w}^{n+1}_k = \tilde{T}_k\big(\mathcal{G}^\dag\big(T_k(\Omega_k, B_k^n, H_k)\big)\big)$;
\STATE Global update (additive Schwarz form):
    $u^{n+1} = u^n + \tau \sum_{k=1}^K R_k^\intercal\big(\hat{w}^{n+1}_k - R_k u^n\big)$;
\STATE $n = n+1$;
\ENDWHILE
\RETURN $u^n$;
\end{algorithmic}
\end{algorithm}

\subsection{Schwarz Neural Inference}
\label{ddm_inference}

Inspired by additive Schwarz method, we introduce a similar iterative algorithm called Schwarz Neural Inference (SNI), which is outlined in Algorithm \ref{algm:infer}. In the subsequent discussion, we will explore several important considerations.

At each iteration $n$, SNI assembles, for every subdomain $\Omega_k$, a local boundary input $B_k^n$ by combining the prescribed global boundary data on $\partial\Omega_k\cap\partial\Omega$ and the interface Dirichlet data from the previous iterate on $\partial\Omega_k\cap\Omega$. For PDEs with additional input functions or parameters (e.g., coefficient fields and source terms), we denote by $H_k$ their restriction to $\Omega_k$. In Algorithm~\ref{algm:infer}, $\hat{w}_k^{n+1}$ denotes the post-processed local solution on $\Omega_k$.

% \begin{algorithm}[t]
% \caption{Schwarz Neural Inference}\label{algm:infer}
% \begin{algorithmic}[1]

% \REQUIRE Domain $\Omega$; Global Boundary Condition $B$; Other input functions $H$; Number of Subdomains $K$; Depth of Extension $d$; Local Operator $\mathcal{G}^\dag$; Step Size $\tau$; Convergence Criterion $C$;
% \ENSURE Global Solution $u$;\\

% \STATE Apply METIS and extension to get overlapping decomposition $\{\Omega_k\}_{k=1}^K$, obtain restriction operators $\{R_k\}_{k=1}^K$ and extension operators $\{R_k^\intercal\}_{k=1}^K$;
% \STATE Initialize the global solution $u^0$; 
% \WHILE{convergence criterion $C$ not satisfied}
% \STATE update local boundary condition $\{B_k^n\}_{k=1}^K$ by global boundary condition $B$ and last-step global solution $u^n$;
% \STATE obtain the preprocessing $\{T_k\}_{k=1}^K$ and postprocessing transformations $\{\tilde{T}_k\}_{k=1}^K$;
% \STATE inference on each subdomain using local operator:
%     $\tilde{w}^{n+1}_k = \tilde{T}_k\circ\mathcal{G}^\dag\circ T_k(\Omega_k, B_k^n)$;
% \STATE extend local solution: $w^{n+1} = \sum_{k=1}^K R_k^\intercal \tilde{w}^{n+1}_k + (I - R_k^\intercal R_k)u^n$;
% \STATE update global solution: $u^{n+1} = (1-\tau  K)u^n+\tau w^{n+1}$;
% \STATE $n = n+1$;
% \ENDWHILE
% \RETURN $u^n$;
% \end{algorithmic}
% \end{algorithm}

	\textbf{Decomposition into overlapping subdomains.} In general, there is no canonical way to decompose an arbitrary domain into prescribed shapes. Here we adopt a common practice in the DDM literature~\citep{mathew2008domain}. %\cite[5.1.2]{mathew2008domain}. 
% We assume there is already a triangulation $\mathcal{T}_h(\Omega)$ of $\Omega$ and a graph can be constructed representing the connectivity of the triangulation
We assume there exists a pre-defined triangulation $\mathcal{T}_h(\Omega)$ of the domain $\Omega$, and a graph can be constructed to represent the connectivity of this triangulation. 
% with each vertex and edge being the vertex and side of a triangle. 
A graph partition algorithm such as METIS \citep{karypis1997metis} is then employed to partition this graph into $K$ non-overlapping connected subgraphs with index sets $\tilde{\mathcal{I}}_1,\dots, \tilde{\mathcal{I}}_K$. 
% To achieve an overlapping decomposition, each subgraph is then extended by iteratively including neighboring vertices for $d$ iterations to get index sets $\mathcal{I}_1,\dots, \mathcal{I}_K$. Together with the original mesh, we thus get an overlapping decomposition $\{\Omega_k\}_{k=1}^K$ with $\mathcal{I}_k$ being the index sets of vertices in $\Omega_k$.
To achieve an overlapping decomposition, each subgraph is then extended iteratively by including neighboring vertices for $d$ iterations. This process generates index sets $\mathcal{I}_1,\dots, \mathcal{I}_K$ that, together with the original mesh, form an overlapping decomposition denoted as $\{\Omega_k\}_{k=1}^K$. An intuitive illustration of this process is depicted in Figure~\ref{fig:ddm_scheme}.%Here, $\mathcal{I}_k$ represents the index sets of vertices within $\Omega_k$.

For implementation,
% on $\Omega\subseteq \mathbb{R}^2$, 
partition number $K$ and extension depth $d$ are hyperparameters that should be carefully set to ensure that the resulting subdomains resemble shapes in $\mathcal{P}$.

	\textbf{Normalization.} 
% As mentioned in \ref{data_generation}, we can only generate training data with geometry and boundary conditions falling in a bounded range. 
During inference on an arbitrary decomposed subdomain, the range of geometry and boundary conditions may differ from that of the generated training data. 
We thus leverage the symmetry properties of PDEs to handle this mismatch.
% however, both geometry and boundary/initial conditions can be located in a different range. We thus leverage symmetries of PDEs to handle this problem. 
More specifically, we can apply problem-dependent transformations $T_k$ (e.g., spatial translation/scaling and value shift/scaling) to map a local input $(\Omega_k, B_k^n, H_k)$ that lies outside the training range back into the training range. When the PDE symmetry acts on additional inputs (e.g., coefficient fields, source terms, parameters), the transformation is extended to $H_k$ accordingly. After neural operator inference, we map the predicted local solution back via an inverse transformation $\tilde{T}_k$. We implement these as preprocessing and postprocessing steps in the inference pipeline (see Appendix~\ref{app:symmetry}). %\ze{Maybe See Table \ref{tab:symmetries}?}

% In this way, the transformation take the form $T:\mathcal{P}\times \mathcal{H}\to \mathcal{P}\times \mathcal{H}$. 

% We note that proper application of symmetries involves an understanding of the symmetries of PDEs and varies across PDEs.

% \textbf{Iterative scheme.} We apply an iterative update similar to \ref{eq:Richardson} and summarize the algorithm in Algorithm \ref{algm: infer}. We note that we are replacing all local solution operators $A_i^{-1}$ in \ref{eq:Richardson} by the trained neural operator $\mathcal{G}^\dag$ with appropriate preprocessing and postprocessing.

	\textbf{Time Complexity.}
% Time-saving is one of the primary advantages of our approach compared to traditional FEM. Here we provide a simple time complexity analysis for our algorithm. 
Suppose the single inference time of local operator and the number of iterations are denoted as $b$ and $N$. Let $v,e,K$ denote the number of vertices, edges and subdomains respectively.
Our Algorithm~\ref{algm:infer} consists of two main parts: mesh partition using the METIS algorithm, the time complexity of which is approximately $O(v+e+K\log K)$~\citep{karypis1997metis}; and an additive-Schwarz-style iterative scheme with a time complexity roughly $O(b K N)$.
Therefore, the overall time complexity of our algorithm can be approximated as $O(v+e+K\log K + bKN)$\footnote{With neural operators implementing linear transformers, e.g., GNOT applied in this work, $b=O(\frac{v}{K}).$}. In practice, $K$ may not be independent of $v$ and $e$. More vertices can sometimes lead to more partitions required depending on the property of mesh.
While providing an exact time complexity analysis for FEM can be challenging due to the complexity and variability of different problem setups, it is worth noting that FEM is generally considered to be computationally demanding. %\ze{since $n,m>>k$, $O(n+m+k\log k)$ almost equals $O(m+n)$. With linear operator, e.g., GNOT, $c=O(\frac{n}{k})$}
% However, in practice, we showcase the superior performance of our algorithm in terms of time complexity.
% \diff{However, as the scales of tested examples are relatively small, the evaluation time of them with well-optimized FEM is $10^{-4}$s, which is superior to both direct GNOT inference (around $10^{-2}$s) and SNI (around $10^2$s). The advantage of SNI could be clear once the scales of problems are large.}

\subsection{Theoretical Results}
\label{theory}

Here we provide a theoretical analysis of our proposed algorithm by stating the following result:
\begin{Theorem}
% Suppose that the iterative rules of traditional and neural Schwarz inference are given by \ref{} and Algorithm \ref{algm:infer} respectively. 
Assume the operator $\mathcal{L}$ in Eq.~\ref{eq:PDE} is self-adjoint and coercive elliptic \citep{mathew2008domain}. Let $u^n$ be the iterates of the classical additive Schwarz-Richardson iteration in Eq.~\ref{eq:Richardson_Schwarz} and let $\tilde{u}^n$ be the iterates of SNI in Algorithm~\ref{algm:infer}, initialized with the same $u^0=\tilde{u}^0$.

Let $\mathcal{S}_k$ denote the exact local solution operator on $\Omega_k$, and define the learned (pre/post-processed) local solver $\tilde{\mathcal{S}}_k \triangleq \tilde{T}_k\circ\mathcal{G}^\dag\circ T_k$. Assume:
\begin{itemize}
\item (Classical contractivity) the classical iteration mapping is contractive in a norm $\|\cdot\|$ with rate $0<\rho<1$;
\item (Uniform local error) $\|\tilde{\mathcal{S}}_k(\cdot)-\mathcal{S}_k(\cdot)\|\le c$ for all $k$;
\item (Overlap factor) each degree of freedom is covered by at most $t$ subdomains.
\end{itemize}
Then we have:
\begin{itemize}
\item \textbf{Convergence:} SNI converges to a fixed point $\tilde{u}^*$.
\item \textbf{Error bound:} for all $n$, $\|\tilde{u}^n-u^n\|\le c'$, and $\|\tilde{u}^*-u^*\|\le c'$, where one may take
\begin{equation*}
c'\;=\;\frac{\tau t}{1-\rho}\,c.
\end{equation*}
\end{itemize}
\label{thm:operator_form}
\end{Theorem}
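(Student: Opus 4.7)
The plan is to treat the SNI iterates as a bounded-error perturbation of the exact additive Schwarz--Richardson iterates and propagate the per-step error through the known contraction.

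First, I would recast both schemes as fixed-point iterations of
\[
\Phi(u) = u + \tau \sum_{k=1}^K R_k^\intercal\bigl(\mathcal{S}_k(u) - R_k u\bigr), \qquad
\tilde{\Phi}(u) = u + \tau \sum_{k=1}^K R_k^\intercal\bigl(\tilde{\mathcal{S}}_k(u) - R_k u\bigr),
\]
so that $u^{n+1}=\Phi(u^n)$ and $\tilde{u}^{n+1}=\tilde{\Phi}(\tilde{u}^n)$, with the classical-contractivity hypothesis rephrased as ``$\Phi$ is a $\rho$-contraction in $\|\cdot\|$''. The proof then reduces to controlling the gap between these two maps.

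Second, I would establish a uniform gap bound $\|\tilde{\Phi}(u)-\Phi(u)\|\le \tau t c$ valid for every $u$. The difference equals $\tau\sum_{k}R_k^\intercal\bigl(\tilde{\mathcal{S}}_k(u)-\mathcal{S}_k(u)\bigr)$; at any degree of freedom the overlap-factor assumption leaves at most $t$ nonzero contributions in this sum, each bounded in norm by $c$ by the uniform local-error hypothesis, so the triangle inequality yields the claim. Combining this with $\rho$-contractivity of $\Phi$ by triangle inequality and induction starting from $\tilde{u}^0=u^0$ gives
\[
\|\tilde{u}^{n+1}-u^{n+1}\| \le \|\tilde{\Phi}(\tilde{u}^n)-\Phi(\tilde{u}^n)\| + \|\Phi(\tilde{u}^n)-\Phi(u^n)\| \le \tau t c + \rho\,\|\tilde{u}^n-u^n\|,
\]
and summing the geometric series yields $\|\tilde{u}^n-u^n\|\le \tau t c/(1-\rho)=c'$ for all $n$.

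The main obstacle is proving convergence of $\tilde{u}^n$ itself, since pointwise closeness of $\tilde{\Phi}$ to $\Phi$ does not upgrade to Lipschitz closeness and so $\tilde{\Phi}$ is not automatically a contraction; Banach's theorem cannot be applied directly. I would address this via a perturbed fixed-point argument: the uniform bound above confines the orbit $\tilde{u}^n$ to the closed ball of radius $c'$ around the classical orbit, and since $u^n\to u^*$ this orbit stays in a compact neighbourhood of $u^*$ in the finite-dimensional FEM setting. Continuity of the trained operator $\tilde{\mathcal{S}}_k$ together with Brouwer/Schauder then produces a fixed point $\tilde{u}^*$ of $\tilde{\Phi}$ in that neighbourhood, and the contractive action of $\Phi$ combined with the smallness of the perturbation forces uniqueness within the $c'$-ball around $u^*$ and convergence of the whole sequence. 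Passing $n\to\infty$ in the inductive estimate finally delivers $\|\tilde{u}^*-u^*\|\le c'$. An alternative, cleaner route is to strengthen ``uniform local error'' to a Lipschitz-closeness condition on $\tilde{\mathcal{S}}_k$, whereupon $\tilde{\Phi}$ inherits a contraction rate close to $\rho$ and Banach applies directly.
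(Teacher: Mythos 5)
Your error-bound argument coincides with the paper's proof: both recast the two schemes as fixed-point iterations of the same additive Schwarz map, bound the per-step discrepancy by $\tau t c$ using the overlap factor and the uniform local error, and then propagate it through the $\rho$-contraction to get the geometric-series bound $\|\tilde{u}^n-u^n\|\le \frac{1-\rho^n}{1-\rho}\tau t c\le c'$. That part is correct and is exactly what the paper does.

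On convergence the two arguments diverge, and you have put your finger on a real difficulty that the paper glosses over. The paper argues that since $u^n$ is Cauchy and $\|\tilde{u}^m-\tilde{u}^n\|\le\|u^m-u^n\|+2c'$, the sequence $\tilde{u}^n$ is also Cauchy; this is a non sequitur, because the additive constant $2c'$ does not tend to zero, so the estimate only confines $\tilde{u}^n$ to a ball of radius $c'$ around the limit $u^*$ rather than forcing it to converge. You correctly observe that pointwise closeness of $\tilde{\Phi}$ to $\Phi$ does not make $\tilde{\Phi}$ a contraction, so Banach is unavailable. However, your proposed repair does not close the gap either: Brouwer/Schauder in the finite-dimensional FEM space does give \emph{existence} of a fixed point $\tilde{u}^*$ of $\tilde{\Phi}$ in the $c'$-ball (assuming $\tilde{\Phi}$ maps that ball into itself, which needs a small argument of its own), but existence of a fixed point says nothing about convergence of the orbit $\tilde{u}^n$ to it — the iterates could, for instance, oscillate between two points inside the tube around $u^*$ indefinitely, consistently with every hypothesis of the theorem. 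So the final claim ``forces uniqueness \dots and convergence of the whole sequence'' is unsupported. Your closing alternative — strengthening the uniform-error hypothesis to a Lipschitz-closeness condition so that $\tilde{\Phi}$ inherits a contraction rate near $\rho$ and Banach applies directly — is the correct way to make the convergence statement rigorous, and is arguably what the theorem's hypotheses should have said; under the hypotheses as literally stated, neither your proof nor the paper's establishes that $\tilde{u}^n$ converges.
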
 
% \begin{equation}
% \begin{aligned}
% u^{n+1}&=u^n + \tau \sum_{k=1}^K R_k^\intercal \mathcal{S}_k(u^n)-R_k^\intercal R_k u^n \\ 
% \tilde{u}^{n+1}&=\tilde{u}^n + \tau \sum_{k=1}^K R_k^\intercal \mathcal{G}^\dag (\tilde{u}^n)-R_k^\intercal R_k \tilde{u}^n
% \end{aligned}
% \label{eq:alg}
% \end{equation}

The theorem suggests that if our learned local operator maintains a uniform error bound, the algorithm converges and exhibits a minimal approximation error. See Appendix \ref{app:proof} for a proof. This result relies on the assumption on operator $\mathcal{L}$. In general, such convergence is not guaranteed and we empirically validate the effectiveness of our framework for nonlinear differential equation through experiment.

\section{Experiments}
\label{exp}

In this section, we perform comprehensive experiments to showcase the effectiveness of our method on various challenging datasets.

\begin{figure}[!b]
\begin{center}
\includegraphics[scale=0.4]{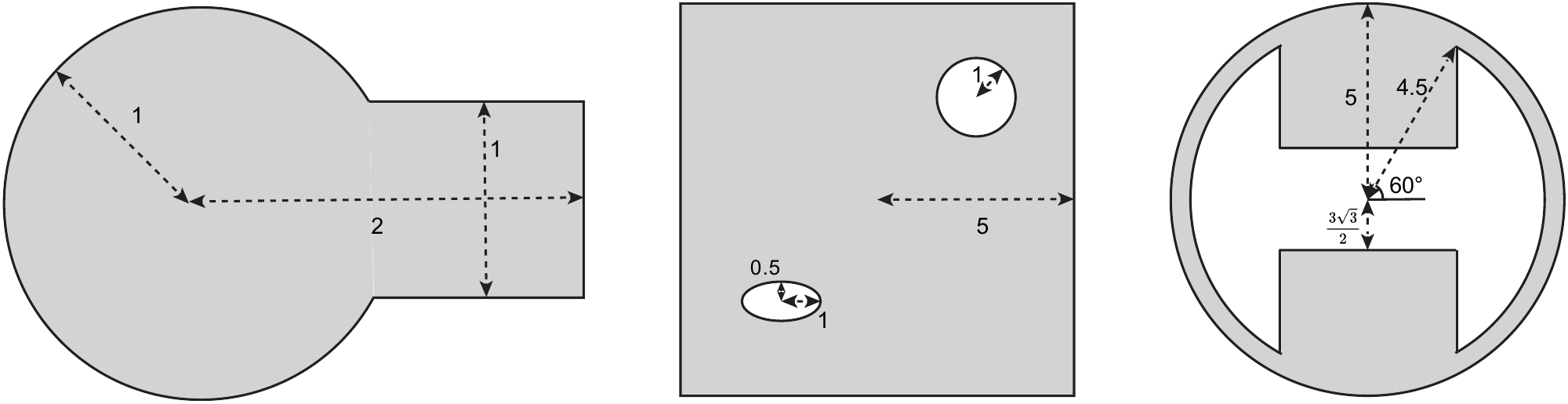}
\end{center}
% \caption{Experiment domains. We test on three different domains. Domain A is the union of a disk and a rectangle. Domain B is a square with two holes removed. Domain C is a disk with some complicated shape removed.}
\caption{Illustration of experiment domain A, B, C from left to right respectively.}
\label{fig:domains}
\end{figure}

\subsection{Experimental Setup}
\label{exp_setup}
\textbf{Datasets.}
% For the purpose of 
To demonstrate the scalability and superiority of our method, we construct several datasets on multiple PDEs. We also extend our framework to a time-dependent problem, heat conduction. 
% See \ref{res} for details. 
To aggregate training sets, we generate random simple polygons bounded by the unit square $[-0.5, 0.5]^2\subset \mathbb{R}^2$. Boundary/initial conditions and coefficient functions are piecewise linear functions determined by random values within $[0, 1]$. For each of the following problems, we test on datasets based on three different domains A, B and C shown in Figure \ref{fig:domains}. Details of these datasets are given in Appendix~\ref{app:datasets}. %\ze{training, validation set split}

\begin{itemize}
\item \textbf{Laplace2d-Dirichlet}: Laplace equation in 2d with pure Dirichlet boundary condition on various shapes. 

\item \textbf{Laplace2d-Mixed}: Laplace equation in 2d with mixed Dirichlet and Neumann boundary condition on various shapes. 

\item \textbf{Darcy2d}: Darcy flow in 2d with coefficient field $a(x)$, source term $f(x)$ and pure Dirichlet boundary condition on various shapes. 

\item \textbf{Heat2d}: Time-dependent heat equation in 2d with a coefficient $\alpha$ for thermal diffusivity, initial condition and time-varying pure Dirichlet boundary condition on various shapes.

\item \textbf{NonlinearLaplace2d}: A nonlinear Laplace equation in 2d with pure Dirichlet boundary condition on various shapes.
\end{itemize}

\textbf{Baseline.} Our baseline is a direct inference of the trained neural operator on domains shifted and scaled to $[-0.5, 0.5]^2$ with boundary/initial conditions and coefficient functions adjusted accordingly. %\youjia{Maybe also describe how this result is scaled back to make an inference?}

\textbf{Evaluation Protocol.}
The evaluation metric we utilize is the mean $l_2$ relative error. See Appendix~\ref{app:protocal_hyperparameters} for details. %\youjia{Add some discussion on validation results as another baseline.}
% Let $u_i, u'_i \in \mathbb{R}^n$ represent the ground truth solution and predicted solution for the $i$-th sample, respectively. Considering a dataset of size $D$, the mean $l_2$ relative error is computed as follows:
% \begin{equation}
% \varepsilon =\frac{1}{D} \sum_{i=1}^{D} \frac{\parallel u^{\prime}_i - u_i\parallel_2 }{\parallel u_i\parallel_2} 
% \label{eq:eval}
% \end{equation}
% Details and hyperparameters are summarized in \ref{app:experiment} due to limited
% space.
% For hyperparameters of our method, we fix the number of vertices $n=12$ for basic shapes to generate simple polygons with $[3, n]$ vertices. During inference, we choose number of subdomains $k\in\{10, 20, 30, 40, 50\}$ and extension depth $d\in\{1,2,4\}$. For neural operator, we fix a network of GNOT \cite{hao2023gnot} with $N_{\text{expert}} = 1$ and 3 layers of width 128. We train all models with AdamW \cite{loshchilov2017decoupled} and cycle learning rate strategy \cite{smith2019super} as in GNOT.
\subsection{Main Results and Analysis}
\label{res}
\begin{table}[!t]
\centering
\setlength{\tabcolsep}{5mm}{
\begin{tabular}{c|c|cc}
\hline \hline
\centering
Equation                  & Domain & GNOT(\%) & SNI(\%) \\ \hline \hline
\multirow{3}{*}{Laplace2d-Dirichlet}  & A         & 22$\pm$2 &  2.2$\pm$0.6   \\ \cline{2-4} 
                          & B         & 22$\pm$2 &  2.1$\pm$0.4   \\ \cline{2-4} 
                          & C         & 28$\pm$3 &   2.1$\pm$0.9  \\ \hline
\multirow{3}{*}{Laplace2d-Mixed}  & A      & 10.7$\pm$0.8    &  6$\pm$4   \\ \cline{2-4} 
                          & B         & 10.7$\pm$0.8     &   7$\pm$1  \\ \cline{2-4} 
                          & C         & 38$\pm$6     &   6$\pm$1  \\ \hline
\multirow{3}{*}{Darcy2d}  & A         & 16$\pm$1     &   8$\pm$2  \\ \cline{2-4} 
                          & B         & 63$\pm$3     &   8$\pm$2  \\ \cline{2-4} 
                          & C         & 167$\pm$8 &  5.4$\pm$0.6   \\ \hline
\multirow{3}{*}{Heat2d}     & A         &  11.5$\pm$0.6    &  5.3$\pm$0.2  \\ \cline{2-4} 
                          & B         &  30$\pm$10    &  11$\pm$2   \\ \cline{2-4} 
                          & C         &  20$\pm$10    &  5.8$\pm$0.3   \\ \hline 
\multirow{3}{*}{NonlinearLaplace2d}& A         & 22$\pm$2     &   2.0$\pm$0.4  \\ \cline{2-4} 
                          & B         & 26$\pm$2     &   2.2$\pm$0.4  \\ \cline{2-4} 
                          & C         & 28$\pm$2 &  2.2$\pm$0.5  \\ \hline\hline
\end{tabular}}
\caption{Main results. The $l_2$ relative errors along with standard deviation over different random boundary/initial conditions on three domains are reported.  }
\label{tab:main results}
\end{table}

The main results for all datasets are shown in Table \ref{tab:main results}. More details and hyperparameters are summarized in Appendix~\ref{app:protocal_hyperparameters} due to limited
space. Based on these results, we have the following observations.

\begin{figure}[!b]
     \centering
     \begin{subfigure}[b]{0.32\textwidth}
         \centering
         \includegraphics[ width=\textwidth]{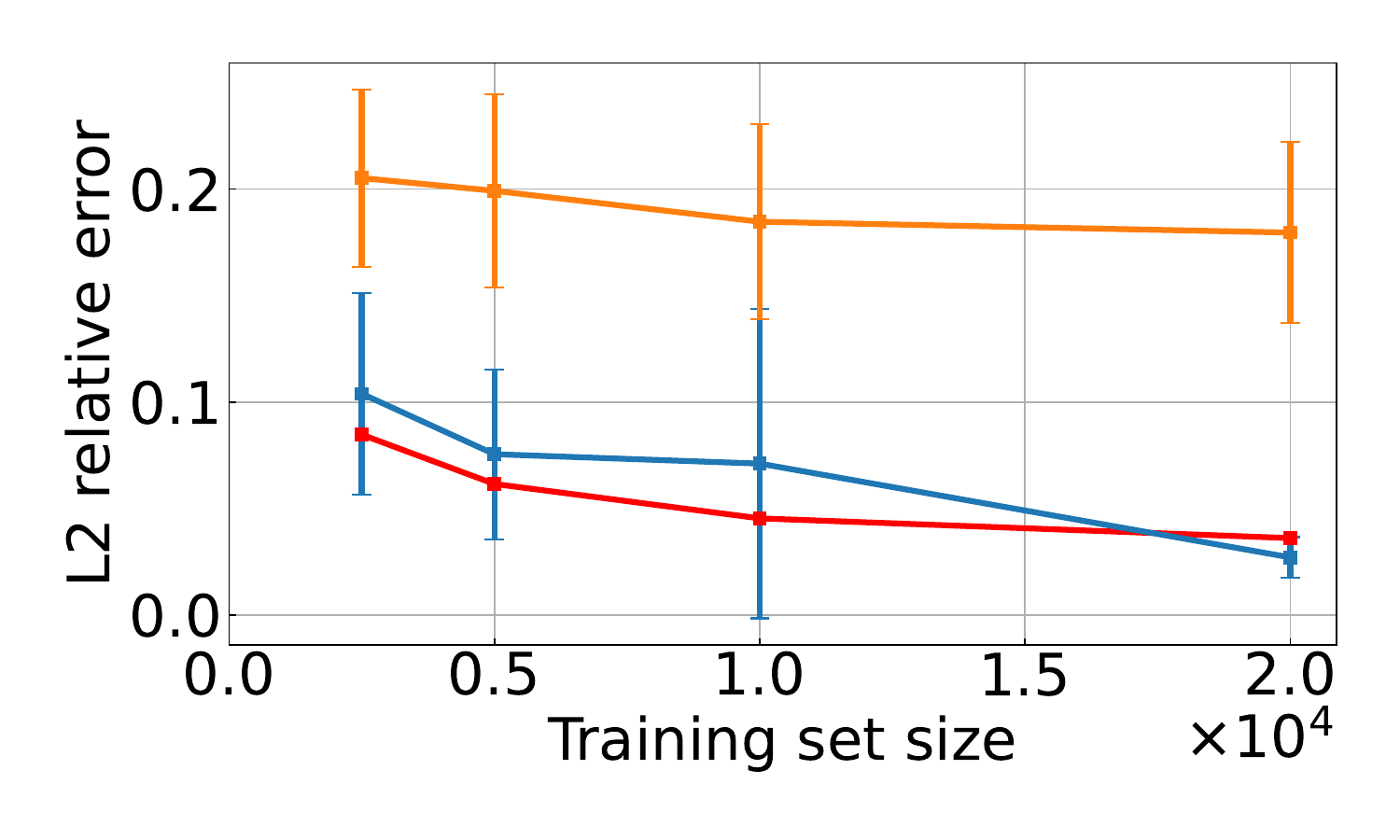}
         \caption{Lap2d-D domain A}
         \label{fig:laplace schwarz}
     \end{subfigure}
     \begin{subfigure}[b]{0.32\textwidth}
         \centering
         \includegraphics[width=\textwidth]{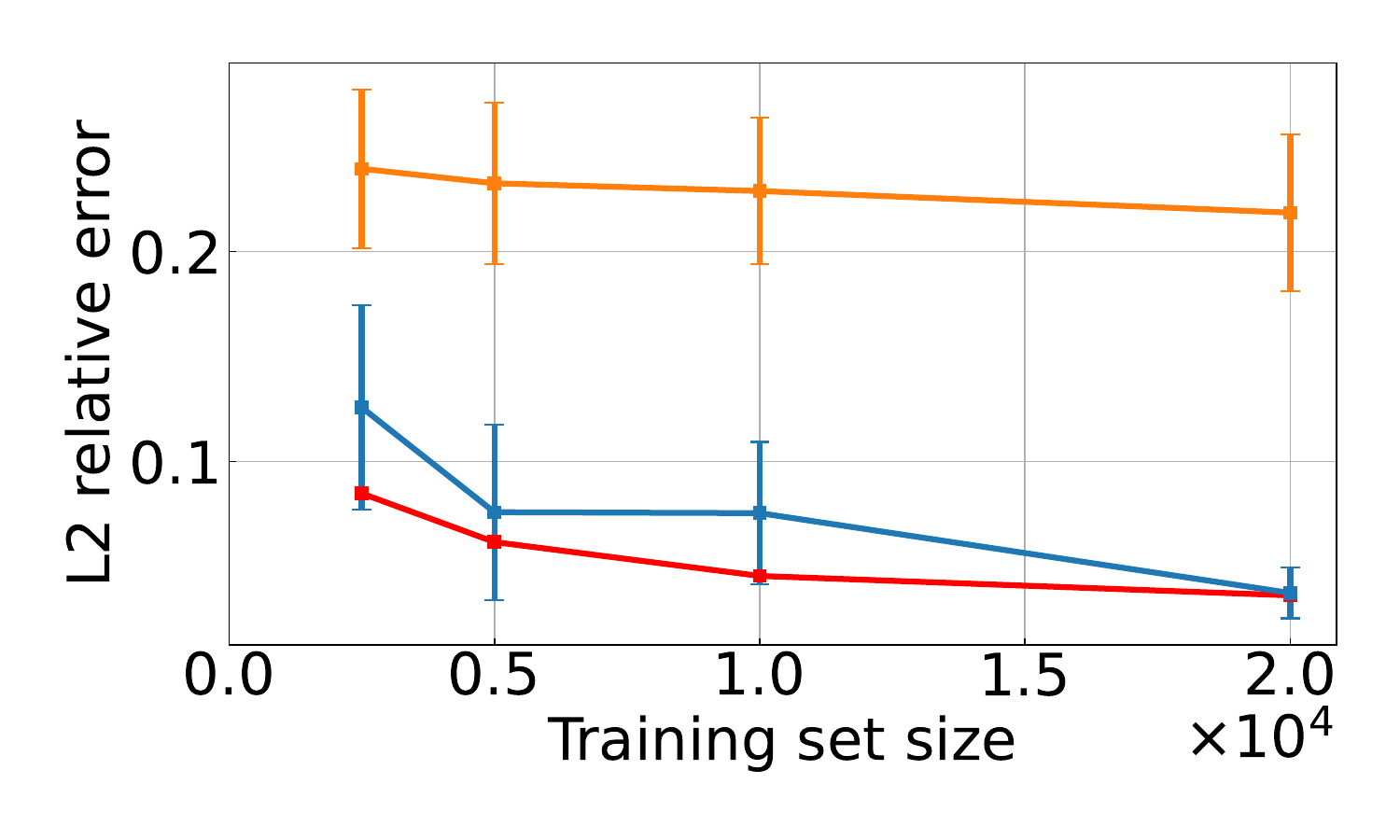}
         \caption{Lap2d-D domain B}
         \label{fig:laplace holes}
    \end{subfigure}
    \begin{subfigure}[b]{0.32\textwidth}
         \centering
         \includegraphics[width=\textwidth]{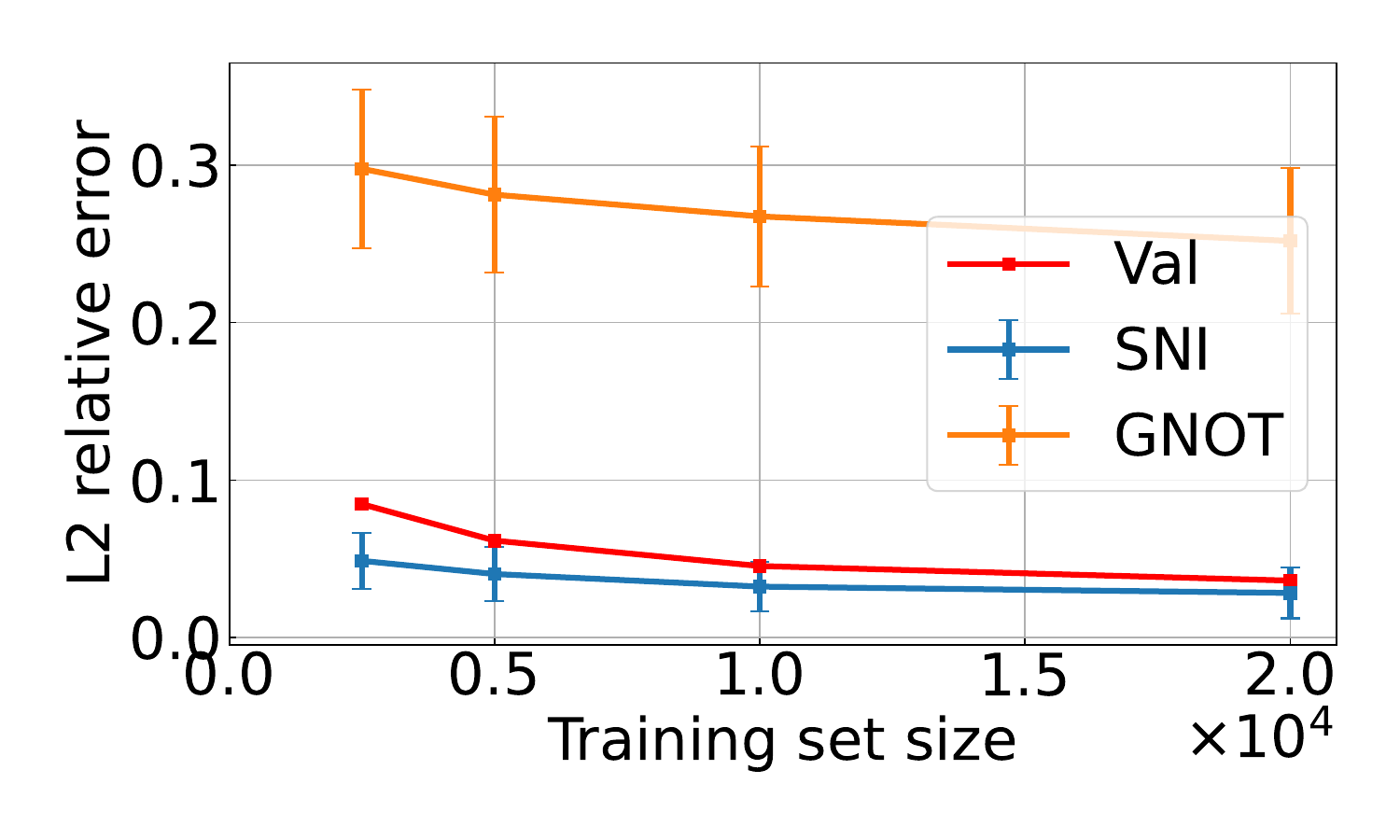}
         \caption{Lap2d-D domain C}
         \label{fig:laplace bosch}
     \end{subfigure}
     % \vskip\baselineskip
     
        \caption{Comparison between the $l_2$ relative errors from SNI (blue), GNOT direct inference (orange)
        % \ze{validation or training loss?}) 
        on Laplace2d-Dirichlet upon three domains (A, B and C) with different numbers of training samples. 
        % The two domain shapes are shown in Figure \ref{fig:domains}. 
        The results of SNI and GNOT direct inference are presented based on 100 inferences with different boundary conditions.
        % The results of SNI are presented as error bars over 100 inferences with different boundary conditions. The GNOT direct inference results are average values on the same 100 samples.
        The best validation errors during training (red) are also provided as a reference.} 
        % For Laplace equations, the L2 relative error of GNOT direct inference are significantly higher than DDNO results. The DDNO errors are comparable to training errors, despite the high variance of results at some numbers of training samples. For Darcy equations, the performance of DDNO are only comparable to training with large number of samples ($\ge 20000$). GNOT direct inference struggles at holes domain, due to its clear difference in topology from training samples, while DDNO can reach much better errors.}
        \label{fig:comparison}
\end{figure}

\textbf{Stationary Problems.} First, we find that our method performs significantly better on all stationary problems compared with baseline. On all domains, we reduce prediction error by 34.8\%-96.8\%. The excellent performance shows the effectiveness of our framework in dealing with arbitrary geometries unseen during training. 
In particular, our framework usually leads by a larger margin on more complicated domain,  
% This is due to the failure of resemblance of more complicated testing domains to simple polygons in training data. 
due to the fact that simple polygons used in the training data fail to adequately resemble the complex testing domains.
Solutions on multiply connected domains usually exhibit characteristics that are not present on simple domains.

Second, we find that the the performance of our method is consistent across various geometries during inference. On all types of PDEs in our datasets, the difference in prediction error over various geometries is within 3.25\%, showing the ability to solve PDE with consistent accuracy on various geometries with a single trained neural operator. This also provides evidence for our theoretical result in Theorem \ref{thm:operator_form} where we show that the SNI ensure the convergence to an approximation of the ground-truth solution with error bound determined by the generalization error of the neural operator.

Third, we find that complexity of the PDE together with types of boundary condition affect the generalizability of the neural operator in solving local problems and thus also the accuracy of our method. For simple problem such as Laplace2d-Dirichlet, our method achieve a 59.8\% lower error compared to other problems. For Laplace2d-Mixed, neural operator struggles to capture subtlety in presence of both Dirichlet and Neumann boundaries. The complexity of Darcy2d lies in the need to capture changes in coefficient and source term in addition to geometry and boundary condition.  We argue that having a strong neural operator that can generalize well on all basic shapes and boundary conditions is necessary for our framework to work with reasonable accuracy.

\textbf{Time-dependent Problems.} There is a natural way to extend our framework to time-dependent problems \citep{li2015convergence} where a space-time decomposition is constructed by taking the product of a spatial decomposition and a temporal decomposition. %See \ref{app:discussion} for details.
We train a neural operator that can predict heat conduction on multiple time steps and the same SNI is applied during inference on this 3d problem. Our framework works well on this problem and reduce prediction error by 54.1\%-74.2\%. This demonstrates the potential of our framework to handle time-dependent problems. We refer to Appendix~\ref{app:time-dependent problems} for detailed implementation.

\textbf{Data Efficiency.} % Lastly, % but not least
% We explore the data efficiency and generalization potential of SNI by comparing the $l_2$ relative errors from DDNO, direct inference of GNOT on two challenging domains (challenges A and B) with different numbers of training samples in Figure \ref{fig:comparison}. 
% The best validation errors during training are also provided as a reference. 
% The DDNO results are shown as error bars generated from 100 random boundary conditions on the same geometry. 
% The GNOT direct inference results are average values on the same 100 samples. 
The exploration results on data efficiency of SNI are shown in Figure \ref{fig:comparison}, implying the following observations: (1) At all abundances of data, the $l_2$ relative errors of SNI are significantly lower than those of GNOT direct inferences; (2) Errors of SNI are comparable to or even lower than validation errors at large data volumes. (3) SNI requires much smaller datasets to achieve comparable results to GNOT direct inference. 
% The above three points demonstrate advantage of our framework in dealing with challenges in data efficiency.
Overall, these results demonstrate that SNI has substantial advantages in terms of data efficiency. Our proposed framework possesses remarkable ability to extract more insights from limited data and scale more effectively as data volumes increase.
%, SNI , and are comparable to validation values during training. For Darcy equation, the errors of SNI are comparable to validation errors on training sets only at large data volumes ($\ge 20000$). In contrast, GNOT direct inference struggles on domain B and C ($l_2$ relative error $> 0.5$) while SNI can still reach low $l_2$ relative error on these challenging geometries. In summary, SNI demonstrates the ability to achieve comparable results to GNOT direct inference with much smaller datasets. 
More supplementary results are provided in Appendix~\ref{app:other supplementary}.
% In summary, SNI requires much smaller datasets to reach results comparable to GNOT direct inference. 
% With large enough datasets, SNI can generalize to complicated challenging geometries that are never seen in training sets.

\subsection{Ablation Experiments}
\label{ablation}

\begin{figure}[!t]
     \centering

     \begin{subfigure}[b]{0.32\textwidth}
         \centering
         \includegraphics[width=\textwidth]{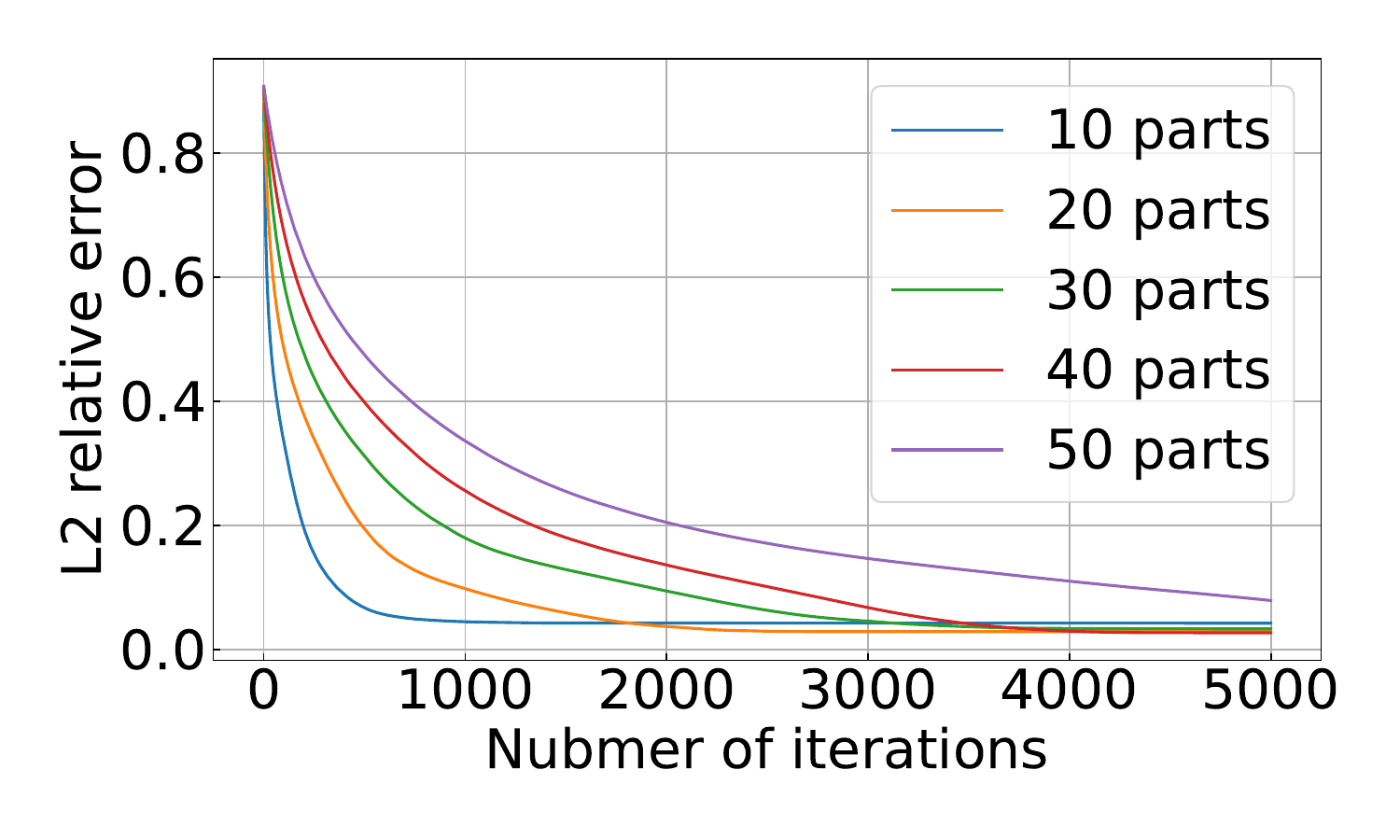}
         \caption{$d = 2$, $\tau =0.04$, varying $K$}
     \end{subfigure}
     \begin{subfigure}[b]{0.32\textwidth}
         \centering
         \includegraphics[width=\textwidth]{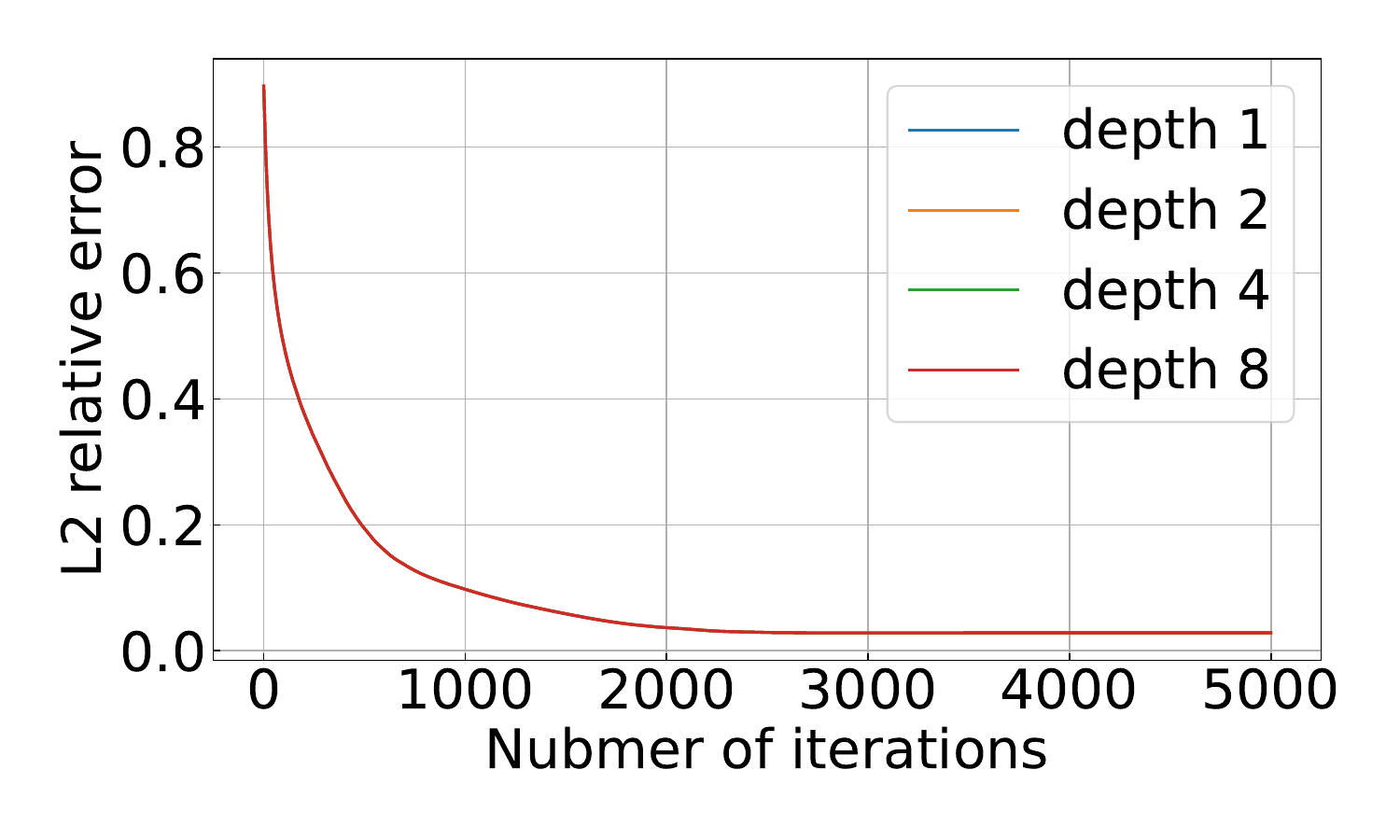}
         \caption{$K=20$, $\tau =0.04$,  varying $d$}
     \end{subfigure}
     \begin{subfigure}[b]{0.32\textwidth}
         \centering
         \includegraphics[width=\textwidth]{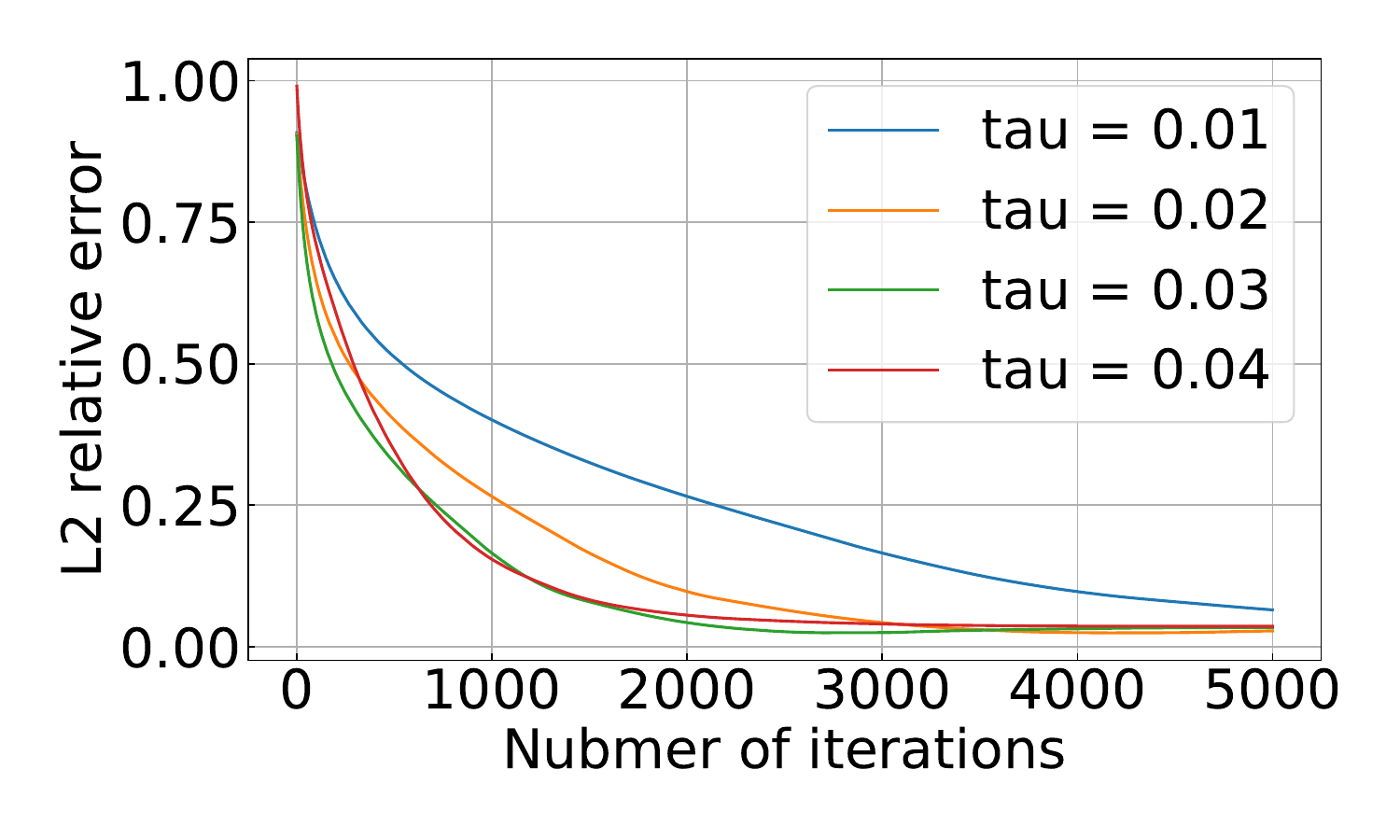}
         \caption{$K=20$, $d = 2$, varying $\tau$}
     \end{subfigure}
     \label{fig:parts explore}
    \caption{Comparison between convergence rate of SNI on Laplace2d-Dirichlet domain A.}
    % The test displayed in the left was conducted with a depth extension of 2, and the test in the right was performed using 20 partitions.
    % The test reported in left panel is done with depth extension = 2. 
    % Clearly the convergence slows down with the increase of partition number. 
    % The inference with 50 parts does not converge after 5000 iterations. The final L2 error with 10 parts is larger than the other choices, while the other larger partition numbers show similar results. 
    % The test in right panel is carried out with 20 partitions.}
    % All convergence curves are almost identical.}
    \label{fig:hyper}
\end{figure}

\begin{table}[!b]
\centering
\begin{tabular}{cc|cccc}
\hline\hline
\multicolumn{2}{c|}{}                                                 & Validation(\%) & Domain A(\%) & Domain B(\%) & Domain C(\%) \\ \hline\hline
\multicolumn{2}{c|}{No Data Aug}                                      & 3.79 & 4$\pm$2 & 3.0$\pm$0.6 & 3$\pm$1      \\ \hline
\multicolumn{2}{c|}{Rotation Only}                                    & 2.50 & 2.2$\pm$0.6 & \textbf{2.1$\pm$0.4} & \textbf{2.1$\pm$0.9}  \\ \hline
\multicolumn{1}{c|}{\multirow{3}{*}{Rotation + Scale}} & {[}0.2, 1{]} & 5.31 & 4$\pm$1 & 3.4$\pm$0.5 & 3.4$\pm$0.4   \\ \cline{2-6} 
\multicolumn{1}{c|}{}                                  & {[}0.5, 1{]} & 3.62 & 2.7$\pm$0.5 & 3.7$\pm$0.6  & 3.2$\pm$0.6   \\ \cline{2-6} 
\multicolumn{1}{c|}{}                                  & {[}0.8, 1{]} & 2.86 & \textbf{1.8$\pm$0.4} & 3.3$\pm$0.7  & 2.8$\pm$0.8 \\ \hline\hline
\end{tabular}
\caption{Comparison between models trained with different data augmentations for Laplace2d-Dirichlet. }
% The $l_2$ relative errors on validation set during training and on three challenges are reported. 
% For models trained without data augmentation, the variance of performances on challenges is large, ranging from $3.08\%$ to $15.9\%$. With data augmentation, specifically with both rotation and [0.5, 1] scaling, the trained model shows more consistant performances among challenges.}
\label{tab:data_aug}
%\end{wraptable}
\end{table}

\textbf{Hyperparameter Exploration.}
The number of partitions ($K$), the depth of extension ($d$) and step size ($\tau$) %of overlap between partitions 
are the key main hyperparameters that can affect the performance of SNI. 
Based on the results presented in Figure~\ref{fig:hyper}, the factors analyzed have no significant impact on the accuracy of our algorithm, but they do influence the convergence rate.
Specifically, increasing the number of partitions leads to a smaller $l_2$ relative error but slower convergence. Once the partition number surpasses 20, the algorithm's final performances become comparable. 
%However, when using 50 partitions, the inference process does not converge in 5000 iterations.
Regarding the depth of extension, it does not affect the performance on the tested domains. The convergence curves for depth of extension 1, 2, 4, and 8 are nearly identical. 
When it comes to  $\tau$, a larger value results in faster convergence. However, it is important to note that there exists a maximum limit $1/K$ beyond which $\tau$ cannot be set.

% We compare the accuracy and convergence rate of our method under different number of partitions (left) and overlap depths (right) in Figure~\ref{fig:hyper}. 
% The test on partition number is carried out on Laplace challenge A with depth of overlap = 2. 
% With larger number of partitions, we can reach smaller $l_2$ relative error while suffer from slower convergence.  
% When the partition number exceeds 20, the final performances of the models become comparable to each other. However, when using 50 partitions, the inference process fails to converge even after 5000 iterations.
% The final performances of models with partition number $\ge 20$ are close to each other while The inference with 50 parts does not converge after 5000 iterations.
% Depth extension does not affect the performance of models on our tested domains. The convergence curves with overlap depth = 1,2,4,8 are almost identical with the same partition number 20 on either Darcy or Laplace challenges, as shown in the right panel of Figure~\ref{fig:hyper}.

\textbf{Data Augmentation Exploration.}
% Data augmentation helps improve the generalizability of our model. 
To explore the effects of data augmentation, we compare the performances of models trained with different degrees of data augmentation for Laplace2d-Dirichlet demonstrated in Table~\ref{tab:data_aug}. 
% The model trained without data augmentation is also studied as a benchmark. 
% The results are shown in Table~\ref{tab:data_aug}. 
% Error bars are generated from the standard deviation of 100 different random boundary conditions in each domain. 
For models trained without data augmentation, the variation of performances on different domains is large, ranging from $2.8\%$ to $4.4\%$.
% The model trained without augmentation 
Specifically, it reports a $4.4\pm1.6\%$ $l_2$ relative error on domain A, while this error can be reduced to $1.9\pm0.4\%$ with a rotation+[0.8, 1] scaling augmentation. 
% For other domains, the models without augmentation already perform well, but augmentation doesn't hurt the performance. Therefore, data augmentation can improve the consistency of the model's  performance  across various domains.
% different challenges.
While rotation can generally be beneficial, the effectiveness of scaling can sometimes be limited or even detrimental. Hence, it is important to apply data augmentation with caution and consider its suitability for different types of PDEs.

% The effectiveness of data augmentation techniques for operator learning is not always guaranteed. Specifically, it is important to apply scaling with caution and consider its suitability for different types of PDEs.

\textbf{Choice of neural operator architecture.}
To explore the choice of neural operator architecture in our framework, we train a Geo-FNO \citep{li2023fourier} on Laplace2d-Dirichlet and apply SNI for inference on domains A, B and C to get $l_2$ relative error of $9\pm3 \%$, $13\pm1 \%$  and $13 \pm3\%$. This result is comparable to that achieved by SNI with GNOT and demonstrates that our proposed framework works with various choices of neural operator architecture. However, an error gap does exists between SNI with GNOT and Geo-FNO due to variations in their  generalizability. This is also reflected in their respective best validation errors, as detailed in Appendix ~\ref{app:other supplementary}. Supplementary results on Darcy2d are also provided there.

\section{Related Work}
\label{rw}

% In this section, we first summarize  advancements achieved in the field of local neural operator learning, then briefly introduce various successful combinations of domain decomposition methods and AI techniques.
% Lastly, data efficiency techniques within the realm of AI for solving partial differential equations (PDEs) will be discussed.
%Building upon DeepONet, MIONet is proposed to solve problems involving multiple input functions~\cite{jin2022mionet}. 
% \subsection{Local Neural Operator Learning}
% \label{operator_rw}

%Approximating solution operators with neural networks has attracted considerable interest. 

\textbf{Operator Learning.} The idea of operator learning is first introduced in~\cite{lu2019deeponet}. This work proposes a notable architecture called DeepONet, which employs a branch network for processing input functions and a trunk network for handling query points. 
% This architecture has demonstrated the ability to approximate nonlinear operators effectively by utilizing a sufficiently large network. 
Adopting the trunk-branch architecture and utilizing the attention mechanism, ~\cite{hao2023gnot} develops GNOT to handle irregular mesh, multiple input functions, and different input data types. The high accuracy and versatility makes GNOT the benchmark in our work.
In the other direction, Fourier neural operator (FNO)~\citep{li2020fourier} leverages the Fast Fourier Transform (FFT) to learn operators in the spectral domain, and achieves a favorable trade-off between cost and accuracy. Variants of FNO are proposed to reduce computational cost (FFNO in~\cite{tran2021factorized}), handle irregular mesh (Geo-FNO in~\cite{li2023fourier}), and improve expressivity (UFNO in~\cite{wen2022u}).

\textbf{Methods to Deal with Complex Geometry.} Several approaches have been proposed to tackle the challenge of complex geometry and save the process efforts in operator learning. One encoder-process-decoder framework called CORAL~\citep{serrano2023operator} is able to encode a complex geometry into a lower dimensional representation to save the computational efforts and solve different types of problems. In~\citep{wu2024transolver}, one mechanism called physics attention is proposed to aggregate complex input geometry and functions into several physics-aware tokens to reduce the number of tokens to deal with. AROMA~\citep{serrano2024aroma} introduces a diffusion refiner in latent space to solve temporal problems with complex geometries.

%Regarding the problems with varying geometries, GINO~\cite{li2024geometry} utilizes a signed distance function (SDF) and point-cloud representations of the input shape. It employs neural operators based on graph and Fourier architectures to learn the solution operator. 

% However, these variants may not be scalable for handling problems with varying geometries.

% \subsection{Domain Decomposition Methods Applied in AI field}
% \label{ddm_rw}
\textbf{Domain Decomposition Methods Applied in Deep Learning.}
% In general, the approaches combining ML and DDMs can be divided into two categories: DDM within ML models, and ML-enhanced DDMs. The first one mainly focused on ideas from domain decomposition used to accelerate and parallelize the training of different machine learning models, which is unrelated to our paper and will not be further discussed.
% The second one can be further categorized into two subclasses~\cite{heinlein2021combining}. 
In general, the integration of deep learning and DDMs can be categorized into two groups~\citep{heinlein2021combining, klawonn2024machine}.
The first category involves using deep learning techniques to improve the convergence properties or computational efficiency of DDMs. For instance, \cite{mao2024towards} proposes to combine
operator learning with DDMs on uniform grids in order to accelerate traditional DDMs. Several methods~\citep{heinlein2020machine, heinlein2019machine} have also been proposed to reduce the computational cost in adaptive FETI-DP solvers by incorporating deep neural networks while ensuring the robustness and convergence behavior.
% First, deep learning approaches are applied to enhance convergence properties or computational efficiency of domain decomposition methods.
% For instance, several methods~\cite{heinlein2020machine, heinlein2019machine} are proposed to reduce the computational cost in the setup of adaptive FETI-DP solvers using deep neural networks (DNNs) - without deteriorating the robustness and convergence behavior of adaptive FETI-DP. 
% In the second category, DNNs are used as discretization methods and solvers for differential
% equations replacing classical approaches based on finite elements or finite differences. 
The second category is centered around the substitution of subdomain solvers in DDMs with neural networks.
There have been multiple endeavors to employ PINNs (XPINNs~\citep{jagtap2020extended}, parallel inference with cPINNs and XPINNs~\citep{shukla2021parallel}) or Deep Ritz methods as alternatives to subdomain solvers or discretization techniques in traditional DDMs~\citep{li2020deep,li2019d3m, jiao2021one}. These approaches leverage the universal approximation capabilities of neural networks to represent solutions of PDEs, subject to specific assumptions regarding the activation function and other factors.
% For a more detailed survey, please refer to~\cite{heinlein2021combining, klawonn2023machine}.

% \subsection{Data Augmentation Techniques in Neural Operators}
% \label{data_rw}
\textbf{Data Augmentation Techniques in Operator Learning.}
% Addressing the high data requirements in operator learning is of utmost importance. 
Different types of data augmentations are proposed to improve the generalization capabilities in operator learning.
% To alleviate this challenge, 
A Lie point symmetry framework is introduced in~\cite{brandstetter2022lie}, which quantitatively derives a comprehensive set of data transformations, to reduce the sample complexity. Motivated by this approach, ~\cite{mialon2023self} learn general-purpose representations of
PDEs from heterogeneous data by implementing joint embedding methods for self-supervised learning. 
An alternative research approach~\citep{fanaskov2023general} introduces a computationally efficient augmentation strategy that relies on general covariance and straightforward random coordinate transformations.
In general, applying data augmentation techniques for PDE operator learning can be challenging due to the unique nature of PDE theory.

\section{Conclusion and Future Works}
\label{conclusion}

We presented a local-to-global framework based on DDMs to address the geometry generalization and data efficiency issue in operator learning. Our framework includes a novel data generation scheme and an iterative inference algorithm SNI. Additionally, we provided a theoretical analysis of the convergence and error bound of the algorithm. We conducted extensive experiments to demonstrate the effectiveness of our framework and validate our theoretical result. For future works, the rich literature of DDMs when combined with operator learning provides many potential directions to handle higher-dimensional problems, non-overlapping decomposition and more challenging types of equations.

% There are many potential directions for future works. First, it would be interesting to implement this framework for higher-dimensional problems. Second, more advanced DDMs such as Neumann-Neumann, BDDC and FETI \cite{mathew2008domain} may also be explored. Lastly, saddle point problems and non-linear equations such as Navier-Stokes equation is out of our scope and present unique challenge. It requires not only expertise on operator learning, but also deep understanding of PDEs themselves to deal with the geometric generalization challenge and it would be fruitful to combine rich literatures of DDMs with operator learning.
% We also aim to incorporate more advanced domain decomposition methods, such as BDDC and FETI, into our scheme.

\newpage
\section*{Reproducibility statement}

Detailed descriptions of the experimental setup, task definitions, and evaluation metrics are provided in section~\ref{exp} and Appendix~\ref{app:experiment}. Source code is attached in the submission.

% \input{main.bbl}

% \section*{References}

\bibliography{ref}
\bibliographystyle{apalike}
%%%%%%%%%%%%%%%%%%%%%%%%%%%%%%%%%%%%%%%%%%%%%%%%%%%%%%%%%%%%
\newpage

\appendix

% \section{Appendix / supplemental material}

\section{Background on Domain Decomposition}
\label{app:domain_decomposition}

Domain decomposition is a widely used technique in computational science and engineering that enables the efficient solution of large-scale problems by dividing the computational domain into smaller subdomains. This approach is particularly beneficial when dealing with complex problems that cannot be solved using a single computational resource. The main idea behind domain decomposition is to break down a large computational domain into smaller, more manageable subdomains. These subdomains can be arranged in a variety of ways, such as overlapping or non-overlapping, depending on the specific problem and the desired computational approach.

In this work, we  decompose our domain into subdomains and adopt the hybrid formulation of Eq.~\ref{eq:PDE} following \citet[Section 1.1]{mathew2008domain}. A \textit{decomposition} of $\Omega$ is a collection of open subregions $\{\Omega_k\}_{k=1}^K$, $\Omega_k\subseteq \Omega$ for $k=1,\dots, K$ such that $\bigcup_{k=1}^K \overline{\Omega_k} = \overline{\Omega}$. This decomposition is referred to as \textit{non-overlapping} if in addition, $\Omega_i\cap \Omega_j =\varnothing$ for any $i\neq j$. Alternatively, an \textit{overlapping} decomposition is one satisfying $\bigcup_{k=1}^K \Omega_k = \Omega$. Typically, a non-overlapping decomposition is one where subdomains do not intersect with each other in the interior while an overlapping decomposition constructed in practice has overlapping neighboring subdomains. 

Given a decomposition of $\Omega$, a \textit{hybrid formulation} of Eq.~\ref{eq:PDE} is a coupled system of local PDEs on subdomains $\Omega_k$ equivalent to Eq.~\ref{eq:PDE} satisfying two requirements. First, the restriction $u_k(x)$ of the solution $u(x)$ of Eq.~\ref{eq:PDE} to each domain $\Omega_k$ must solve the local PDE, thus ensures that the hybrid formulation is \textit{consistent} with the original problem in Eq.~\ref{eq:PDE}. Second, the hybrid formulation must be \textit{well posed} as a coupled system of PDEs in the sense of \cite{evans2022partial}, i.e. its solution must exist, be unique and depend continuously on given input function and boundary/initial conditions. Intuitively, a hybrid formulation consists of a \textit{local problem} posed on each subdomain and \textit{matching conditions} that couples the local problems.

In this work we  focus on the earliest and most elementary formulation termed \textit{Schwarz hybrid formulation} \cite[Section 1.2]{mathew2008domain} based on overlapping decomposition and is applicable to a wide class of self-adjoint and coercive elliptic equations. Given an overlapping decomposition, $\partial\Omega_k$ can be decomposed into two disjoint parts. One (possibly empty) part $\Gamma_k = \partial\Omega_k\cap \partial\Omega$ is located in the boundary of $\Omega$ and the global boundary condition should be imposed. The other part $B_k = \partial\Omega_k\cap \Omega$ is a nonempty artificial boundary from the overlapping decomposition and a Dirichlet boundary condition from the coupling of local problems is imposed.

We refer to \cite{mathew2008domain} for a strict definition. As an illustrative example, assume we have an overlapping decomposition with $K=2$ and consider as the original problem Laplace equation with mixed Dirichlet and Neumann boundary conditions. The following coupled system of two local PDEs is a Schwarz hybrid formulation of the original problem and solving the original equation is equivalent to solving this coupled system. 

\begin{equation*}
\begin{aligned}
\Delta u_1&=0 \quad\quad\quad\  \text{in} \ \   \Omega_1 \\
u_1&=u_2|_{\partial \Omega_1} \quad \text{on} \ \  \partial \Omega_1 \cap \Omega\\
u_1&=u_D \quad\quad\ \  \text{on} \ \  \partial \Omega_1 \cap \Gamma_D\\
\frac{\partial u_1}{\partial n}&=g \quad\quad\quad\  \text{on} \ \  \partial \Omega_1 \cap \Gamma_N
\end{aligned}
\quad \text{and}\quad
\begin{aligned}
\Delta u_2&=0 \quad\quad\quad\  \text{in} \ \  \Omega_2 \\
u_2&=u_1|_{\partial \Omega_2} \quad \text{on} \ \  \partial \Omega_2 \cap \Omega\\
u_2&=u_D \quad\quad\ \  \text{on} \ \  \partial \Omega_2 \cap \Gamma_D\\
\frac{\partial u_2}{\partial n}&=g \quad\quad\quad\  \text{on} \ \  \partial \Omega_2 \cap \Gamma_N\\
\end{aligned}
\label{eq:coupled_laplace}
\end{equation*}

Based on the Schwarz hybrid formulation, there are various iterative schemes with different parallelism and convergence rate. In the subsequent discussion, our focus is primarily on introducing the \textit{additive Schwarz methods (ASM)}. The ASM is a highly parallel algorithm \citep{mathew2008domain} in solving the coupled system from Schwarz hybrid formulation. We briefly introduce ASM with finite element methods and refer to \cite{gander2008schwarz} and \cite{mathew2008domain} for details.

Assume that under weak formulation of Eq.~\ref{eq:PDE} and finite element space $V$, Eq.~\ref{eq:PDE} has the form $Au=f$ where $A$ is the stiffness matrix. Given an overlapping decomposition $\{\Omega_i\}_{k=1}^K$ compatible with the finite element space on $\Omega$, we have $V=\sum_{k=1}^K V_k$ as sum of local finite element subspaces $V_k$ on $\Omega_k$ and we can define local stiffness matrices $A_k:V_k\to V_k$, \textit{restriction operators} $\{R_k\}_{k=1}^K$ restricting $V$ to $V_k$ and \textit{extension operators} $\{R_k^\intercal\}_{k=1}^K$ extending $V_k$ to $V$ by zeros extension. We then define operators $P_k:V\to V$ by $P_k = R_k^\intercal A_k^{-1} R_k A$. \textit{Additive Schwarz operator} is then defined as the sum $P_{\text{ad}} = \sum_{k=1}^K P_k$. This operator can be show to be self-adjoint and coercive and we have the following equivalence.
\begin{equation}
Au = f \iff P_{\text{ad}}u = \sum_{k=1}^K R_k^\intercal A_k^{-1}R_k f
\label{eq:equivalence}
\end{equation}
We note that the right hand side of Eq.~\ref{eq:equivalence} is a preconditioned version of the left hand side. The Richardson iteration for this preconditioned problem has the following form. 
\begin{equation}
u^{n+1}=u^n+\tau\sum_{k=1}^{K} R_k^\intercal A_k^{-1} R_k(f-Au^n)
\label{eq:Richardson}
\end{equation}
In the composite operator $R_k^\intercal A_k^{-1} R_k$, the operator $R_k$ first restrict a function to $\Omega_k$, $A_k^{-1}$ solve the local problem and $R_k^\intercal$ extend the local solution to $\Omega$. This iterative process can be shown to converge by estimating bound on condition number of $P_{\text{ad}}$ under mild assumptions on equation and decomposition.

\section{Revisit on Operator Learning}
\label{app:neural_operators}

The goal of operator learning is to learn a mapping $\mathcal{G}: \mathcal{A}\to \mathcal{U}$ between two infinitely dimensional spaces \citep{kovachki2023neural}. When applied to PDEs, $\mathcal{U}$ is the solution space of a PDE and $\mathcal{A}$ is the space of functions that determine a unique solution of a PDE. Examples of $\mathcal{A}$ are coefficient functions or boundary/initial conditions that defines the PDE and parameters that determine the geometry of domain.
% [TODO:cite]. 

In our study, we decompose any domain into subdomains each of which lives in a distinguished class of basic shapes $\mathcal{P}$. We assume all shapes in $\mathcal{P}$ have Lipschitz boundary and are uniformly bounded, i.e., they are all bounded by a ball $D\subseteq \mathbb{R}^n$. We are interested in solving boundary value problems in Eq.~\ref{eq:PDE} in any domain $\Omega\in\mathcal{P}$ with any appropriate boundary condition. We thus separate geometry and boundary conditions from other inputs and represent the input function space of the operator as $\mathcal{A} = \mathcal{P}\times H^k(D)\times \mathcal{H}$ where $H^k(D)$ is the Sobolev space $W^{k, 2}(D)$. The space $\mathcal{P}\times H^k(D)$ represents the geometry of the domain together with boundary/initial conditions, $\mathcal{H}$ represents any other input functions such as coefficient function field or source term in the PDE. The neural operator thus approximates the following mapping. Note that in the case of time dependent problem, the space $H^k(D)$ represents the space of initial condition together with \textit{time varying} boundary condition and the solution space $\mathcal{U}$ represents a time series up to some time span. The solution operator $\mathcal{G}$ thus has the following form.

\begin{equation}
\mathcal{G}:\mathcal{P}\times H^k(D)\times \mathcal{H}\to \mathcal{U}
\label{eq:neural_operator}
\end{equation}

For learning the operator, we assume $\mathcal{P}$, $H^k(D)$ and $\mathcal{H}$ are probability spaces and thus we can sample observations from $\mathcal{A}$. In practice, we randomly sample geometry from $\mathcal{P}$ and random boundary conditions are imposed, then a solution is generated from a numerical solver to get solutions. It is important to highlight that, unlike the usual setting for neural operators, there is significant variation in the shape of input domains.

\section{Proof of Theorem \ref{thm:operator_form}}
\label{app:proof}
\begin{Theorem*}
% Suppose that the iterative rules of traditional and neural Schwarz inference are given by \ref{} and Algorithm \ref{algm:infer} respectively. 
Assume the operator $\mathcal{L}$ in Eq.~\ref{eq:PDE} is self-adjoint and coercive elliptic \citep{mathew2008domain}. Let $u^n$ and $\tilde{u}^n$ denote the iterates of the classical additive Schwarz-Richardson iteration (Eq.~\ref{eq:Richardson}) and SNI (Algorithm~\ref{algm:infer}), respectively, with the same initialization $u^0=\tilde{u}^0$.

Let $\mathcal{S}_k$ denote the exact local solution operator on $\Omega_k$, and define the learned (pre/post-processed) local solver $\tilde{\mathcal{S}}_k \triangleq \tilde{T}_k\circ\mathcal{G}^\dag\circ T_k$. Assume the classical iteration mapping is contractive in a norm $\|\cdot\|$ with rate $0<\rho<1$, and the uniform local error bound $\|\tilde{\mathcal{S}}_k(\cdot)-\mathcal{S}_k(\cdot)\|\le c$ holds for all $k$. Let $t$ be the maximum number of subdomains covering any degree of freedom.
Then we have:
\begin{itemize}
\item \textbf{Convergence:} SNI converges to a fixed point $\tilde{u}^*$.
\item \textbf{Error bound:} for all $n$, $\|\tilde{u}^n-u^n\|\le c'$, and $\|\tilde{u}^*-u^*\|\le c'$, where one may take $c'=\frac{\tau t}{1-\rho}c$.
\end{itemize}
\end{Theorem*}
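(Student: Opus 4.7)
The plan is to recast both iterations as fixed-point maps and propagate the per-step defect introduced by the learned local solvers via the classical contractivity hypothesis. Writing the two update operators as
\begin{equation*}
F(u)=u+\tau\sum_{k=1}^K R_k^\intercal\bigl(\mathcal{S}_k(u)-R_k u\bigr),\qquad \tilde{F}(u)=u+\tau\sum_{k=1}^K R_k^\intercal\bigl(\tilde{\mathcal{S}}_k(u)-R_k u\bigr),
\end{equation*}
we have $u^{n+1}=F(u^n)$ and $\tilde{u}^{n+1}=\tilde{F}(\tilde{u}^n)$, with single-step discrepancy $\tilde{F}(u)-F(u)=\tau\sum_k R_k^\intercal\bigl(\tilde{\mathcal{S}}_k(u)-\mathcal{S}_k(u)\bigr)$. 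The classical contractivity assumption means $F$ is a strict contraction in $\|\cdot\|$ with rate $\rho$, which in particular already yields $u^n\to u^*$.

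The first substantive step is to turn the overlap and local-error hypotheses into the uniform per-step bound $\|\tilde{F}(u)-F(u)\|\le \tau\,t\,c$. This combines the assumption $\|\tilde{\mathcal{S}}_k(\cdot)-\mathcal{S}_k(\cdot)\|\le c$ with the covering estimate $\|\sum_k R_k^\intercal v_k\|\le t\max_k\|v_k\|$, which follows from each degree of freedom lying in at most $t$ subdomains (the standard multiplicity bound used in additive-Schwarz analysis, applied in the ambient norm). Splicing this into a triangle inequality together with the contractivity of $F$ yields the one-step recursion
\begin{equation*}
\|\tilde{u}^{n+1}-u^{n+1}\|\le \|\tilde{F}(\tilde{u}^n)-F(\tilde{u}^n)\|+\|F(\tilde{u}^n)-F(u^n)\|\le \tau t c+\rho\,\|\tilde{u}^n-u^n\|,
\end{equation*}
and a geometric-series induction starting from $\tilde{u}^0=u^0$ gives the advertised defect bound $\|\tilde{u}^n-u^n\|\le \frac{\tau t}{1-\rho}\,c=c'$.

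For the convergence claim and the limit bound, I would argue that $\tilde{F}$ inherits contractivity in the same norm from $F$: since $\tilde{F}-F$ is a uniformly bounded offset and, in the discrete finite-element setting, the Lipschitz contribution of the learned local solver can be controlled alongside that offset, Banach's fixed-point theorem applies to $\tilde{F}$ and produces a unique $\tilde{u}^*$ with $\tilde{u}^n\to\tilde{u}^*$. Passing to the limit in the defect bound and using $u^n\to u^*$ then yields $\|\tilde{u}^*-u^*\|\le c'$. The main obstacle I anticipate is exactly this last contractivity step: the stated hypotheses only provide pointwise control on $\tilde{\mathcal{S}}_k-\mathcal{S}_k$ rather than a Lipschitz bound, so rigorously upgrading $\tilde{F}$ to a contraction requires either a mild additional regularity assumption on the learned operator or a compactness argument in the discrete setting. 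The overlap estimate and the geometric-series induction are, by contrast, routine once the ambient norm is fixed.
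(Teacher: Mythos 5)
Your argument for the error bound is essentially identical to the paper's proof: both recast the two iterations as $u^{n+1}=F(u^n)$ and $\tilde{u}^{n+1}=\tilde{F}(\tilde{u}^n)$, split $\tilde{u}^{n+1}-u^{n+1}$ into $\bigl(F(\tilde{u}^n)-F(u^n)\bigr)$ plus the single-step defect $\tau\sum_k R_k^\intercal\bigl(\tilde{\mathcal{S}}_k(\tilde{u}^n)-\mathcal{S}_k(\tilde{u}^n)\bigr)$, bound the defect by $\tau t c$ via the multiplicity of the overlap, and run the recursion $\|e^{n+1}\|\le\rho\|e^n\|+\tau t c$ to get $\|e^n\|\le\frac{1-\rho^n}{1-\rho}\tau t c\le c'$. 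This is the quantitatively substantive part of the theorem, and your derivation matches the paper's step for step.

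Where you diverge is the convergence claim. The paper argues that since $u^n$ is Cauchy and $\|\tilde{u}^m-\tilde{u}^n\|\le\|u^m-u^n\|+2c'$, the sequence $\tilde{u}^n$ is also Cauchy and hence converges. You instead try to show $\tilde{F}$ is itself a contraction and invoke Banach's fixed-point theorem, and you correctly flag that the stated hypotheses only give pointwise control of $\tilde{\mathcal{S}}_k-\mathcal{S}_k$, not the Lipschitz control needed to make $\tilde{F}$ contractive. Your caution is warranted, but note that the paper's route does not escape the same difficulty: a sequence lying within a fixed distance $2c'$ of a Cauchy sequence is bounded, not Cauchy, so the paper's inequality does not by itself establish convergence of $\tilde{u}^n$ either. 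Both arguments need an additional hypothesis to close this gap rigorously --- for instance, Lipschitz continuity of the learned local solvers with a constant small enough that $\tilde{F}$ remains a contraction (your route), or a compactness/subsequence argument in the finite-dimensional discrete setting. In short: the error bound is proved the same way in both; the convergence claim is the weak point of both, and you are the only one of the two to say so explicitly.
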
 

\begin{proof}
(1) Write the classical and learned iterations in the same additive Schwarz form:
\begin{equation}
\begin{aligned}
u^{n+1} &= u^n + \tau \sum_{k=1}^K R_k^\intercal\big(\mathcal{S}_k(u^n) - R_k u^n\big),\\
	ilde{u}^{n+1} &= \tilde{u}^n + \tau \sum_{k=1}^K R_k^\intercal\big(\tilde{\mathcal{S}}_k(\tilde{u}^n) - R_k \tilde{u}^n\big),
\end{aligned}
\label{eq:alg}
\end{equation}
where $\tilde{\mathcal{S}}_k \triangleq \tilde{T}_k\circ\mathcal{G}^\dag\circ T_k$.

(2) Let $e^n \triangleq \tilde{u}^n-u^n$. Subtracting the two updates in Eq.~\ref{eq:alg} yields
\begin{equation*}
e^{n+1} = \big(F(\tilde{u}^n)-F(u^n)\big) + \tau\sum_{k=1}^K R_k^\intercal\big(\tilde{\mathcal{S}}_k(\tilde{u}^n)-\mathcal{S}_k(\tilde{u}^n)\big),
\end{equation*}
where $F(u)\triangleq u + \tau\sum_{k=1}^K R_k^\intercal(\mathcal{S}_k(u)-R_k u)$ is the classical iteration mapping.
By the contractivity assumption of the classical iteration, $\|F(\tilde{u}^n)-F(u^n)\|\le \rho\|e^n\|$.
Moreover, using the uniform local approximation error bound and the overlap factor $t$ (the maximum number of subdomains covering any degree of freedom), we have
\begin{equation*}
\left\|\tau\sum_{k=1}^K R_k^\intercal\big(\tilde{\mathcal{S}}_k(\tilde{u}^n)-\mathcal{S}_k(\tilde{u}^n)\big)\right\|\le \tau t c.
\end{equation*}
Therefore,
\begin{equation*}
\|e^{n+1}\|\le \rho\|e^n\| + \tau t c.
\end{equation*}
Iterating the inequality gives $\|e^n\|\le \frac{1-\rho^n}{1-\rho}\,\tau t c$. Taking $c'\triangleq \frac{\tau t c}{1-\rho}$ yields $\|\tilde{u}^n-u^n\|\le c'$ for all $n$.

(3) Since $u^n$ converges by assumption, it is Cauchy. For any $m>n$, we have
\begin{equation*}
\|\tilde{u}^m-\tilde{u}^n\|\le \|u^m-u^n\|+\|\tilde{u}^m-u^m\|+\|\tilde{u}^n-u^n\|\le \|u^m-u^n\|+2c'.
\end{equation*}
Hence $\tilde{u}^n$ is also Cauchy and thus converges to some $\tilde{u}^*$. Passing to the limit in $\|\tilde{u}^n-u^n\|\le c'$ yields $\|\tilde{u}^*-u^*\|\le c'$, completing the proof.
\end{proof}

If we apply matrix form of neural operator, namely, the neural operator aims to approximate $\{A_k^{-1}\}_{k=1}^K$ and assume, then 
we can have the following result:

\begin{Corollary} 
Consider the exact operator $A_k^{-1}$ and inexact neural operator $\tilde{A}_k^{-1}$, $k = 1, \cdots,K$. Let $u^n$ and $\tilde{u}^n$ represent the solutions updated by $A_k^{-1}$ and $\tilde{A}_k^{-1}$ respectively at the $n$-th step, where the updating rule is given by Eq.~\ref{eq:Richardson} with $\tau= 1$ and both sharing the same initialization. Suppose that $\parallel A_k^{-1} - \tilde{A}_k^{-1} \parallel < c$, for $k = 1, \cdots,K$, and $\rho(I-MA)<1$, where $M=\sum_{k=1}^{K} R_k^\intercal \tilde{A}_k^{-1} R_k$, then we have: 
\begin{itemize}
\item Convergence: the algorithm converges to a fixed point;
\item Error bound: there exists a constant $c_1(c)$ such that $\parallel \tilde{u}^n - u^n \parallel < \frac{c_1}{\parallel I-MA\parallel}$;
\item Condition number: $\kappa(MA) \leq \min(t(K+1), 1+\max_k\frac{H_k}{d})$,
\end{itemize}
where $t$, $K$, $H_k$ and $d$ denote the maximal number of overlapping subdomains, the number of subdomains, the diameter of $k$-th subdomain, and the number of extensions, respectively.
\label{thm:matrix_form}
\end{Corollary}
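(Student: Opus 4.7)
The plan is to treat the inexact additive-Schwarz iteration as a perturbation of its exact counterpart $u^{n+1} = (I-M^{*}A)u^n + M^{*}f$, where $M^{*} \triangleq \sum_k R_k^\intercal A_k^{-1} R_k$ uses the true local solvers. The three claims are then proved in cascade: convergence by contraction, the error bound by a perturbed recursion, and the condition-number bounds by classical additive-Schwarz estimates adapted to the inexact preconditioner.

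For convergence, I would use the standard fact that $\rho(I-MA)<1$ implies the existence of an operator norm in which $I-MA$ is a strict contraction (by Householder's lemma, $\|I-MA\|_\epsilon \le \rho(I-MA)+\epsilon$ for any $\epsilon>0$). The inexact iteration $\tilde u^{n+1} = (I-MA)\tilde u^n + Mf$ is then a Banach contraction in this norm, so it converges to the unique fixed point $\tilde u^{*} = (MA)^{-1}Mf$.

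For the error bound (which I read as $c_1/(1-\|I-MA\|)$, the natural geometric-series denominator), I would subtract the two iterations to obtain the recursion
\begin{equation*}
e^{n+1} \;=\; (I-MA)\,e^n \;+\; (M-M^{*})\,(f - A u^n), \qquad e^n \triangleq \tilde u^n - u^n,
\end{equation*}
starting from $e^0 = 0$. The perturbation admits the bound $\|M-M^{*}\| = \|\sum_k R_k^\intercal(\tilde A_k^{-1}-A_k^{-1})R_k\| \le t c$ by overlap counting and the uniform local-solver error. Classical additive-Schwarz theory guarantees that the exact iterates $\{u^n\}$ stay bounded, hence so does the residual sequence, say $\|f-Au^n\| \le B$. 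Unrolling the recursion and summing a geometric series in $\|I-MA\|$ then yields $\|e^n\| \le tcB/(1-\|I-MA\|)$, giving the claim with $c_1 = tcB$.

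For the condition number, the estimate $\kappa(MA)\le t(K+1)$ follows from a direct two-sided bound on the Rayleigh quotient of $MA$: Cauchy-Schwarz combined with the overlap factor $t$ on the sum $\sum_k R_k^\intercal \tilde A_k^{-1} R_k$ controls the largest eigenvalue, while a stable splitting across the $K$ subdomains controls the smallest. The sharper bound $\kappa(MA)\le 1+\max_k H_k/d$ is the classical Dryja--Widlund estimate: a partition-of-unity decomposition subordinate to $\{\Omega_k\}$ provides the upper eigenvalue, and a Poincar\'e--Friedrichs inequality on the overlapping strips of width $d$ provides the lower one.

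The main obstacle is transferring the classical condition-number estimates, originally proved for the exact preconditioner $M^{*}$, to the inexact operator $M$ built from $\tilde A_k^{-1}$. The pointwise assumption $\|\tilde A_k^{-1}-A_k^{-1}\|<c$ is not by itself strong enough; one effectively needs a spectral-equivalence condition $\tilde A_k^{-1} \asymp A_k^{-1}$, or an explicit perturbation lemma quantifying how close the spectrum of $MA$ is to that of $M^{*}A$. A secondary subtlety is pinning down the residual bound $B$ used in the error estimate, which follows from contractivity of $I-M^{*}A$ applied to the exact iterates but should be recorded explicitly so that $c_1$ depends on the problem data (initial residual, contraction rate) in a controlled way.
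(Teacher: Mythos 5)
The paper gives no proof of this Corollary at all: it is stated as the matrix-form specialization of Theorem~\ref{thm:operator_form}, with the condition-number bound imported silently from classical additive-Schwarz theory. Your treatment of the first two items is essentially the paper's proof of Theorem~\ref{thm:operator_form} transposed to matrix form --- subtract the two Richardson iterations to get $e^{n+1}=(I-MA)e^n+(M-M^{*})(f-Au^n)$, bound the perturbation by $tc$ via overlap counting, and sum a geometric series --- which matches the paper's recursion $\|e^{n+1}\|\le\rho\|e^n\|+\tau t c$. Your reading of the denominator as $1-\|I-MA\|$ is the only one under which the bound is meaningful; as printed, $c_1/\|I-MA\|$ does not come out of the geometric series, so you are correcting a typo rather than deviating.

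Two of the difficulties you flag are genuine and are gaps in the Corollary as stated, not merely in your sketch. First, the condition-number item: the coloring bound and the Dryja--Widlund estimate $\kappa\le C(1+H/\delta)$ are theorems about the \emph{exact} preconditioner $M^{*}=\sum_k R_k^\intercal A_k^{-1}R_k$, and the hypothesis $\|\tilde A_k^{-1}-A_k^{-1}\|<c$ with no smallness condition on $c$ relative to $\lambda_{\min}(A_k^{-1})$ does not yield the spectral equivalence $\tilde A_k^{-1}\asymp A_k^{-1}$ needed to transfer them to $M$; the paper supplies no argument here either. Second, the residual bound $B$: with $\tau=1$ the exact iteration $u^{n+1}=u^n+M^{*}(f-Au^n)$ need not converge, since $\lambda_{\max}(M^{*}A)$ can be as large as $t$, so boundedness of $\|f-Au^n\|$ requires an extra assumption such as $\rho(I-M^{*}A)<1$ --- which is precisely why the paper itself remarks that $\rho(I-MA)<1$ is ``generally challenging to satisfy'' and retreats to small $\tau$ in Algorithm~\ref{algm:infer}. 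Your proposal is therefore as complete a proof as the hypotheses permit, and more honest than the paper about where they fall short.
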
 

Note that the condition $\rho(I-MA)<1$ is generally challenging to satisfy. To address this issue, we employ the Richardson iteration trick~\citep{richardson1911ix} in order to 
ensure the convergence of the proposed algorithm (Algorithm \ref{algm:infer}).

% \section{Algorithm}
% \label{app:algorithm}

% Schwarz Neural Inference (SNI) is outlined in Algorithm~\ref{algm:infer}.

% \begin{algorithm}[t]
% \caption{Schwarz Neural Inference}\label{algm:infer}
% \begin{algorithmic}[1]

% \REQUIRE Domain $\Omega$; Global Boundary Condition $B$; Other input functions $H$; Number of Subdomains $K$; Depth of Extension $d$; Local Operator $\mathcal{G}^\dag$; Step Size $\tau$; Convergence Criterion $C$;
% \ENSURE Global Solution $u$;\\

% \STATE Apply METIS and extension to get overlapping decomposition $\{\Omega_k\}_{k=1}^K$, obtain restriction operators $\{R_k\}_{k=1}^K$ and extension operators $\{R_k^\intercal\}_{k=1}^K$;
% \STATE Initialize the global solution $u^0$; 
% \WHILE{convergence criterion $C$ not satisfied}
% \STATE update local boundary condition $\{B_k^n\}_{k=1}^K$ by global boundary condition $B$ and last-step global solution $u^n$;
% \STATE obtain the preprocessing $\{T_k\}_{k=1}^K$ and postprocessing transformations $\{\tilde{T}_k\}_{k=1}^K$;
% \STATE inference on each subdomain using local operator:
%     $\tilde{w}^{n+1}_k = \tilde{T}_k\circ\mathcal{G}^\dag\circ T_k(\Omega_k, B_k^n)$;
% \STATE extend local solution: $w^{n+1} = \sum_{k=1}^K R_k^\intercal \tilde{w}^{n+1}_k + (I - R_k^\intercal R_k)u^n$;
% \STATE update global solution: $u^{n+1} = (1-\tau  K)u^n+\tau w^{n+1}$;
% \STATE $n = n+1$;
% \ENDWHILE
% \RETURN $u^n$;
% \end{algorithmic}
% \end{algorithm}

\section{Time-Dependent Problems.}
\label{app:time-dependent problems}

We consider the time-dependent PDE with the following form:
\begin{equation}
\begin{aligned}
u_t-\mathcal{L} u&=f \quad\quad\quad\quad \text{in} \ \  \Omega \times [0,T] \\
u(x, t)&=u_D(x,t) \quad\  \text{on} \ \  \partial\Omega\times [0, T]\\
u(x, 0) &= u_0(x) \quad\quad\  \text{on} \ \ \Omega\times\{0\}
\end{aligned}
\label{eq:time_stepping}
\end{equation}

where $\mathcal{L}$ is again self-adjoint and coercive elliptic operator. The additive Schwarz method can be naturally extended to a \textit{space-time} additive Schwarz method \citep{li2015convergence} by considering a decomposition of the space-time domain $\Omega\times [0, T]$ by taking the product of overlapping decomposition of $\Omega$ and $[0, T]$ respectively. The space-time domain decomposition has the form $\Omega_i\times [t_{j-1}-\delta_T, t_j+\delta_T]$ where $\delta_T$ is the temporal depth and represent overlap in time domain. Once such a decomposition is constructed, the same additive Schwarz method can be applied to the space-time decomposition to get a global solution on the space-time domain, allowing parallel iteration in both space and time domain. Local problems for the above decomposition are again of the form in Eq.~\ref{eq:time_stepping}. 

In our implementation on heat equation, we discretize the time domain with a fixed time step $t_s$, fix a rollout length of $k$ and train a neural operator to map initial and boundary conditions to time series for the $k$ steps at $t=0, t_s,\cdots, (k-1)t_s$. More precisely, the neural operator is trained to map $u_D$ and $u_0$ to time series of the form $u(x, 0), u(x, t_s), \cdots, u(x, (k-1)t_s)$. 
% We consider PDEs defined on various domains. As in \cite{li2023fourier} we assume all the domains $\Omega$ are bounded orientable manifolds embedded in some background Euclidean space $\mathbb{R}^n$. We also consider both stationary and time-dependent problem of PDEs defined in the following form. 
% \begin{equation}
% \begin{aligned}
% \frac{\partial u}{\partial t} - \mathcal{L}(u) &= 0 \quad\quad\quad\quad \text{in} \ \  \Omega \\
% u(x, t)&=u_D(x, t) \quad \text{on} \ \  \Gamma_D\times T\\
% \frac{\partial u}{\partial n}(x, t)&=g(x, t) \quad\quad \text{on} \ \  \Gamma_N\times T\\
% u(x, 0) &= u_0(x) \quad\quad\  \text{on} \ \ \Omega\times \{0\}
% \end{aligned}
% \label{eq:time_PDE}
% \end{equation}
% where $\mathcal{L}$ is a partial differential operator, $\Gamma_D\cup\Gamma_N=\partial \Omega$ represents Dirichlet and Neumann boundary, $[0, T]$ is the time span for time-dependent problems and $u_0$ represents initial condition. In this work, we focus on self-adjoint and coercive elliptic equations and extend our method to time-dependent parabolic equation. For implementation We will mainly focus on cases where $\Omega\subseteq \mathbb{R}^2$ with possibly one time dimension.
\section{Symmetries of PDEs}
\label{app:symmetry}

The \textit{symmetry group} of a general partial differential operator $\mathcal{L}$ refers to a set of transformations that map a solution to another solution, forming a mathematical group. \textit{Lie point symmetry} is a subgroup of the symmetry group that has a Lie group structure and acts on functions \textit{pointwise} as transformations on coordinates and function values \citep{brandstetter2022lie}. In this work, we will in addition be concerned with not just a single operator $\mathcal{L}$, but a family of operators depending on various coefficient fields (e.g., Darcy flow) and various boundary/initial conditions. Symmetries have to be properly extended to these input functions so that a solution with an input function is transformed to another solution with a different input function.
% These symmetries allow us to conveniently generate infinitely many new solutions based on a generated solution. It was first proposed in \cite{brandstetter2022lie} to apply these symmetries as a data augmentation for operator learning.

Leveraging these symmetries allows for the generation of an infinite number of new solutions based on a given solution. The idea of utilizing these symmetries as a data augmentation technique for operator learning was initially introduced in \cite{brandstetter2022lie}. However, we apply these data augmentation to solutions on basic shapes in training local operator and this usage of symmetries echos a point mentioned in \citet[Section 3.2]{brandstetter2022lie} where the authors point out that these data augmentation can be applied on local patches of solutions instead of the solution on the entire domain.

There is another direct usage of symmetries in our framework. Instead of incorporating symmetries as a form of data augmentation in training time, one can directly apply transformations to input and output of a neural operator during inference time. We implement these transformations as preprocessing and postprocessing steps in the inference pipeline. We summarize the symmetries of each PDE applied in our implementation in Table~\ref{tab:symmetries}. Normalizations applied as preprocessing and postprocessing for each of the equations are summarized in Table~\ref{tab:hyperparameter}.

\begin{table}[!h]
\centering
\setlength{\tabcolsep}{4mm}{
\begin{tabular}{c|ccccc}
\hline \hline
\centering
Equation                  & Lap2d-D & Lap2d-M & Darcy2d & Heat2d & NonLap2d\\ \hline \hline
\begin{tabular}[c]{@{}c@{}}Spatial \\ Shift\end{tabular}   & \multicolumn{5}{c}{$(x_1,x_2)\rightarrow(x_1+t_1, x_2+t_2)$}    \\ \cline{2-5} \hline
\begin{tabular}[c]{@{}c@{}}Spatial \\ Rotation\end{tabular}  &  \multicolumn{5}{c}{$(x_1, x_2)\rightarrow (x_1\cos\theta -x_2\sin\theta, x_1\sin\theta + x_2\cos\theta)$}  \\ \hline

\multirow{5}{*}{\begin{tabular}[c]{@{}c@{}}Spatial \\ Scaling\end{tabular}}  & \multicolumn{5}{c}{$(x_1, x_2)\rightarrow (sx_1, sx_2)$}  \\   \cline{2-6}   &  --    &   \begin{tabular}[c]{@{}c@{}}$u\rightarrow su$\\ $u_D\rightarrow su_D$ \\ $g\rightarrow g$\end{tabular}  & \begin{tabular}[c]{@{}c@{}}$u\rightarrow s^2u$\\ $u_D\rightarrow s^2u_D$ \\ $a(x)\rightarrow a(x)$ \\ $f(x)\rightarrow f(x)$\end{tabular}  & \begin{tabular}[c]{@{}c@{}}$u\rightarrow u$\\ $u_0\rightarrow u_0$\\ $u_D\rightarrow u_D$ \\ $\alpha\rightarrow s^2\alpha$\end{tabular}  & -- \\ \cline{2-6} \hline

\begin{tabular}[c]{@{}c@{}}Value \\ Shift\end{tabular} &  \multicolumn{4}{c}{\begin{tabular}[c]{@{}c@{}}$u\rightarrow u+t$\\ $u_D\rightarrow u_D+t$\end{tabular} } & --  \\ \hline

\multirow{2}{*}{\begin{tabular}[c]{@{}c@{}}Value \\ Scaling\end{tabular}}     & \multicolumn{4}{c}{\begin{tabular}[c]{@{}c@{}}$u\rightarrow su$\\ $u_D\rightarrow su_D$\end{tabular}} & --  \\ \cline{2-6} &  --   &  $g\rightarrow sg$   &  -- & $u_0\rightarrow su_0$ & -- \\ \cline{2-5} \hline\hline
\end{tabular}}

\caption{Symmetries of various PDEs applied in our implementation.}
\label{tab:symmetries}
\end{table}

\section{Time Complexity}
\label{app:time_complexity}

\subsection{Empirical Time Complexity}
We provide empirical results on runtime for our main results. We discuss how better initialization can accelerate the whole iterative process in the next point.

We first provide empirical runtime on a single sample for each stationary problem and each domain reported in our main result. We use the following metrics:
\begin{itemize}
    \item Time to convergence (TTC).
    \item Time to 15/10/5\% relative $l_2$ error following the practice in \cite{mao2024towards}.
\end{itemize}

\begin{table}[h!]
    \centering
    \begin{tabular}{c|c|cccc}
        \hline\hline
        Equations & Domains & TTC(s) & TT15\%(s) & TT10\%(s) & TT5\%(s) \\
        \hline\hline
        Laplace2d-Dirichlet & A & 100 & 40 & 50 & 70 \\
        \hline
        Laplace2d-Dirichlet & B & 269 & 107 & 137 & 194 \\
        \hline
        Laplace2d-Dirichlet & C & 28 & 8 & 11 & 17 \\
        \hline
        Laplace2d-Mixed & A & 162 & 82 & 103 & 137 \\
        \hline
        Laplace2d-Mixed & B & 714 & 511 & 620 & - \\
        \hline
        Laplace2d-Mixed & C & 68 & 43 & 52 & - \\
        \hline
        Darcy2d & A & 84 & 37 & 54 & - \\
        \hline
        Darcy2d & B & 247 & 144 & 176 & - \\
        \hline
        Darcy2d & C & 26 & 12 & 15 & - \\
        \hline\hline
    \end{tabular}
    \caption{Empirical runtime for different equations and domains}
    \label{tab:runtime_results}
\end{table}

Factors that affect the runtime are:
\begin{enumerate}
    \item Type of equations. We observe that Laplace2d-Mixed takes longer on all domains. We also observe that the existence of Neumann boundary condition leads to a larger range of function values for the solution of Laplace2d-Mixed. This leads to more iterations steps required to reach convergence.
    
    \item Number of subdomains $K$ and step size $\tau$. In the above table, domain B takes longer for all equations because it has 40 subdomains compared to 20 for A and C. A large number of subdomains leads to more time consumption for an iteration. We illustrated how choice of $\tau$ affects the number of iterations to convergence in section 4.3 of our paper.
    
    \item Local operator architecture. While GNOT gets better results in accuracy, a drawback of transformer-based methods is that they are usually slower than FNO~\citep{hao2023gnot}.
    
    \item Initialization. This is discussed in the next point.
\end{enumerate}

We note that our implementation is not optimized to fully parallelize the iterative process; for example, the normalization process is not parallelized in our implementation.

Numerical solvers are very fast in generating solutions for the domains we tested on and we do not expect our approach to be faster than these highly optimized numerical solvers on these (still) simple domains. As a reference, generating a solution for Laplace2d-Dirichlet on domain A using classical FEM solution takes $6.15 \times 10^{-4}$ seconds and performing a GNOT inference on the same domain takes $1.26\times 10^{-2}$ seconds. We can see that even classical numerical solver is faster than GNOT inference. However, DDMs are a conventional approach implemented in commercial software designed to solve PDEs on large-scale and complicated domains. We replace the local FEM solver in DDMs by a data-driven neural operator and thus expect our approach to show superiority when the problem domain is large and complicated.

\subsection{Acceleration through Better Initialization}
We discuss how to accelerate the iterative process by starting with a better initialization. In the original implementation, we always start with a zero solution in the interior of the domain. To accelerate the process, we initialize with solutions from GNOT direct inference and find that it considerably saves our time. We report the time consumption on Laplace2d-Dirichlet using the same metrics as the previous table. The only difference is in initialization.

\begin{table}[h!]
    \centering
    \begin{tabular}{c|c | cccc}
        \hline\hline
        Equations & Domains & TTC(s) & TT15\%(s) & TT10\%(s) & TT5\%(s) \\
        \hline\hline
        Laplace2d-Dirichlet & A & 28 & 8 & 11 & 17 \\
        \hline
        Laplace2d-Dirichlet & B & 70 & 3 & 6 & 21 \\
        \hline
        Laplace2d-Dirichlet & C & 20 & 1 & 2 & 7 \\
        \hline\hline
    \end{tabular}
    \caption{Runtime results with improved initialization for Laplace2d-Dirichlet equations}
    \label{tab:accelerated_runtime}
\end{table}

However, coming up with a better initialization is not trivial and can be an interesting future work.

\section{Discussions}
\label{app:discussion}

% \textbf{Mesh size.} Our framework is unaware of the meshing process during training and inference. However, mesh size implicitly affect the performance of the local operator and thus our framework through (1) training data and (2) spatial scaling during inference. 

% During training, the geometry, boundary condition, other input functions and generated solutions are represented on top of the given mesh. If the 

% Decomposition of the inference domain may introduce subdomains that are too large or too small and a spatial normalization is required. 

\textbf{Message passing in DDMs.} In our framework, we solve a coupled system of local problems by an iterative algorithm SNI. Through iteratively solving local problems based on boundary values from the last iteration and thus from neighboring subdomains, SNI is essentially performing message passing between subdomains. This message passing operation may be implemented in other forms, e.g., through a graph neural network. 

\textbf{Higher-dimensional PDEs.}
Our framework can be extended to higher-dimensional
cases as long as basic shapes and corresponding solutions  can be properly generated. For 3-d problems, one potential selection of basic shapes is the class of polytopes.

The fundamental "local-to-global" principle of our method remains unchanged: decompose a complex 3D domain into smaller, canonical 3D subdomains, solve locally with a neural operator, and iteratively stitch the solutions. However, implementation faces additional challenges in 3D. For example, generating building blocks in 3D. One way is to use polyhedrons as building blocks in 3D. Another choice is to generate building blocks from "growing" tetrahedra in 3D. Both can be very interesting ways to explore. In summary, extending our method to 3D is a challenging but highly promising research trajectory. It is not a simple "plug-and-play" extension but requires careful research at the intersection of computational geometry, operator learning, and high-performance computing.

\textbf{Other formulations of DDMs.}
Schwarz hybrid formulation discussed in this work is one of the most elementary formulation in DDMs. There are many other more advanced DDMs \citep{mathew2008domain}. \textit{Steklov-Poincar\'e framework} is based on non-overlapping decomposition and \textit{transmission condition} as coupling condition for local problems. \textit{Langrange multiplier framework} leads to the well-known \textit{FETI} method and is also based on non-overlapping decomposition. %Non-overlapping decomposition is more suitable in dealing with multi-physics problem since two system may intersect only on the interface [TODO: elaborate]. 

\textbf{Other types of PDEs.}
The additive Schwarz method in classical DDMs works for self-adjoint and coercive elliptic equations. Non-self-adjoint elliptic equations, parabolic equations, saddle-point problems and non-linear equations requires separate treatment.
Addressing these cases presents challenges in both training the local operator and designing the iterative algorithm.
% and challenges are present in both training of local operator and design of iterative algorithm. 

\textbf{Future Works.} Based on the above discussion, there are many potential directions for future works. First, it would be interesting to implement this framework using a message-passing framework instead of an iterative algorithm to accelerate the convergence. Second, 
% our framework can be extended to higher-dimensional problems and industrial problems are mostly 3-d.
extending our framework to address higher-dimensional problems is important, particularly since industrial problems often involve 3-d simulation.
Third, more advanced DDMs such as Neumann-Neumann, BDDC and FETI \citep{mathew2008domain} may also be explored. Lastly, other types of PDEs such as saddle point problems and non-linear equations such as Navier-Stokes equation is out of the scope of our current work, 
and present unique challenge.
Tackling these challenges requires not only expertise on operator learning, but also deep understanding of PDEs themselves. We speculate that it would be fruitful to combine rich literatures of DDMs with operator learning.
% We also aim to incorporate more advanced domain decomposition methods, such as BDDC and FETI, into our scheme.

\section{Experiments}
\label{app:experiment}

\subsection{Datasets}
\label{app:datasets}

\begin{figure}[!b]
\begin{center}
\includegraphics[scale=0.4]{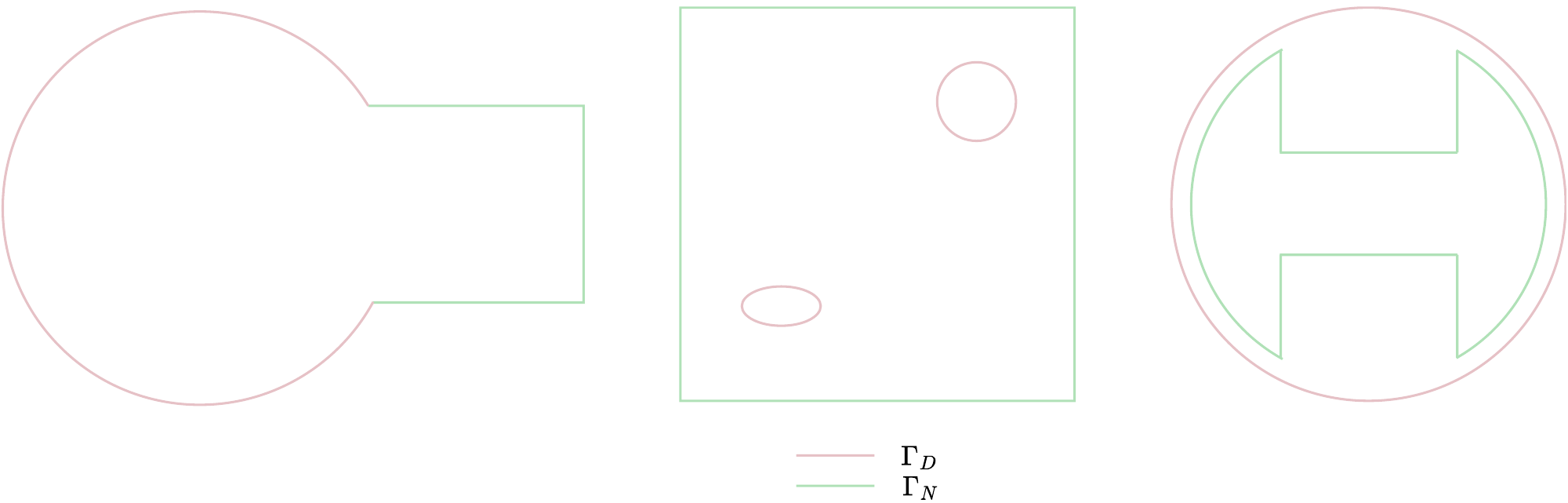}
\end{center}
% \caption{Experiment domains. We test on three different domains. Domain A is the union of a disk and a rectangle. Domain B is a square with two holes removed. Domain C is a disk with some complicated shape removed.}
\caption{Illustration of mixed Dirichlet and Neumann boundaries for domain A, B, C in Laplace2d-Mixed.}
\label{fig:domains_mixed}
\end{figure}

Here we introduce more details of our datasets in both training and testing stage. For training data, we generate random simple polygons with $3\leq n\leq 12$ vertices within $[-0.5, 0.5]^2$ and create uniform mesh using Gmsh \citep{geuzaine2008gmsh}. We prepare a separate dataset for validation during training. For testing data, we generate the three domains depicted  in Figure \ref{fig:domains} together with mesh using the Gmsh UI. We argue that the complexity of geometric domains is fundamentally determined by their underlying topological and geometrical properties. Based on this intuition, we considered three domains of increasing complexity for evaluation:
% three domains of increasing complexity are picked. 
(1) Domain A: This domain is simply connected, representing the simplest class of geometries;
(2) Domain B: This domain has two holes and is multiply connected, indicating a higher level of complexity compared to the simply connected Domain A;
(3) Domain C: This domain has one hole with corners, further increasing the geometrical complexity compared to the previous two domains.
Through a systematical evaluation across this spectrum of domains, from the simple geometry to more intricate multiply connected domains with holes and corners, we believe the results provide a comprehensive understanding of our framework's capabilities.

Once the geometries and meshes are created, we specify boundary/initial conditions for various equations and domains and generate solutions using FEniCSx \citep{BarattaEtal2023, BasixJoss,ScroggsEtal2022}, a popular open-source platform for solving PDEs with the finite element method (FEM). We adopt Lagrange element of order $1$ (linear element) as our finite element space in generating boundary/initial conditions and solutions. Next we give details on how these boundary/initial conditions and solutions are generated for each type of PDEs. We also summarize these details in Table \ref{tab:train_gen} and  \ref{tab:test_gen}.

\textbf{Laplace2d-Dirichlet.} Laplace equation in 2d with pure Dirichlet boundary condition. The governing equation is

\begin{equation}
\begin{aligned}
\Delta u&=0 \quad\quad \text{in} \ \  \Omega \\
u&=u_D \quad\  \text{on} \ \  \partial\Omega.
\end{aligned}
\label{eq:laplace2d_dirichlet}
\end{equation}

For both training and testing data, we specify piecewise linear Dirichlet boundary condition with randomly generated values within $[0, 1]$ on boundary nodes. 

\textbf{Laplace2d-Mixed. } Laplace equation in 2d with mixed Dirichlet and Neumann boundary condition on $\partial\Omega = \Gamma_D\cup \Gamma_N$. The governing equation is

\begin{equation}
\begin{aligned}
\Delta u&=0 \quad\quad \text{in} \ \  \Omega \\
u&=u_D \quad\  \text{on} \ \  \Gamma_D\\
\frac{\partial u}{\partial n}&=g \quad\quad \text{on} \ \  \Gamma_N.
\end{aligned}
\label{eq:laplace2d_mixed}
\end{equation}

For training data, $20\%$ of the data have pure Dirichlet condition and the generation process is the same as in Laplace2d-Dirichlet. The rest $80\%$ of the training data have mixed boundary condition with non-empty connected Neumann boundary and Neumann boundary is randomly specified to be less than half of the entire boundary. Then a random number $r$ is sampled from $U[0.5, 1]$ to specify functional range for Dirichlet and Neumann boundaries as described next. Among data with non-empty Neumann boundary, $50\%$ have $u_D\in [0, r]$ and $g\in [0, 1]$ and the other $50\%$ have $u_D\in [0, 1]$ and $g\in [0, r]$. 

For testing data, Dirichlet and Neumann boundary is specified for each of the domain A,B and C as shown in Figure \ref{fig:domains_mixed}. Boundary conditions $u_D$ and $g$ are both piecewise linear with randomly generated values within $[0, 1]$.

\textbf{Darcy2d. } Darcy flow in 2d with coefficient field $a(x)$, source term $f(x)$ and pure Dirichlet boundary condition. The governing equation is

\begin{equation}
\begin{aligned}
-\nabla(a(x)\nabla u)&=f \quad\quad \text{in} \ \  \Omega \\
u&=u_D \quad\  \text{on} \ \  \partial\Omega.
\end{aligned}
\label{eq:darcy2d}
\end{equation}

For training data, Dirichlet boundary condition is specified with a random range $r\in [0.3, 1]$ and boundary values are generated as $u_D\in [0, r]$. The coefficient function $a(x)$ and source term $f(x)$ are specified as piecewise linear functions with randomly generated values within $[0, 1]$ on nodes.

For testing data, Dirichlet boundary condition, coefficient function and source term are all piecewise linear functions with randomly generated values within $[0, 1]$.

\textbf{Heat2d. } Time-dependent equation of heat conduction in 2d with coefficient $\alpha$ denoting the thermal diffusivity, time-varying boundary condition and initial condition. The governing equation is

\begin{equation}
\begin{aligned}
\frac{\partial u}{\partial t}&=\alpha \Delta u \quad\quad\quad \text{in} \ \  \Omega\times [0, T] \\
u(x, t)&=u_D(x, t) \quad\ \  \text{on} \ \  \partial\Omega\times [0, T]\\
u(x, 0) &= u_0(x) \quad\quad\ \ \  \text{on} \ \  \Omega\times \{0\}.
\end{aligned}
\label{eq:heat2d}
\end{equation}

For training data, we discretize the time domain with a fixed time step $t_s=0.01$, generate piecewise linear initials and time-varying boundary conditions with values randomly generated within $[0, 1]$. $\alpha$ is a random number within $[0.8, 1]$. We adopt the backward Euler method \citep{langtangen2017solving} and generate a time series of $10$ time steps. During training we separate these $10$ time steps into $2$ time series of $5$ times steps and training the neural operator to predict $5$ time steps.

For testing data, we fix $\alpha=1$. Initial condition is piecewise linear with values randomly generated within $[0, 1]$. Boundary condition is specified to be constant over time and varied randomly within $[0, 1]$ across boundary nodes. We also adopt the backward Euler method and generate a time series of $50$ time steps.

\textbf{NonlinearLaplace2d.} A nonlinear Laplace equation in 2d with pure Dirichlet boundary condition following an example in \cite{langtangen2017solving}. The governing equation is

\begin{equation}
\begin{aligned}
\nabla\cdot ((u^2+1)\nabla u)&=0 \quad\quad \text{in} \ \  \Omega \\
u&=u_D \quad\  \text{on} \ \  \partial\Omega.
\end{aligned}
\label{eq:laplace2d_dirichlet}
\end{equation}

For both training and testing data, we specify piecewise linear Dirichlet boundary condition with randomly generated values within $[0, 1]$ on boundary nodes. 

\begin{table}[!b]
\centering
\setlength{\tabcolsep}{0.25mm}{
\begin{tabular}{ccc|ccccc}
\hline\hline
\multicolumn{3}{c|}{}                                                                                                                                       & \multicolumn{1}{c|}{Lap2d-D} & \multicolumn{1}{c|}{Lap2d-M}                                                          & \multicolumn{1}{c|}{Darcy2d}                                                     & \multicolumn{1}{c|}{Heat2d}      &    NonlinearLap2d                \\ \hline\hline
\multicolumn{2}{c|}{\multirow{7}{*}{\begin{tabular}[c]{@{}c@{}}Data \\ Generation\end{tabular}}}                                            & Polygon       & \multicolumn{5}{c}{{[}3,12{]}}                                                                                                                                                                                                                                                        \\ \cline{3-8} 
\multicolumn{2}{c|}{}                                                                   & \begin{tabular}[c]{@{}c@{}}Training \\ Configuration\end{tabular} & \multicolumn{1}{c|}{10$\times$2000} & \multicolumn{1}{c|}{20$\times$2000}                                                          & \multicolumn{1}{c|}{10$\times$4000}                                                     & \multicolumn{1}{c|}{50$\times$1600} & \multicolumn{1}{c}{10$\times$ 2000}                                                                     \\ 
\cline{3-8} 
\multicolumn{2}{c|}{}                                                                   & \begin{tabular}[c]{@{}c@{}}Validation \\ Configuration\end{tabular} & \multicolumn{1}{c|}{10$\times$250} & \multicolumn{1}{c|}{20$\times$200}                                                          & \multicolumn{1}{c|}{10$\times$250}                                                     & \multicolumn{1}{c|}{50$\times$240} & \multicolumn{1}{c}{10$\times$ 2000}                                                                     \\ \cline{3-8}
\multicolumn{2}{c|}{}
    & \begin{tabular}[c]{@{}c@{}}Testing \\ Configuration\end{tabular} & \multicolumn{1}{c|}{100} & \multicolumn{1}{c|}{100}                                                          & \multicolumn{1}{c|}{100}                                                     & \multicolumn{1}{c|}{10} & 100                                 \\ \hline
\multicolumn{2}{c|}{\multirow{8}{*}{\begin{tabular}[c]{@{}c@{}}Operator \\ Learning\end{tabular}}}                                          & GNOT          & \multicolumn{5}{c}{1 expert and 3 layers of width 128}                                                                                                                                                                                                                                \\ \cline{3-8} 
\multicolumn{2}{c|}{}                                                                                                                       & Optimization  & \multicolumn{5}{c}{Adam}                                                                                                                                                                                                                                                              \\ \cline{3-8} 
\multicolumn{2}{c|}{}                                                                                                                       & Learning rate & \multicolumn{5}{c}{cycle learning rate strategy with 0.001}                                                                                                                                                                                                                           \\ \cline{3-8} 
\multicolumn{2}{c|}{}                                                                                                                       & Epoch         & \multicolumn{1}{c|}{500}     & \multicolumn{1}{c|}{1000}                                                             & \multicolumn{1}{c|}{500}                                                         & \multicolumn{1}{c|}{200} & 500              \\ \cline{3-8} 
\multicolumn{2}{c|}{}                                                                                                                       & Data Aug.     & \multicolumn{1}{c|}{Rot.}    & \multicolumn{1}{c|}{\begin{tabular}[c]{@{}c@{}}Rot.+\\ Sca. {[}0.8,1{]}\end{tabular}} & \multicolumn{1}{c|}{No}                                                          & \multicolumn{1}{c|}{\begin{tabular}[c]{@{}c@{}}Rot.+\\ Sca. {[}0.8,1{]}\end{tabular}} & \multicolumn{1}{c}{Rot.}\\  \cline{3-8}
\multicolumn{2}{c|}{} & Time steps & \multicolumn{3}{c|}{--} & \multicolumn{1}{c|}{$5$} & --\\ \hline
\multicolumn{1}{c|}{\multirow{18}{*}{\begin{tabular}[c]{@{}c@{}}Inference\\  (SNI)\end{tabular}}} & \multicolumn{1}{c|}{\multirow{6}{*}{A}} & 
Partition $K$     & \multicolumn{3}{c|}{20}     & \multicolumn{1}{c|}{20$\times 16$}    & 20                                                                                            \\ \cline{3-8} 
\multicolumn{1}{c|}{}                                                                             & \multicolumn{1}{c|}{}                   & Depth $d$         & \multicolumn{5}{c}{2}                                                                                                                                                                                                                                                                 \\ \cline{3-8} 
\multicolumn{1}{c|}{}                                                                             & \multicolumn{1}{c|}{}                   & Temp. Depth  $\delta_T$        & \multicolumn{3}{c|}{--} &\multicolumn{1}{c|}{1}     &   \multicolumn{1}{c}{--}          \\ \cline{3-8} 

\multicolumn{1}{c|}{}                                                                             & \multicolumn{1}{c|}{}                   & Step size $\tau$     & \multicolumn{3}{c|}{0.04} &\multicolumn{1}{c|}{0.002125}        & \multicolumn{1}{c}{0.04}      \\ \cline{3-8} 
\multicolumn{1}{c|}{}                                                                             & \multicolumn{1}{c|}{}                   & Pre/Post-pro. & \multicolumn{2}{c|}{\begin{tabular}[c]{@{}c@{}}Spa. Shift+Scale\\ Val. Shift+Scale\end{tabular}}                     & \multicolumn{1}{c|}{\begin{tabular}[c]{@{}c@{}}Spa. Shift+\\ Scale\end{tabular}} & \begin{tabular}[c]{@{}c|@{}}Spa. Shift+Scale\\ Val. Shift+Scale\end{tabular} & \multicolumn{1}{c}{Spa. Shift}\\ \cline{2-8} 
\multicolumn{1}{c|}{}                                                                             & \multicolumn{1}{c|}{\multirow{6}{*}{B}} & Partition $K$   & \multicolumn{3}{c|}{40} & \multicolumn{1}{c}{40$\times 16$}  & \multicolumn{1}{c}{40}                      \\ \cline{3-8} 
\multicolumn{1}{c|}{}                                                                             & \multicolumn{1}{c|}{}                   & Depth $d$       & \multicolumn{5}{c}{2}                                                                                                                                                                                                                                                                 \\ \cline{3-8} 
\multicolumn{1}{c|}{}                                                                             & \multicolumn{1}{c|}{}                   & Temp. Depth  $\delta_T$        & \multicolumn{3}{c|}{--} &\multicolumn{1}{c|}{1}      & \multicolumn{1}{c}{--}                                                                                                                                                                                                                                                           \\ \cline{3-8} 
\multicolumn{1}{c|}{}                                                                             & \multicolumn{1}{c|}{}                   & Step size $\tau$    & \multicolumn{3}{c|}{0.024}       & \multicolumn{1}{c|}{0.0014625}    & \multicolumn{1}{c}{0.024}                                                                                                                                                                                                                                                     \\ \cline{3-8} 
\multicolumn{1}{c|}{}                                                                             & \multicolumn{1}{c|}{}                   & Pre/Post-pro. & \multicolumn{2}{c|}{\begin{tabular}[c]{@{}c@{}}Spa. Shift+Scale\\ Val. Shift+Scale\end{tabular}}                     & \multicolumn{1}{c|}{\begin{tabular}[c]{@{}c@{}}Spa. Shift+\\ Scale\end{tabular}} & \begin{tabular}[c|]{@{}c|@{}}Spa. Shift+Scale\\ Val. Shift+Scale\end{tabular} & Spa.Shift\\ \cline{2-8} 
\multicolumn{1}{c|}{}                                                                             & \multicolumn{1}{c|}{\multirow{6}{*}{C}} & Partition $K$    & \multicolumn{3}{c|}{20}           &      \multicolumn{1}{c|}{20$\times$ 16} &\multicolumn{1}{c}{20}                                                                                                                                                                                                                                                 \\ \cline{3-8} 
\multicolumn{1}{c|}{}                                                                             & \multicolumn{1}{c|}{}                   & Depth $d$        & \multicolumn{5}{c}{2}                                                                                                                                                                                                                                                                 \\ \cline{3-8} 
\multicolumn{1}{c|}{}                                                                             & \multicolumn{1}{c|}{}                   & Temp. Depth  $\delta_T$        & \multicolumn{3}{c|}{--} &\multicolumn{1}{c|}{1}       & \multicolumn{1}{c}{--}                                                                                                                                                                                                                                                          \\ \cline{3-8} 
\multicolumn{1}{c|}{}                                                                             & \multicolumn{1}{c|}{}                   & Step size $\tau$    & \multicolumn{3}{c|}{0.04}           &        \multicolumn{1}{c|}{0.002125}        & \multicolumn{1}{c}{0.04}                                                                                                                                                                                                                                   \\ \cline{3-8} 
\multicolumn{1}{c|}{}                                                                             & \multicolumn{1}{c|}{}                   & Pre/Post-pro. & \multicolumn{2}{c|}{\begin{tabular}[c]{@{}c@{}}Spa. Shift+Scale\\ Val. Shift+Scale\end{tabular}}                     & \multicolumn{1}{c|}{\begin{tabular}[c]{@{}c@{}}Spa. Shift+\\ Scale\end{tabular}} & \begin{tabular}[c]{@{}c|@{}}Spa. Shift+Scale\\ Val. Shift+Scale\end{tabular} & \multicolumn{1}{c}{Spa. Shift}\\ \hline\hline
\end{tabular}}
\caption{Key hyperparameters of main experiments. Configuration under Data Generation is specified as (number of random input functions per shape) $\times$ (number of various shapes). Partition $K$ for Heat2d is specified as (number of spatial partition) $\times$ (number of temporal partition).}
\label{tab:hyperparameter}
\end{table}

\begin{table}[!h]
\centering
\begin{tabular}{cl}
\hline\hline
PDE & \multicolumn{1}{c}{Description}                                                                                                                                                                          \\ \hline\hline
  Lap2d-D  & range of boundary condition: $U[0,1]$                                                                                                                                                                \\ \hline
 Lap2d-M   & \begin{tabular}[c]{@{}l@{}}20\% pure Dirichlet condition: range $U[0,1]$\\ 40\% mixed boundary condition with $\Gamma_D/\partial \Omega \sim U[0.5, 1]$ \\$\quad$  range of Dirichlet: $U[0,r]$ where $r \sim U [0.5, 1]$, range of Neumann:$U[0,1]$\\ 40\% mixed boundary condition with $\Gamma_D/\partial \Omega \sim U[0.5, 1]$ \\$\quad$ range of Dirichlet: $U[0,1]$, range of Neumann: $U[0,r]$ where $r \sim U [0.5, 1]$\end{tabular} \\ \hline
 Darcy2d   & \begin{tabular}[c]{@{}l@{}}range of boundary conditions: $U[0,r]$ where $r \sim U [0.3, 1]$,\\ range of $a(x)$ and $f$: $U[0,1]$ \end{tabular}                                                        \\ \hline
  Heat2d  & range of initial/boundary condition: $U[0,1]$ ,   $\alpha \sim U [0.8, 1]$                                                                                                                                                     \\ \hline
  NonlinearLap2d  & range of boundary condition: $U[0,1]$                                                                                                                                                                \\ \hline\hline
\end{tabular}
\caption{Details of boundary/initial condition and input function generation in training data.}
\label{tab:train_gen}
\end{table}

\begin{table}[!h]
\centering
\begin{tabular}{cl}
\hline\hline
PDE & \multicolumn{1}{c}{Description}                                                                                                                                                                          \\ \hline\hline
  Lap2d-D  & range of boundary condition: $U[0,1]$                                                                                                                                                                \\ \hline
 Lap2d-M   & \begin{tabular}[c]{@{}l@{}}$\Gamma_D$ and $\Gamma_N$ as in Figure \ref{fig:domains_mixed}\\range of Dirichlet: $U[0, 1]$, range of Neumann: $U[0, 1]$ \end{tabular} \\ \hline
 Darcy2d   & \begin{tabular}[c]{@{}l@{}}range of boundary conditions: $U[0,1]$ \\ range of $a(x)$ and $f(x)$: $U[0,1]$ \end{tabular}                                                        \\ \hline
  Heat2d  &  \begin{tabular}[c]{@{}l@{}} $\alpha =1$\\range of boundary/initial condition: $U[0,1]$\\ boundary condition do not vary with time \\\end{tabular}                                                                                                                                                \\ \hline
  NonlinearLap2d  & range of boundary condition: $U[0,1]$                                                                                                                                                                \\ \hline\hline
\end{tabular}
\caption{Details of boundary/initial condition and input function generation in testing data.}
\label{tab:test_gen}
\end{table}

\subsection{Evaluation Protocol and Hyperparameters.}
\label{app:protocal_hyperparameters}
\textbf{Evaluation Protocol.}
The evaluation metric we utilize is the mean $l_2$ relative error. Let $u_i, u'_i \in \mathbb{R}^n$ represent the ground truth solution and predicted solution for the $i$-th sample, respectively. Considering a dataset of size $D$, the mean $l_2$ relative error is computed as follows:
\begin{equation}
\varepsilon =\frac{1}{D} \sum_{i=1}^{D} \frac{\parallel u^{\prime}_i - u_i\parallel_2 }{\parallel u_i\parallel_2} 
\label{eq:eval}
\end{equation}

\textbf{Hyperparameters.}
% For hyperparameters of our method, we fix the number of vertices $n=12$ for basic shapes to generate simple polygons with $[3, n]$ vertices. During inference, we choose number of subdomains $k\in\{10, 20, 30, 40, 50\}$ and extension depth $d\in\{1,2,4\}$. For neural operator, we fix a network of GNOT \cite{hao2023gnot} with $N_{\text{expert}} = 1$ and 3 layers of width 128. We train all models with AdamW \cite{loshchilov2017decoupled} and cycle learning rate strategy \cite{smith2019super} as in GNOT. 
All experimental hyperparameters used in the paper are listed in Table~\ref{tab:hyperparameter}.
For data generation, the number of vertices of simple polygons are uniformly chosen between 3 to 12. And  $a\times b$ in configurations denotes the generation of $b$ shapes, each having $a$ distinct boundary/initial conditions. For investigating data efficiency issue, we only vary the number of various shapes $b$ while keeping the number of random input functions per shape $a$ constant. 
For boundary condition imposition, we summarize the details in Table~\ref{tab:train_gen} and~\ref{tab:test_gen}.

\textbf{Computing Resource.}
We run our experiments on 1 Tesla V100 GPU.

\subsection{Visualization of Basic Shapes and Domain Decomposition.}
\label{app:visualization of domains}

\textbf{Visualization of basic shapes.}
We provide examples of generated basic shapes for training in Figure \ref{fig:basic_shapes}.
\begin{figure}[!h]
     \centering
     \begin{subfigure}[b]{0.32\textwidth}
         \centering
         \includegraphics[width=\textwidth]{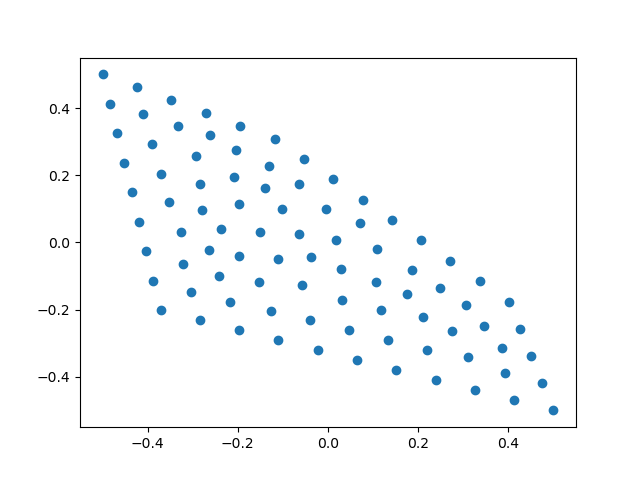}
         %\caption{Lap2d-M domain A}
         %\label{fig:laplace_m schwarz}
     \end{subfigure}
     \begin{subfigure}[b]{0.32\textwidth}
         \centering
         \includegraphics[width=\textwidth]{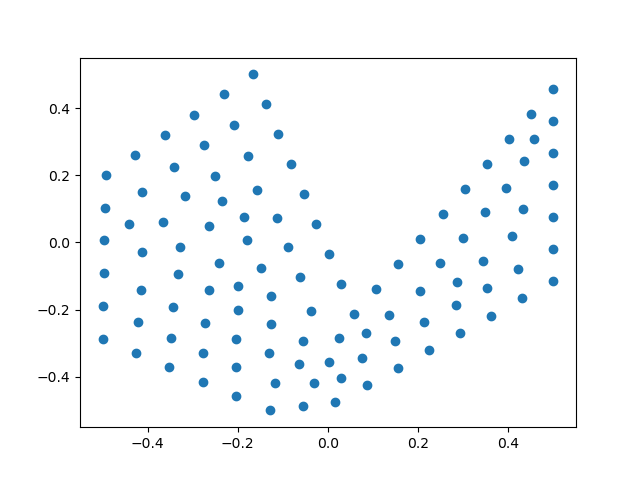}
         %\caption{Lap2d-M domain B}
         %\label{fig:laplace_m holes}
     \end{subfigure}
     \begin{subfigure}[b]{0.32\textwidth}
         \centering
         \includegraphics[width=\textwidth]{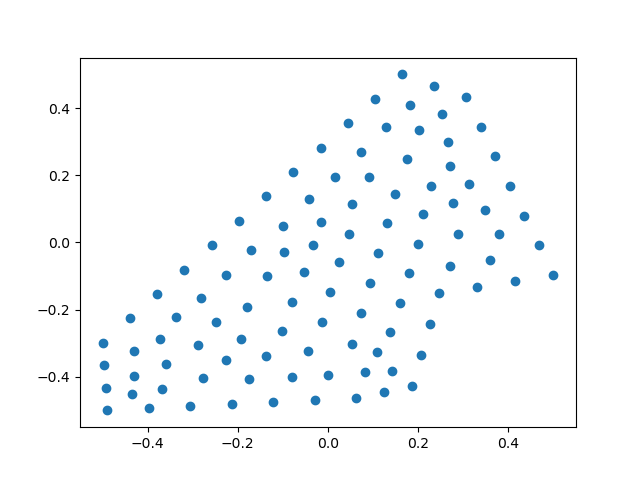}
         %\caption{Lap2d-M domain B}
         %\label{fig:laplace_m holes}
     \end{subfigure}
        \caption{Examples of basic shapes.} 
        % For Laplace equations, the L2 relative error of GNOT direct inference are significantly higher than DDNO results. The DDNO errors are comparable to training errors, despite the high variance of results at some numbers of training samples. For Darcy equations, the performance of DDNO are only comparable to training with large number of samples ($\ge 20000$). GNOT direct inference struggles at holes domain, due to its clear difference in topology from training samples, while DDNO can reach much better errors.}
        \label{fig:basic_shapes}
\end{figure}

\textbf{Visualization of decomposed domains: A, B, and C}
We provide the visualization of decomposed domains A, B, C in Figure \ref{fig:visualization_of_domains}. Please be noted that this is just a rough visualization in that we do not correctly plot the overlapping part of subdomains. So this is only for an intuitive understanding of the decomposed domain.
\begin{figure}[!h]
     \centering
     \raisebox{0.6cm}{
     \begin{subfigure}[t]{0.32\textwidth}
         \centering
         % \vspace{-1mm}
         \includegraphics[width=\linewidth]{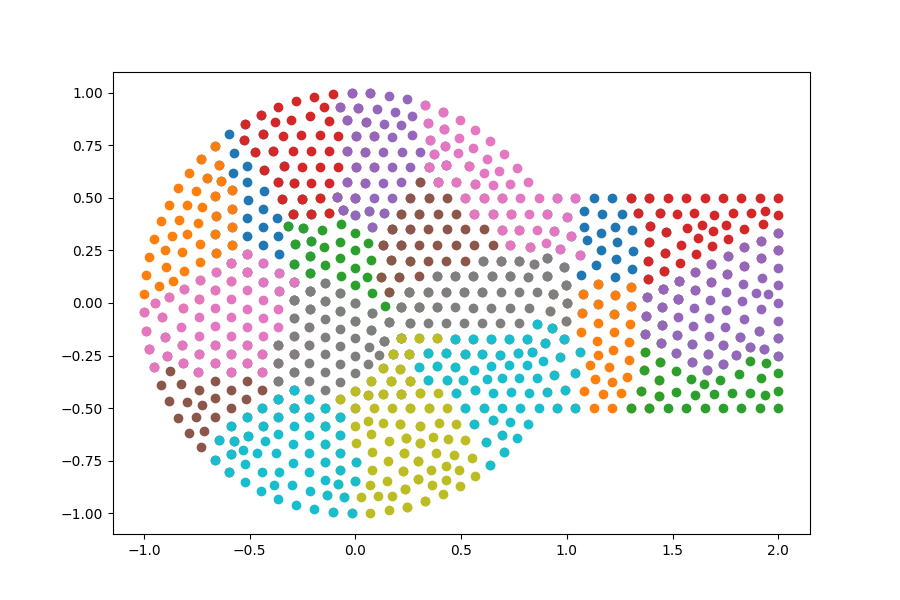}
         %\caption{Lap2d-M domain A}
         %\label{fig:laplace_m schwarz}
     \end{subfigure}
     }
     \begin{subfigure}[t]{0.32\textwidth}
         \centering
         \includegraphics[width=\linewidth]{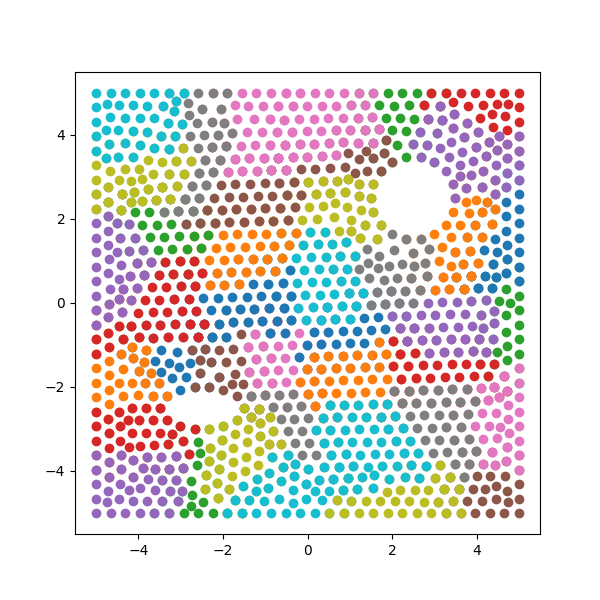}
         %\caption{Lap2d-M domain B}
         %\label{fig:laplace_m holes}
     \end{subfigure}
     \begin{subfigure}[t]{0.32\textwidth}
         \centering
         \includegraphics[width=\linewidth]{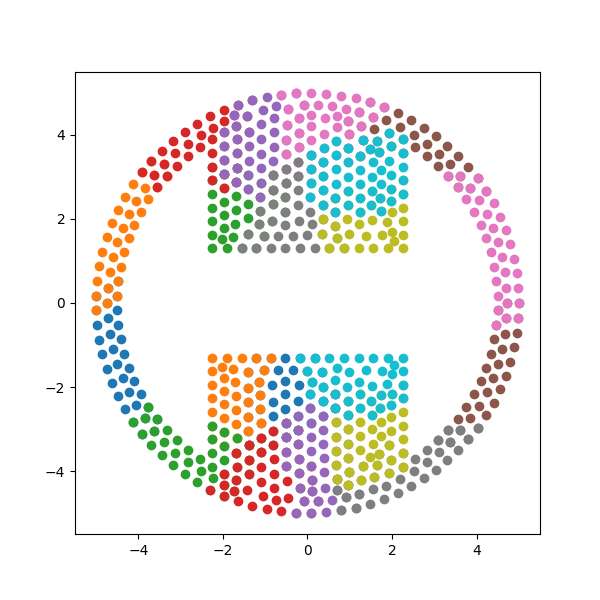}
         %\caption{Lap2d-M domain B}
         %\label{fig:laplace_m holes}
     \end{subfigure}
        \caption{Visualization of decomposed domains A, B and C.}
        % For Laplace equations, the L2 relative error of GNOT direct inference are significantly higher than DDNO results. The DDNO errors are comparable to training errors, despite the high variance of results at some numbers of training samples. For Darcy equations, the performance of DDNO are only comparable to training with large number of samples ($\ge 20000$). GNOT direct inference struggles at holes domain, due to its clear difference in topology from training samples, while DDNO can reach much better errors.}
        \label{fig:visualization_of_domains}
\end{figure}

\subsection{Other Supplementary Results.}
\label{app:other supplementary}

\textbf{Data Efficiency.} The results for data efficiency on Laplace2d-Mixed and Darcy2d are shown in Figure~\ref{fig:comparison_laplace_n}. The average performance of SNI is better than GNOT on all of three domains, while the margins between the two methods on domains A and B are not statistically significant due to the high variance in the $l_2$ relative errors of SNI on these two domains. GNOT struggles in the generalization to domain C, while SNI can still handle it with a good performance.

\begin{figure}[!h]
     \centering
     \begin{subfigure}[b]{0.32\textwidth}
         \centering
         \includegraphics[width=\textwidth]{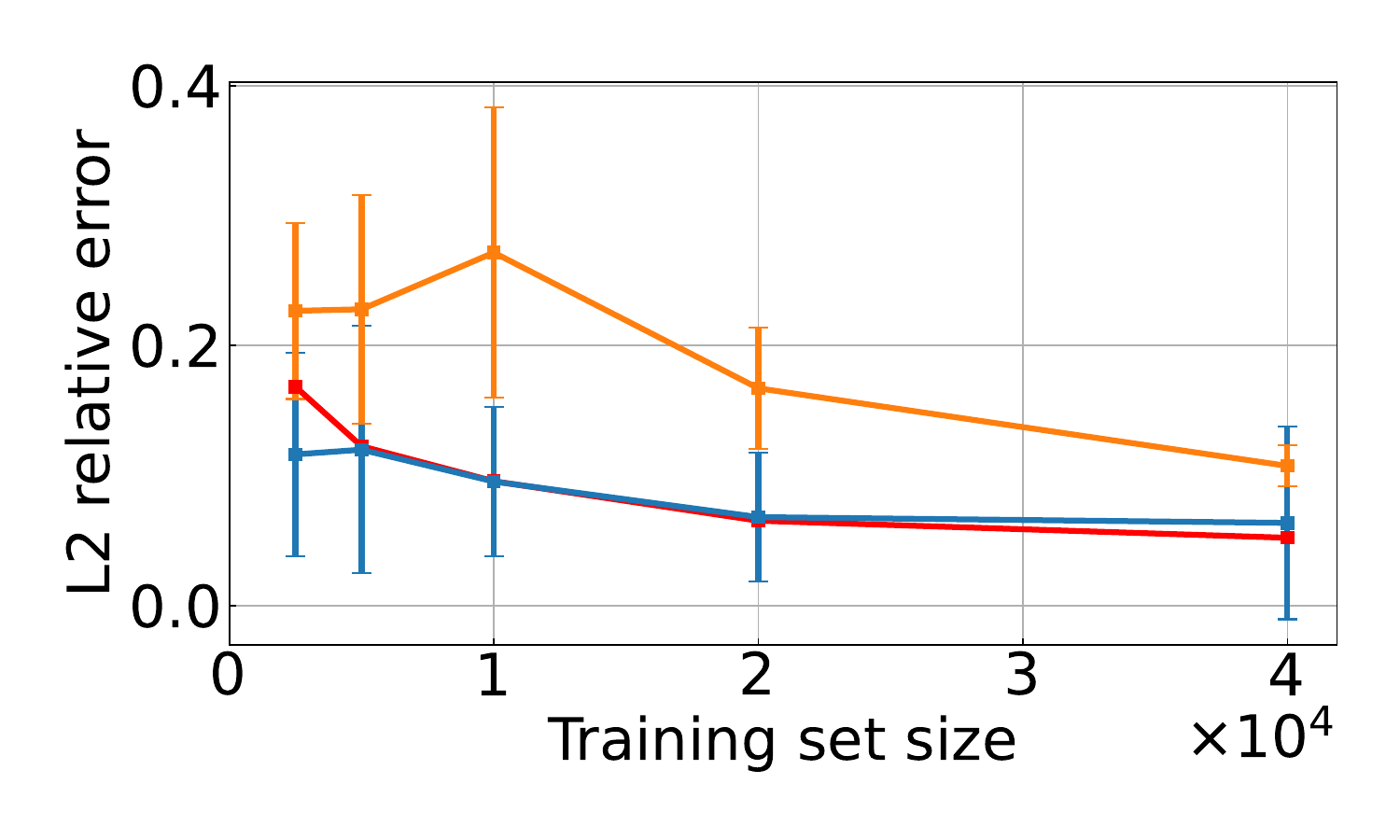}
         \caption{Lap2d-M domain A}
         \label{fig:laplace_m schwarz}
     \end{subfigure}
     \begin{subfigure}[b]{0.32\textwidth}
         \centering
         \includegraphics[width=\textwidth]{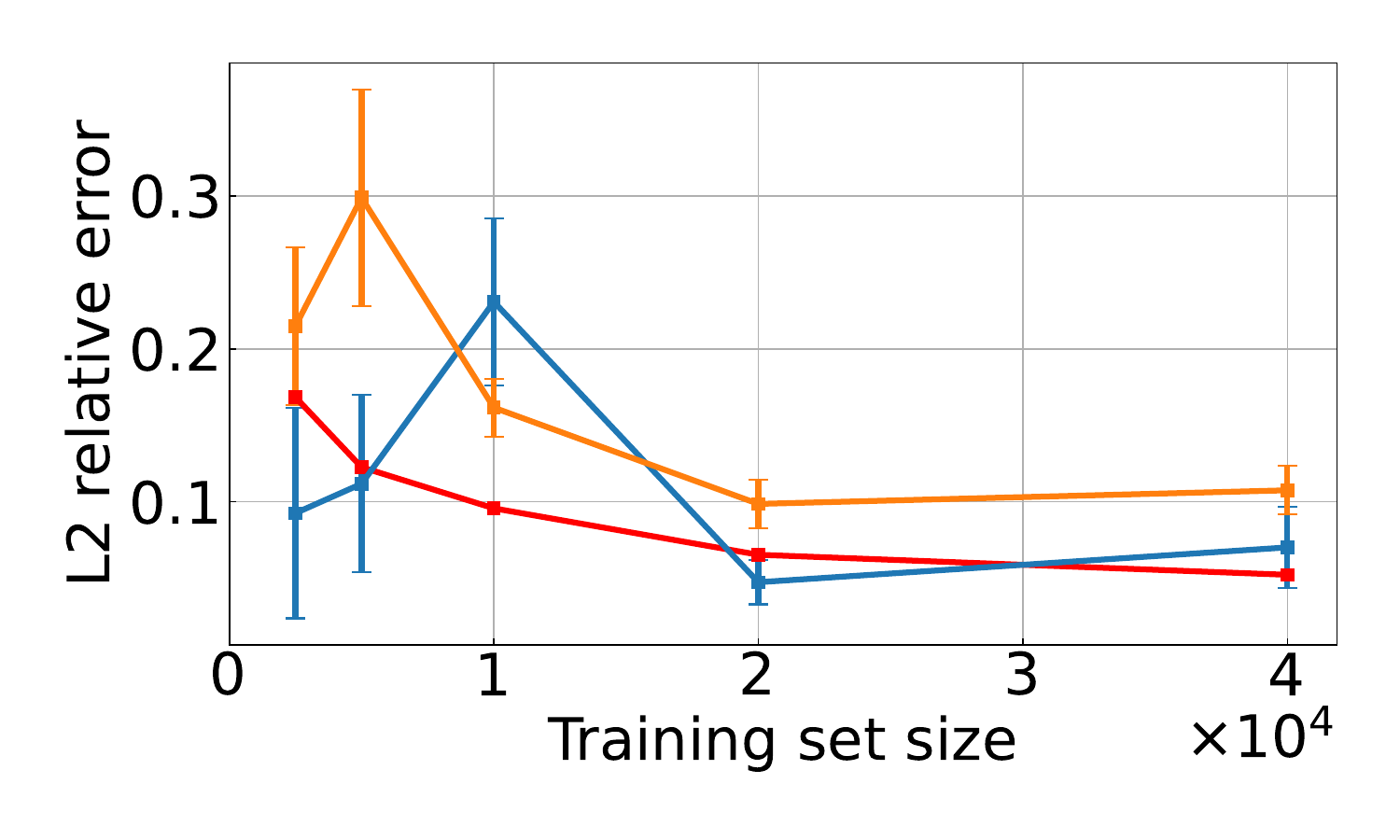}
         \caption{Lap2d-M domain B}
         \label{fig:laplace_m holes}
     \end{subfigure}
     \begin{subfigure}[b]{0.32\textwidth}
         \centering
         \includegraphics[width=\textwidth]{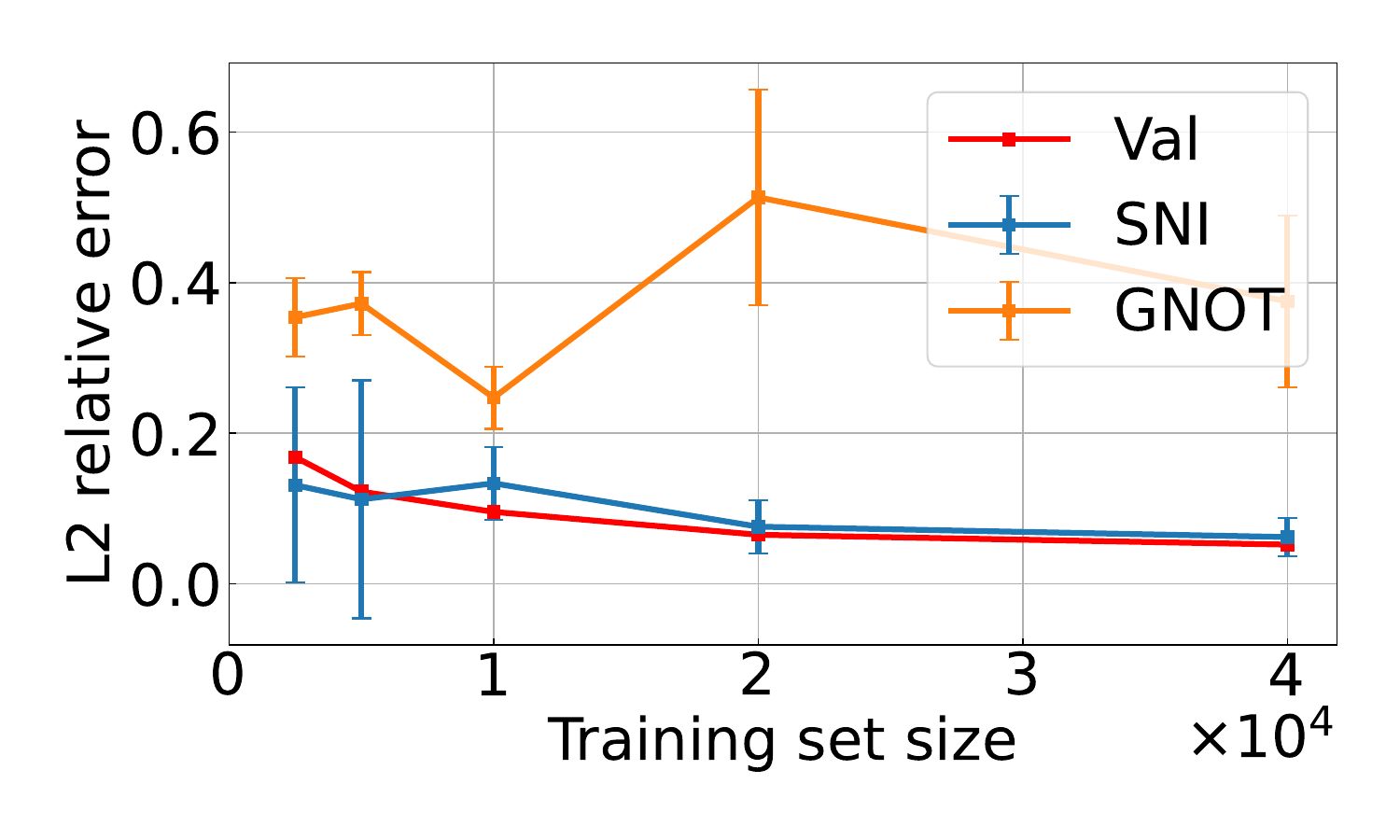}
         \caption{Lap2d-M domain C}
         \label{fig:laplace_m bosch}
     \end{subfigure}
     \begin{subfigure}[b]{0.32\textwidth}
         \centering
         \includegraphics[width=\textwidth]{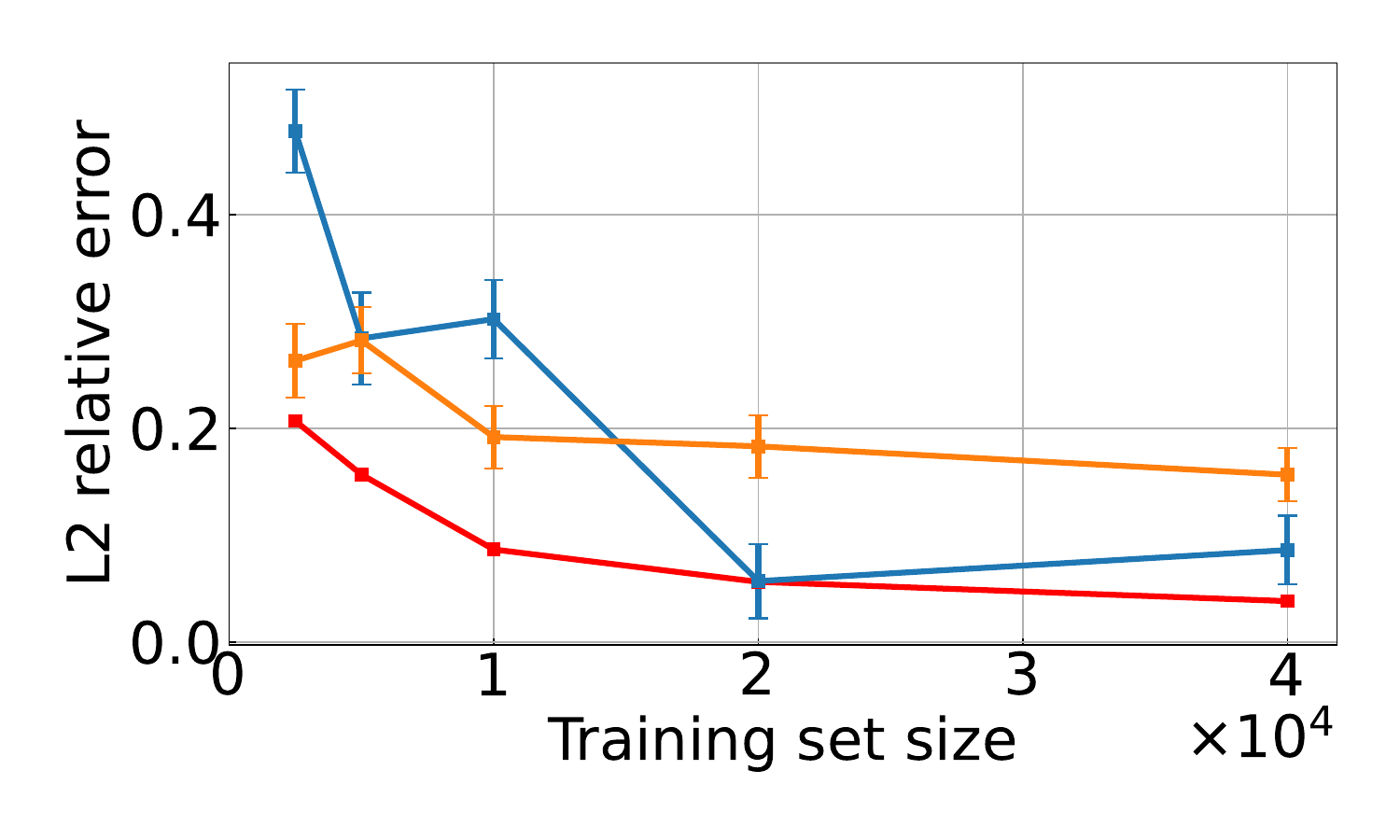}
         \caption{Darcy2d domain A}
         \label{fig:darcy schwarz}
     \end{subfigure}
     \begin{subfigure}[b]{0.32\textwidth}
         \centering
         \includegraphics[width=\textwidth]{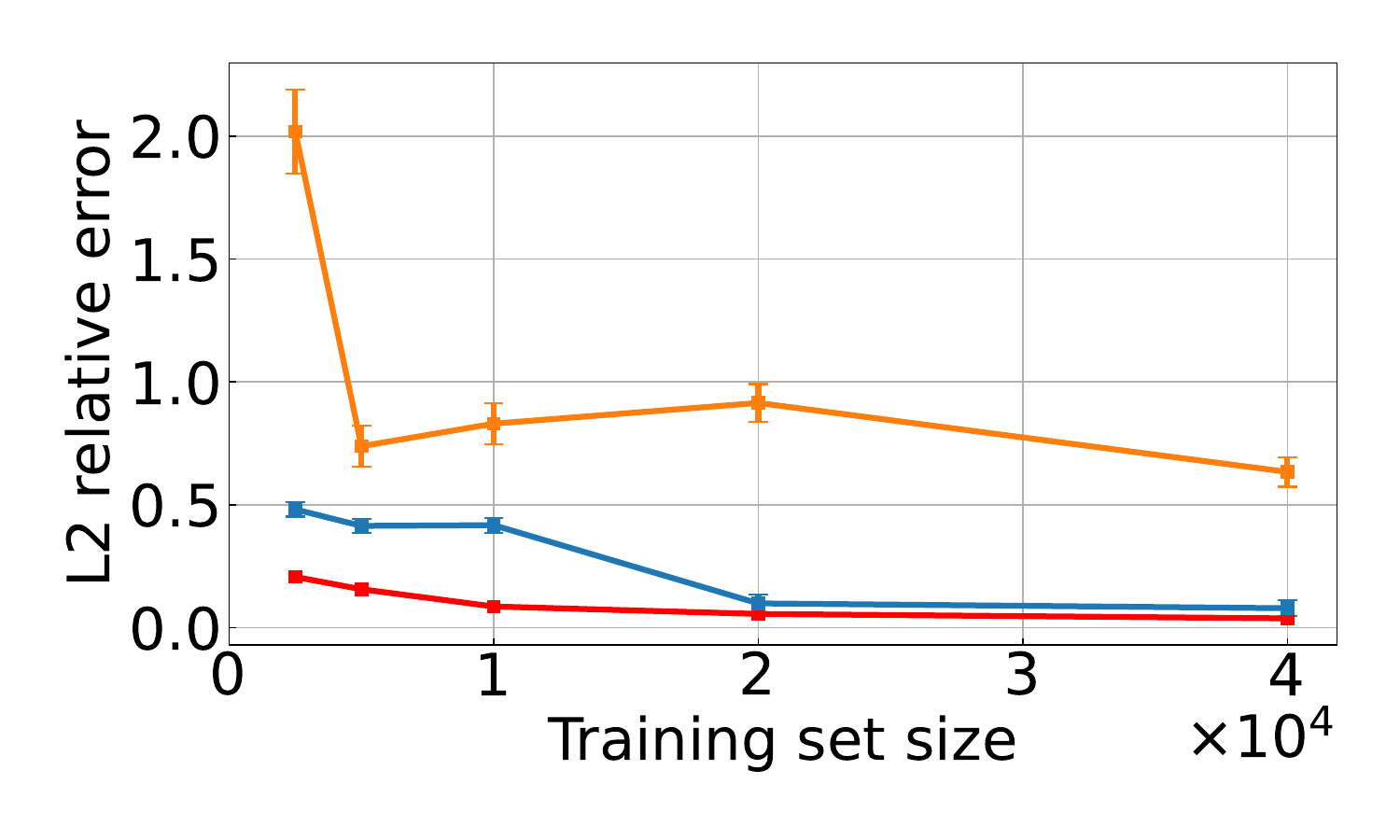}
         \caption{Darcy2d domain B}
         \label{fig:darcy holes}
     \end{subfigure}
     \begin{subfigure}[b]{0.32\textwidth}
         \centering
         \includegraphics[width=\textwidth]{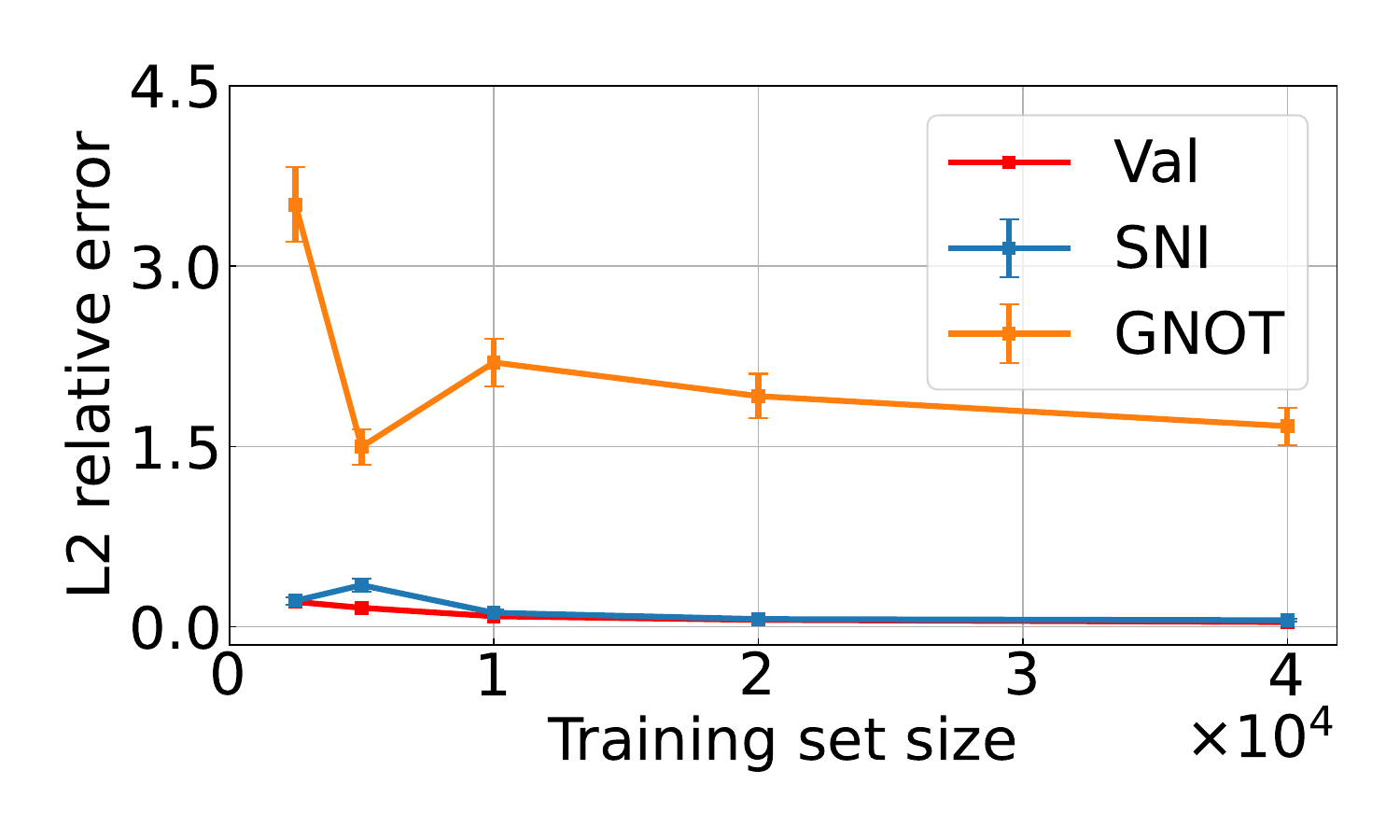}
         \caption{Darcy2d domain C}
         \label{fig:darcy bosch}
     \end{subfigure}
        \caption{Comparison between the $l_2$ relative errors from SNI (blue), GNOT direct inference (orange) and validation (red) on Laplace2d-Mixed and Darcy2d  upon three domains (A, B and C) with different numbers of training samples. } 
        % For Laplace equations, the L2 relative error of GNOT direct inference are significantly higher than DDNO results. The DDNO errors are comparable to training errors, despite the high variance of results at some numbers of training samples. For Darcy equations, Visualization of decomposed domains the performance of DDNO are only comparable to training with large number of samples ($\ge 20000$). GNOT direct inference struggles at holes domain, due to its clear difference in topology from training samples, while DDNO can reach much better errors.}
        \label{fig:comparison_laplace_n}
\end{figure}

\textbf{Irregular and More Complicated Domain: Dolphin-shape Domain.}
We provide the result of Laplace2d-Dirichlet on a dolphin-shape domain in Table \ref{tab:dolphin_disk} which has more complex boundary. The mesh and decomposed domain is illustrated in Figure \ref{fig:visualization_of_dolphin}. We can see that on this irregular and more complicated domain, the SNI with GNOT still gets reasonably good result. The error is relatively higher than that of simpler domains in Table \ref{tab:main results}. This error gap can be caused by the gap between training and testing shape distribution.

\begin{figure}[!h]
     \centering
     \begin{subfigure}[t]{0.32\textwidth}
         \centering
         \includegraphics[width=\linewidth]{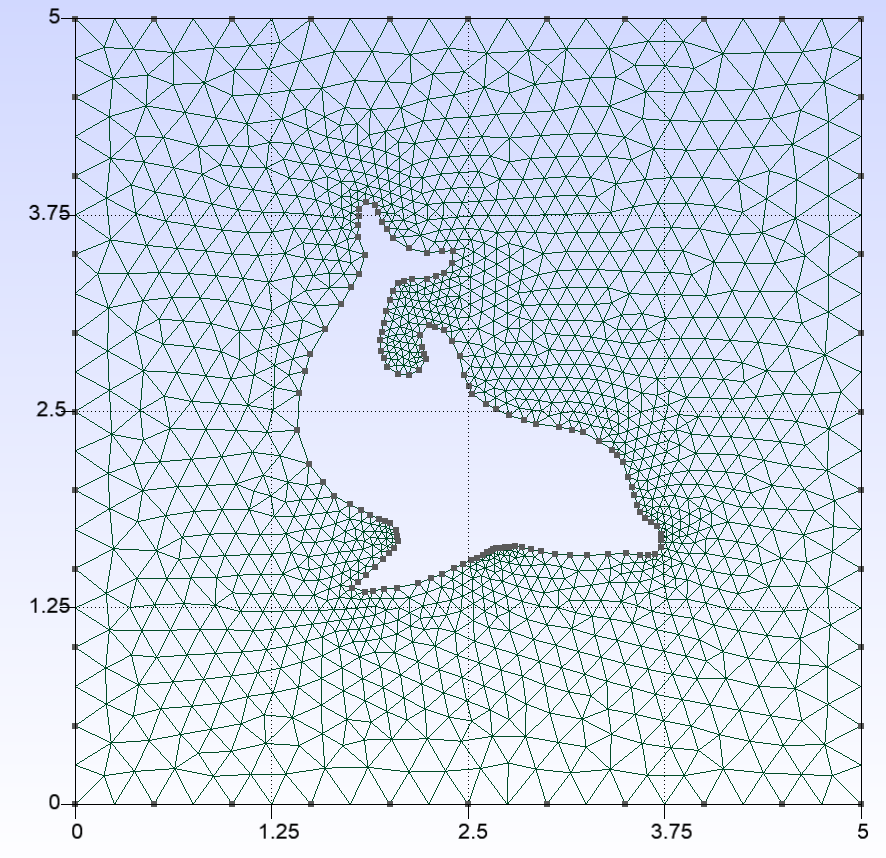}
         %\caption{Lap2d-M domain A}
         %\label{fig:laplace_m schwarz}
     \end{subfigure}
     \begin{subfigure}[t]{0.32\textwidth}
         \centering
         \includegraphics[width=\linewidth]{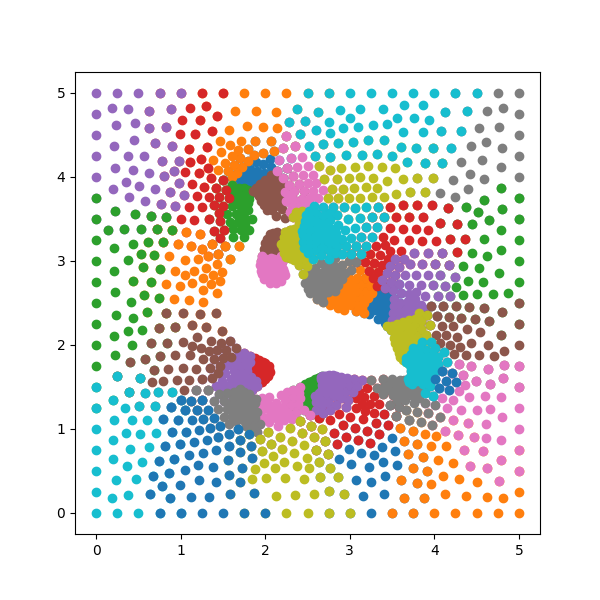}
         %\caption{Lap2d-M domain B}
         %\label{fig:laplace_m holes}
     \end{subfigure}

        \caption{\diff{Visualization of dolphin-shape domain.}} 
        % For Laplace equations, the L2 relative error of GNOT direct inference are significantly higher than DDNO results. The DDNO errors are comparable to training errors, despite the high variance of results at some numbers of training samples. For Darcy equations, the performance of DDNO are only comparable to training with large number of samples ($\ge 20000$). GNOT direct inference struggles at holes domain, due to its clear difference in topology from training samples, while DDNO can reach much better errors.}
        \label{fig:visualization_of_dolphin}
\end{figure}

\begin{table}[!t]
\centering
\setlength{\tabcolsep}{2mm}{
\begin{tabular}{c|c|cccc}
\hline \hline
\centering
Equation      & Domain & SNI with GNOT (\%) \\ \hline \hline
\multirow{2}{*}{Laplace2d-Dirichlet}  & Dolphin   & 4.5$\pm$ 0.9   \\  
                          & Disk   &  2.1$\pm$0.5 \\  \hline\hline
\end{tabular}}
\caption{Laplace2d-Dirichlet on dolphin-shape and disk domains.}
\label{tab:dolphin_disk}
\end{table}

\textbf{Simple Domain: Disk.}
We provide the result of Laplace2d-Dirichlet on a simple disk domain in Table \ref{tab:dolphin_disk}. This result is comparable to these in Table \ref{tab:main results}.

\textbf{Comparison with Graph-based Neural Networks (GNN).}
We provide the result of direct inference with MeshGraphNets \citep{pfaff2020learning}. We train the GNN on our Laplace2d training data and evaluate it on the domains A. We get a relative $l_2$ error of 11.5\%. We find that GNN does provide better generalization across different domains compared to GNOT and it can be potentially used to accelerate our iterative algorithm in our future work.

% \begin{table}[!t]
% \centering
% \setlength{\tabcolsep}{2mm}{
% \begin{tabular}{c|c|cccc}
% \hline \hline
% \centering
% Equation      & Domain & MeshGraphNets \\ \hline \hline
% Laplace2d-Dirichlet  & A   & 11.5e-2   \\  \hline\hline
% \end{tabular}}
% \caption{Laplace2d-Dirichlet on dolphin-shape and disk domains.}
% \label{tab:dolphin_disk}
% \end{table}

\textbf{Choice of neural operator architecture.}
We provide results of SNI with Geo-FNO \cite{li2023fourier} on Laplace2d-Dirichlet and Darcy2d in Table~\ref{tab:geofno}.

\begin{table}[!t]
\centering
\setlength{\tabcolsep}{2mm}{
\begin{tabular}{c|c|cccc}
\hline \hline
\centering
Equation                  & Domain & $\text{val}_{\text{GNOT}}$ &SNI with GNOT & $\text{val}_{\text{Geo-FNO}}$ & SNI with Geo-FNO \\ \hline \hline
\multirow{3}{*}{Laplace2d-Dirichlet}  & A   & \multirow{3}{*}{2.5}   & 2.2$\pm$0.6 & \multirow{3}{*}{5.6} &9$\pm$3   \\ \cline{2-2} 
                          & B   &      & 2.1$\pm$0.4 &  &14$\pm$1   \\ \cline{2-2} 
                          & C   &      & 2.1$\pm$0.9 &   &12$\pm$2  \\ \hline
\multirow{3}{*}{Darcy2d}  & A  &  \multirow{3}{*}{3.8}     & 9$\pm$2     &  \multirow{3}{*}{6.4} &11$\pm$2  \\ \cline{2-2} 
                          & B  &       & 8$\pm$2     &   &15.7$\pm$0.9  \\ \cline{2-2} 
                          & C   &      & 5.4$\pm$0.6 &  &20$\pm$2   \\ \hline\hline
\end{tabular}}

\caption{SNI with GNOT or Geo-FNO as different choices of local operator for Laplace2d-Dirichlet and Darcy2d. Validation errors are provided for reference.}
\label{tab:geofno}
\end{table}

\textbf{Solution and Error Visualization.}
We provide visualization for stationary problems in Section \ref{exp}. We visualize ground-truth solution from testing data, absolute error from SNI and GNOT direct inference in Figure \ref{fig:laplace2d_viz}, \ref{fig:laplace2d_n_viz} and \ref{fig:darcy2d_viz}. We also provide the error visualization in Figure \ref{fig:visualization_of_loss_decomposition} along with the decomposed domain to understand where the error is located.
%{\color{red}[TODO: error analysis: fig with some remark]}

\begin{figure}[!b]
     \centering
     \begin{subfigure}[b]{0.3\textwidth}
         \centering
         \includegraphics[width=\textwidth]{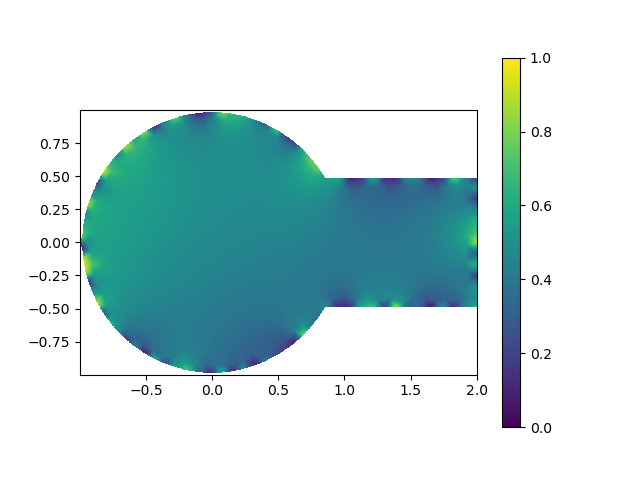}
         %\caption{Laplace2d-Dirichlet domain A}
         %\label{fig:laplace schwarz viz}
     \end{subfigure}
    \begin{subfigure}[b]{0.3\textwidth}
         \centering
         \includegraphics[width=\textwidth]{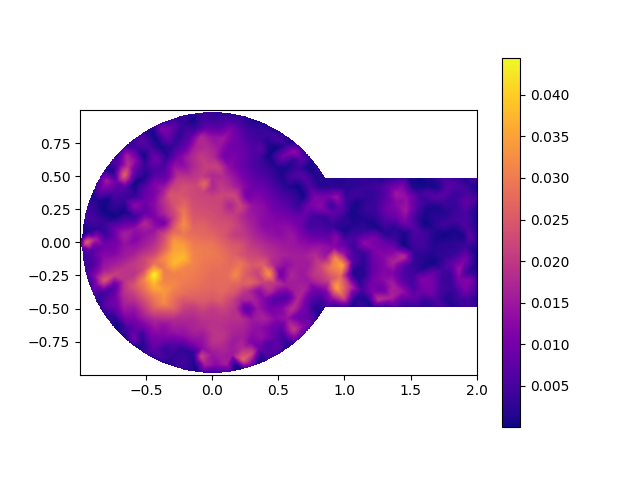}
         %\caption{Error of SNI}
        % \label{fig:laplace schwarz sni error viz}
     \end{subfigure}
     \begin{subfigure}[b]{0.3\textwidth}
         \centering
         \includegraphics[width=\textwidth]{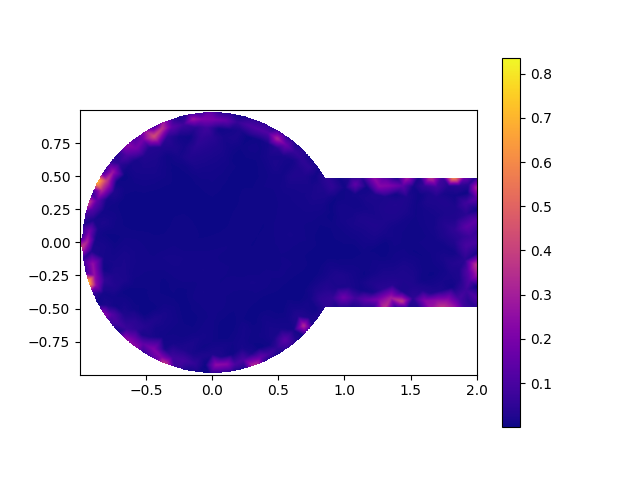}
         %\caption{Error of GNOT}
         %\label{fig:laplace schwarz gnot error viz}
     \end{subfigure}
    
     \begin{subfigure}[b]{0.3\textwidth}
         \centering
         \includegraphics[width=\textwidth]{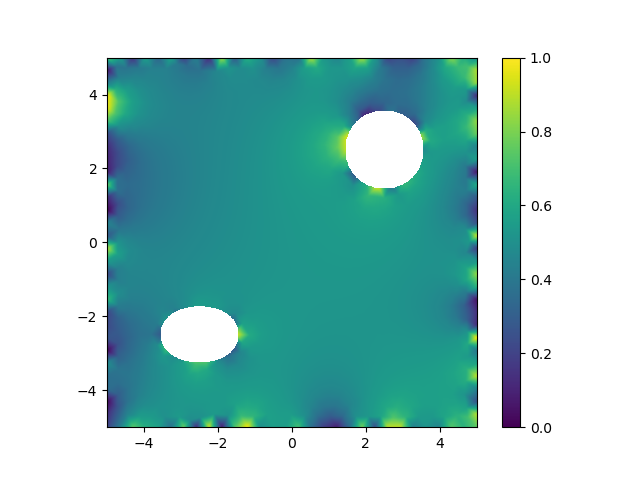}
        % \caption{Laplace2d-Dirichlet domain B}
         %\label{fig:laplace holes viz}
     \end{subfigure}
    \begin{subfigure}[b]{0.3\textwidth}
         \centering
         \includegraphics[width=\textwidth]{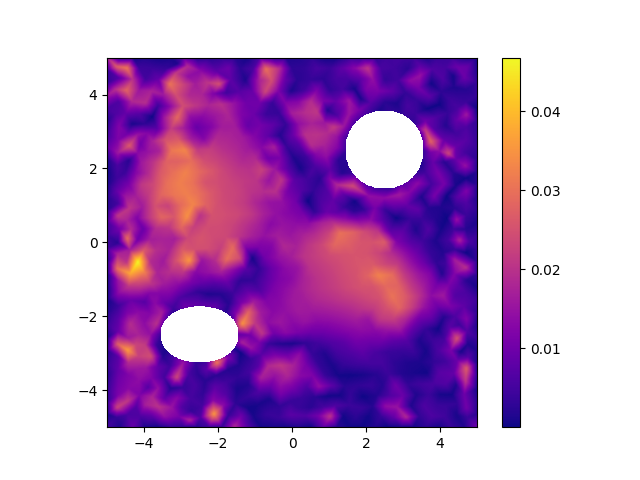}
         %\caption{Error of SNI}
         %\label{fig:laplace holes sni error viz}
     \end{subfigure}
     \begin{subfigure}[b]{0.3\textwidth}
         \centering
         \includegraphics[width=\textwidth]{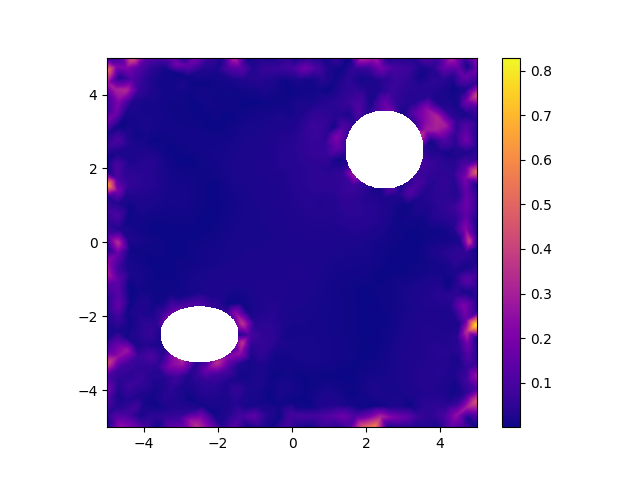}
         %\caption{Error of GNOT}
         %\label{fig:laplace holes gnot error viz}
     \end{subfigure}

     % \vskip\baselineskip
     \begin{subfigure}[b]{0.3\textwidth}
         \centering
         \includegraphics[width=\textwidth]{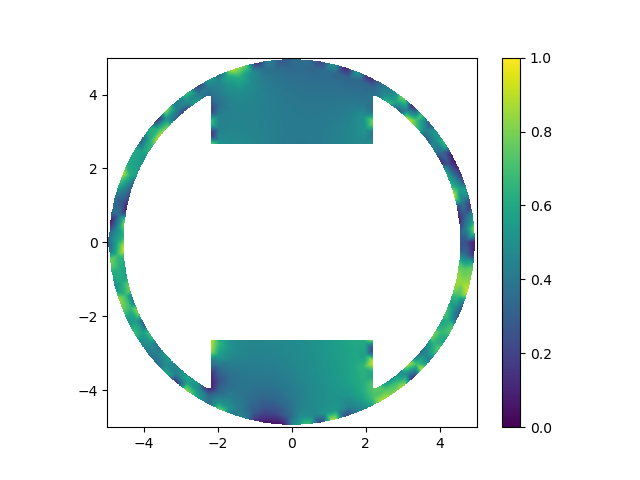}
         %\caption{Laplace2d-Dirichlet domain C}
        % \label{fig:laplace bosch viz}
     \end{subfigure}
    \begin{subfigure}[b]{0.3\textwidth}
         \centering
         \includegraphics[width=\textwidth]{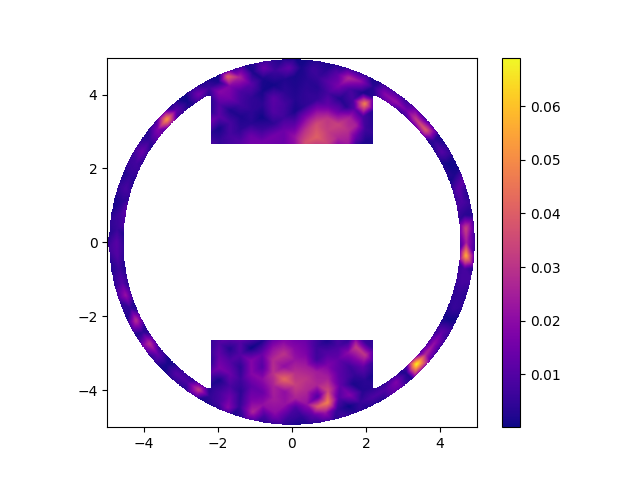}
         %\caption{Error of SNI}
         %\label{fig:laplace bosch sni error viz}
     \end{subfigure}
     \begin{subfigure}[b]{0.3\textwidth}
         \centering
         \includegraphics[width=\textwidth]{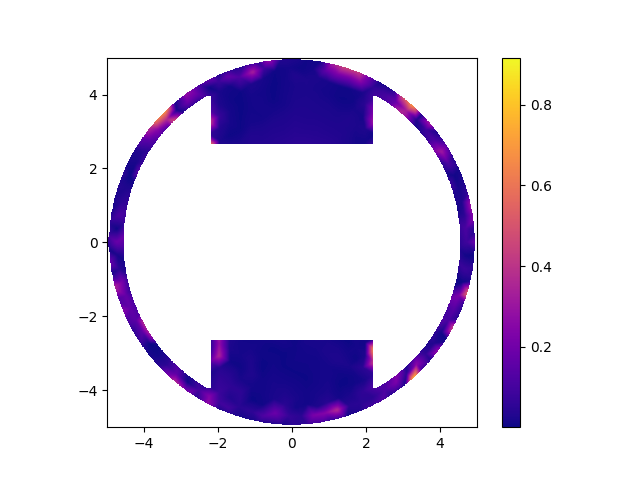}
         %\caption{Error of GNOT}
         %\label{fig:laplace bosch gnot error viz}
     \end{subfigure}

\caption{Visualization of test dataset of Laplace2d-Dirichlet on domain A, B and C. The three columns from left to right display the ground-truth solution, absolute error from SNI and GNOT direct inference.} 
        % For Laplace equations, the L2 relative error of GNOT direct inference are significantly higher than DDNO results. The DDNO errors are comparable to training errors, despite the high variance of results at some numbers of training samples. For Darcy equations, the performance of DDNO are only comparable to training with large number of samples ($\ge 20000$). GNOT direct inference struggles at holes domain, due to its clear difference in topology from training samples, while DDNO can reach much better errors.}
        \label{fig:laplace2d_viz}
\end{figure}

\newpage
\begin{figure}[!h]
 \centering
     \begin{subfigure}[b]{0.3\textwidth}
         \centering
         \includegraphics[width=\textwidth]{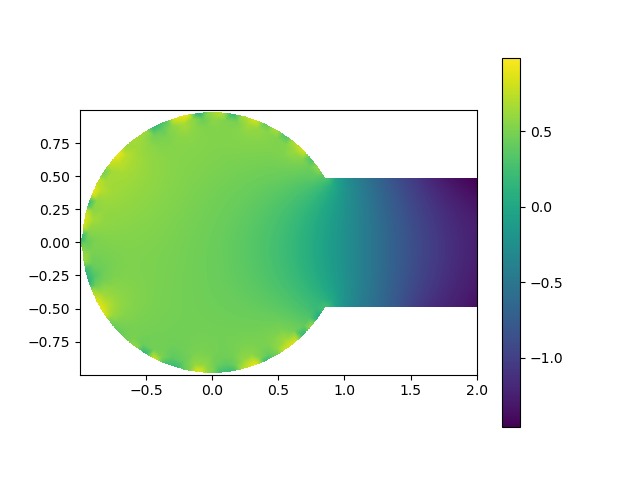}
         %caption{Laplace2d-Mixed domain A}
         %\label{fig:laplace2d_n schwarz viz}
    \end{subfigure}
    \begin{subfigure}[b]{0.3\textwidth}
         \centering
         \includegraphics[width=\textwidth]{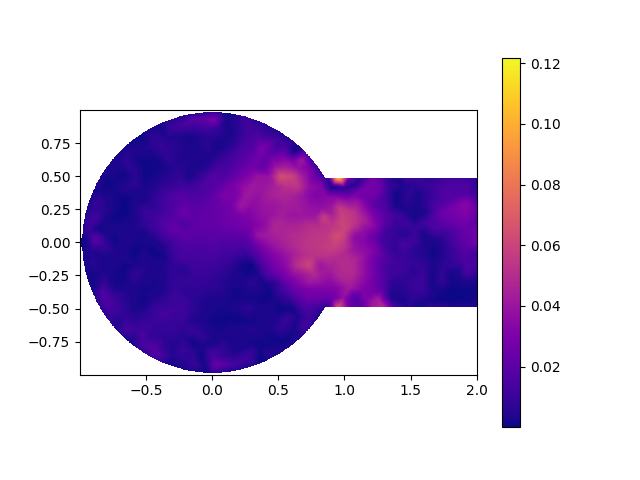}
         %\caption{SNI error}
         %\label{fig:laplace2d_n schwarz sni error viz}
    \end{subfigure}
    \begin{subfigure}[b]{0.3\textwidth}
         \centering
         \includegraphics[width=\textwidth]{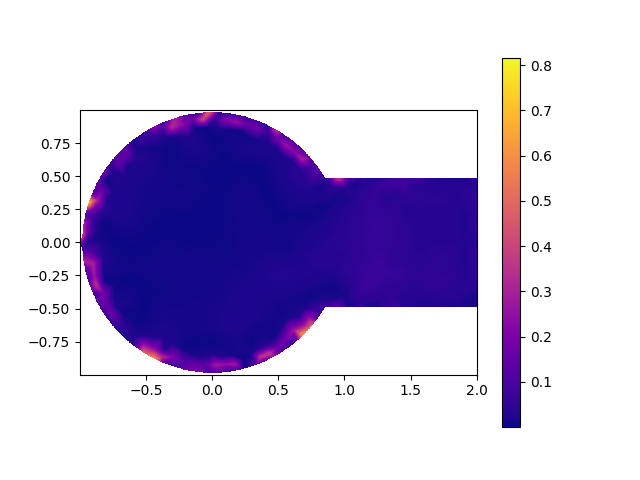}
         %\caption{GNOT error}
         %\label{fig:laplace2d_n schwarz gnot error viz}
     \end{subfigure}
     
     \begin{subfigure}[b]{0.3\textwidth}
         \centering
         \includegraphics[width=\textwidth]{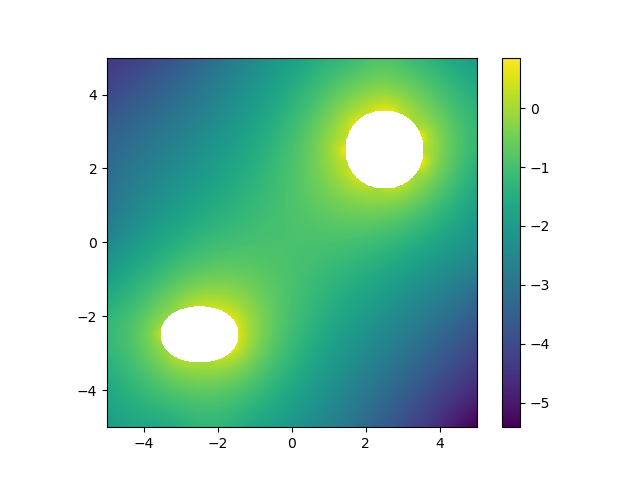}
         %\caption{Laplace2d-Mixed domain B}
         %\label{fig:laplace2d_n holes viz}
     \end{subfigure}
    \begin{subfigure}[b]{0.3\textwidth}
         \centering
         \includegraphics[width=\textwidth]{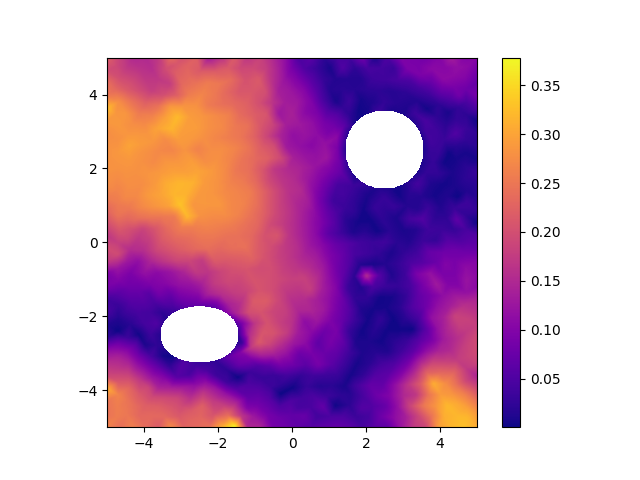}
         %\caption{SNI error}
         %\label{fig:laplace2d_n holes sni error viz}
    \end{subfigure}
    \begin{subfigure}[b]{0.3\textwidth}
         \centering
         \includegraphics[width=\textwidth]{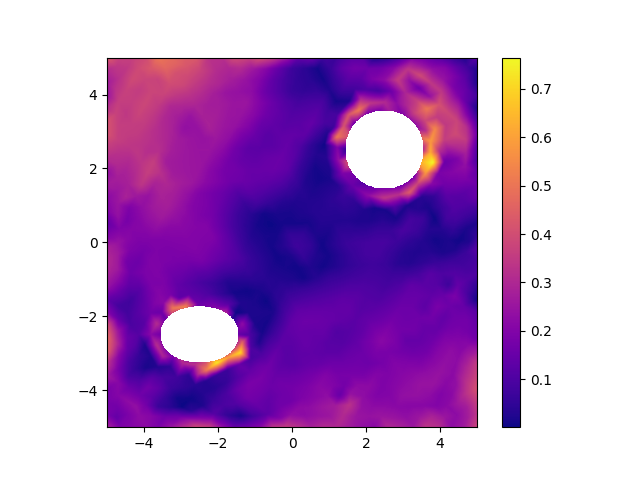}
         %\caption{GNOT error}
         %\label{fig:laplace2d_n holes gnot error viz}
     \end{subfigure}

     \begin{subfigure}[b]{0.3\textwidth}
         \centering
         \includegraphics[width=\textwidth]{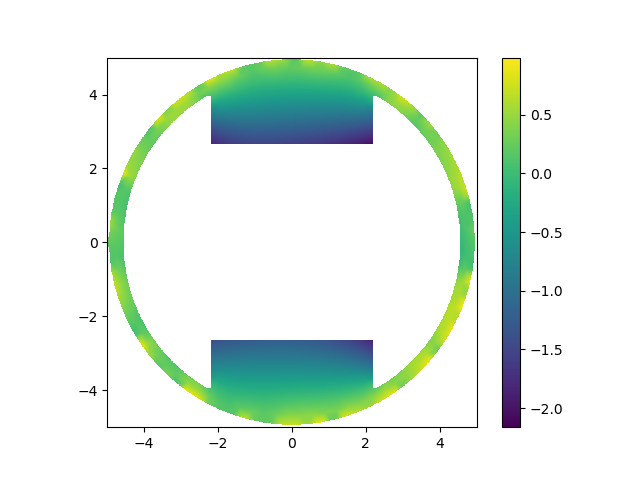}
         %\caption{Laplace2d-Mixed domain C}
         %\label{fig:laplace2d_n bosch viz}
     \end{subfigure}
     \begin{subfigure}[b]{0.3\textwidth}
         \centering
         \includegraphics[width=\textwidth]{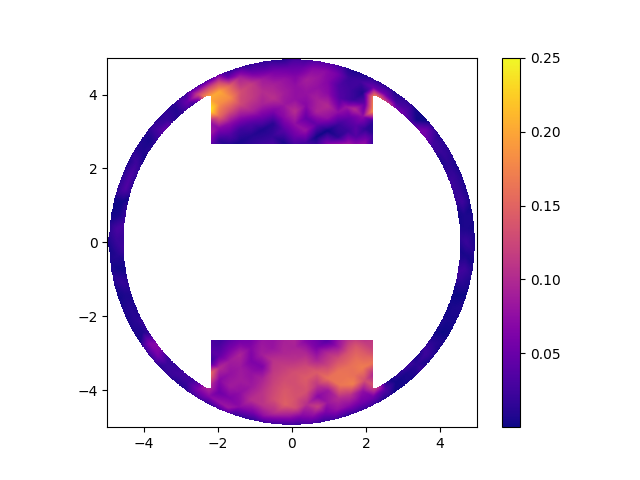}
         %\caption{SNI error}
         %\label{fig:laplace2d_n bosch sni error viz}
    \end{subfigure}
    \begin{subfigure}[b]{0.3\textwidth}
         \centering
         \includegraphics[width=\textwidth]{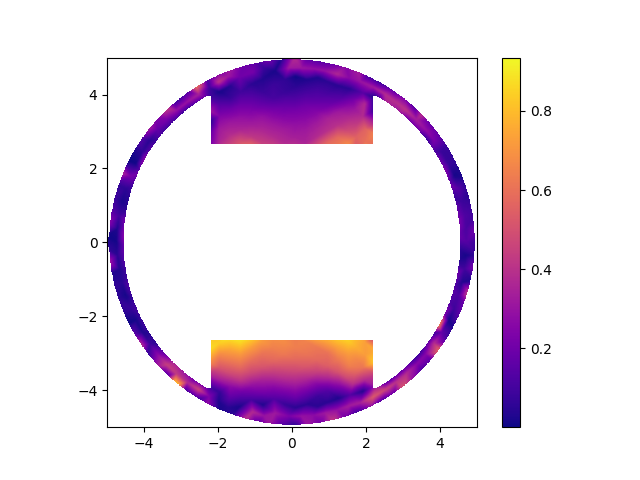}
         %\caption{GNOT error}
         %\label{fig:laplace2d_n bosch gnot error viz}
     \end{subfigure}
     
        \caption{Visualization of test dataset of  Laplace2d-Mixed on domain A, B and C. The three columns from left to right display the ground-truth solution, absolute error from SNI and GNOT direct inference.} 
        % For Laplace equations, the L2 relative error of GNOT direct inference are significantly higher than DDNO results. The DDNO errors are comparable to training errors, despite the high variance of results at some numbers of training samples. For Darcy equations, the performance of DDNO are only comparable to training with large number of samples ($\ge 20000$). GNOT direct inference struggles at holes domain, due to its clear difference in topology from training samples, while DDNO can reach much better errors.}
        \label{fig:laplace2d_n_viz}
\end{figure}

\begin{figure}[!h]
 \centering
     \begin{subfigure}[b]{0.3\textwidth}
         \centering
         \includegraphics[width=\textwidth]{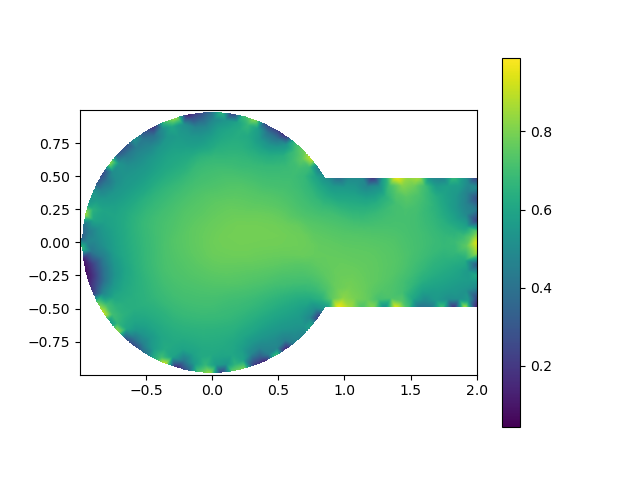}
         %\caption{Darcy2d domain A}
         %\label{fig:darcy2d schwarz viz}
     \end{subfigure}
    \begin{subfigure}[b]{0.3\textwidth}
         \centering
         \includegraphics[width=\textwidth]{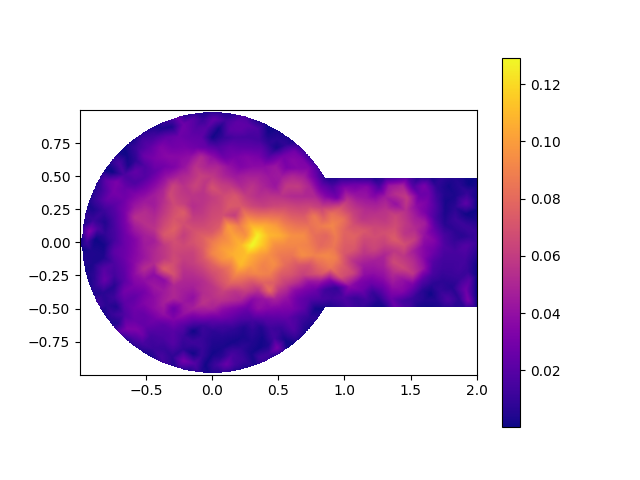}
         %\caption{SNI error}
         %\label{fig:darcy2d schwarz sni error viz}
     \end{subfigure}
     \begin{subfigure}[b]{0.3\textwidth}
         \centering
         \includegraphics[width=\textwidth]{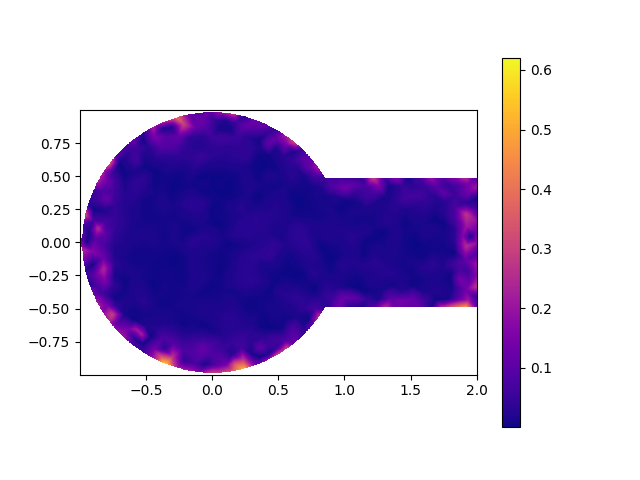}
         %\caption{GNOT error}
         %\label{fig:darcy2d schwarz gnot error viz}
     \end{subfigure}
     
     \begin{subfigure}[b]{0.3\textwidth}
         \centering
         \includegraphics[width=\textwidth]{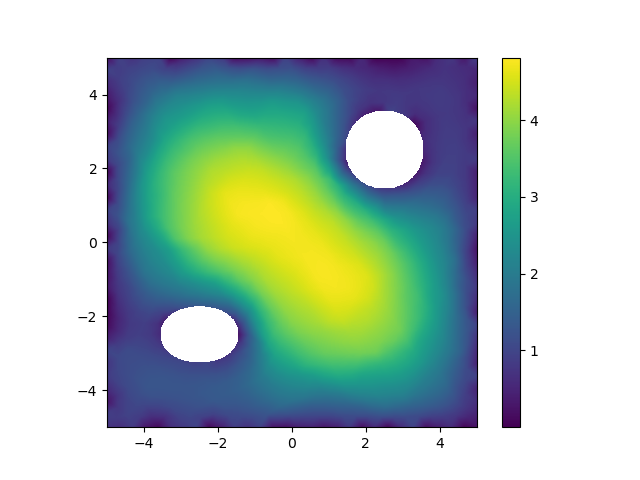}
         %\caption{Darcy2d domain B}
         %\label{fig:darcy2d holes viz}
     \end{subfigure}
    \begin{subfigure}[b]{0.3\textwidth}
         \centering
         \includegraphics[width=\textwidth]{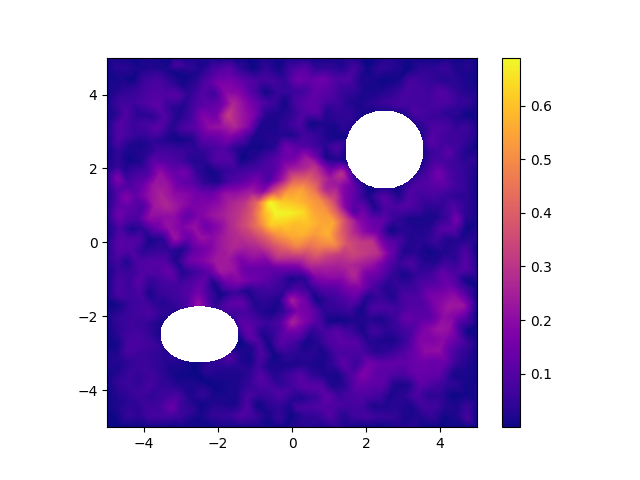}
        % \caption{SNI error}
         %\label{fig:darcy2d holes sni error viz}
     \end{subfigure}
     \begin{subfigure}[b]{0.3\textwidth}
         \centering
         \includegraphics[width=\textwidth]{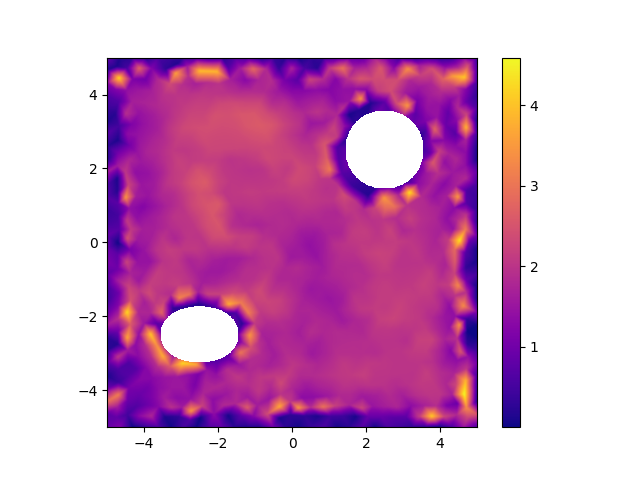}
         %\caption{GNOT error}
         %\label{fig:darcy2d holes gnot error viz}
     \end{subfigure}

     \begin{subfigure}[b]{0.3\textwidth}
         \centering
         \includegraphics[width=\textwidth]{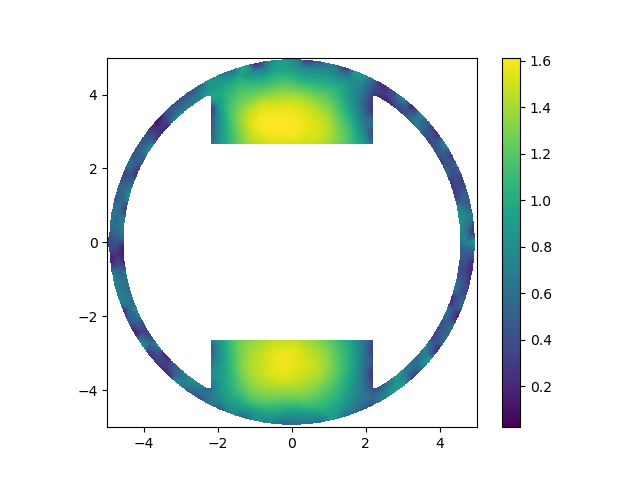}
         %\caption{Darcy2d domain C}
         %\label{fig:darcy2d bosch viz}
     \end{subfigure}
     \begin{subfigure}[b]{0.3\textwidth}
         \centering
         \includegraphics[width=\textwidth]{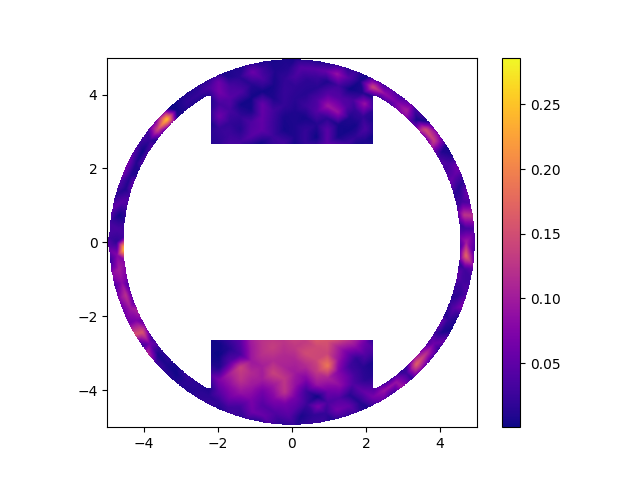}
         %\caption{SNI error}
         %\label{fig:darcy2d bosch sni error viz}
     \end{subfigure}
     \begin{subfigure}[b]{0.3\textwidth}
         \centering
         \includegraphics[width=\textwidth]{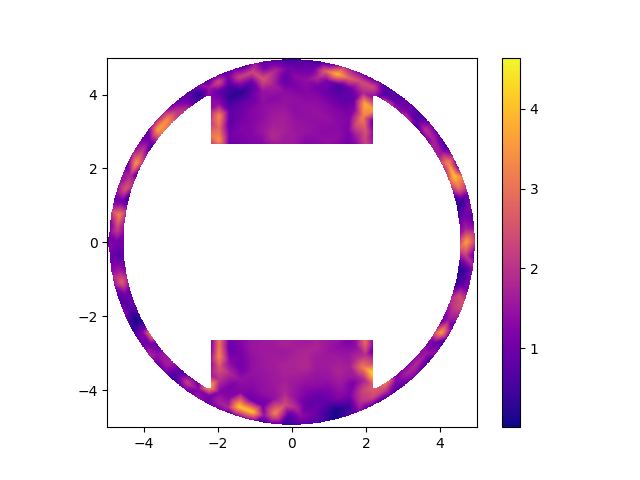}
         %\caption{GNOT error}
        % \label{fig:darcy2d bosch gnot error viz}
     \end{subfigure}
     
        \caption{Visualization of test dataset of Darcy2d on domain A, B and C. The three columns from left to right display the ground-truth solution, absolute error from SNI and GNOT direct inference.} 
        % For Laplace equations, the L2 relative error of GNOT direct inference are significantly higher than DDNO results. The DDNO errors are comparable to training errors, despite the high variance of results at some numbers of training samples. For Darcy equations, the performance of DDNO are only comparable to training with large number of samples ($\ge 20000$). GNOT direct inference struggles at holes domain, due to its clear difference in topology from training samples, while DDNO can reach much better errors.}
        \label{fig:darcy2d_viz}
\end{figure}

\newpage
\begin{figure}[!b]
     \centering
     \begin{subfigure}[b]{0.3\textwidth}
         \centering
         \includegraphics[width=\textwidth]{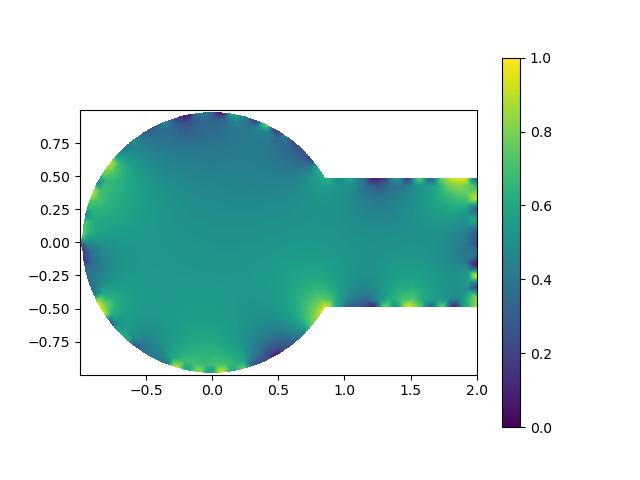}
         %\caption{Laplace2d-Dirichlet domain A}
         %\label{fig:laplace schwarz viz}
     \end{subfigure}
    \begin{subfigure}[b]{0.3\textwidth}
         \centering
         \includegraphics[width=\textwidth]{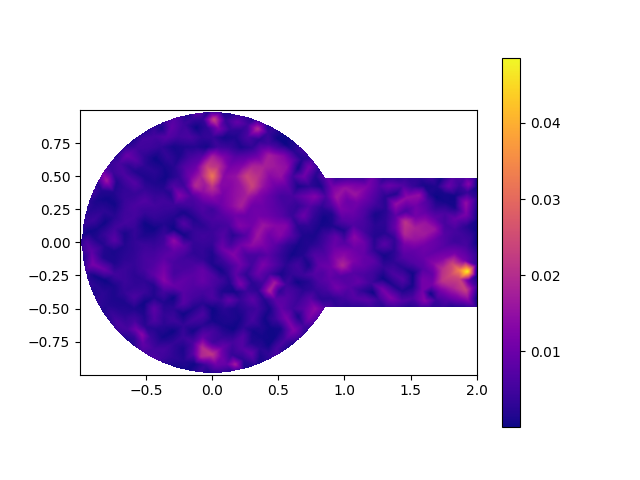}
         %\caption{Error of SNI}
        % \label{fig:laplace schwarz sni error viz}
     \end{subfigure}
     \begin{subfigure}[b]{0.3\textwidth}
         \centering
         \includegraphics[width=\textwidth]{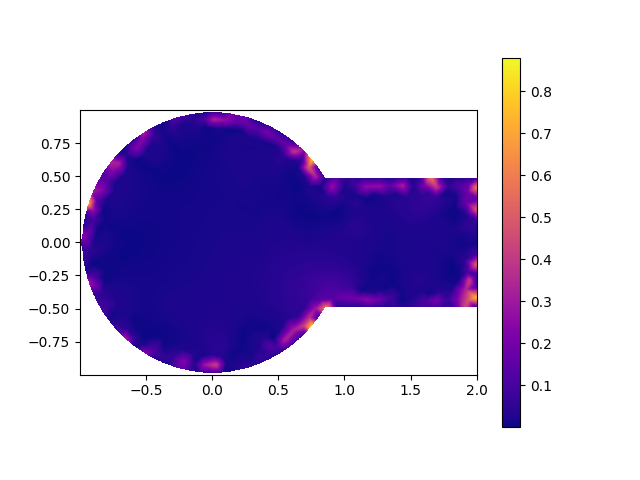}
         %\caption{Error of GNOT}
         %\label{fig:laplace schwarz gnot error viz}
     \end{subfigure}
    
     \begin{subfigure}[b]{0.3\textwidth}
         \centering
         \includegraphics[width=\textwidth]{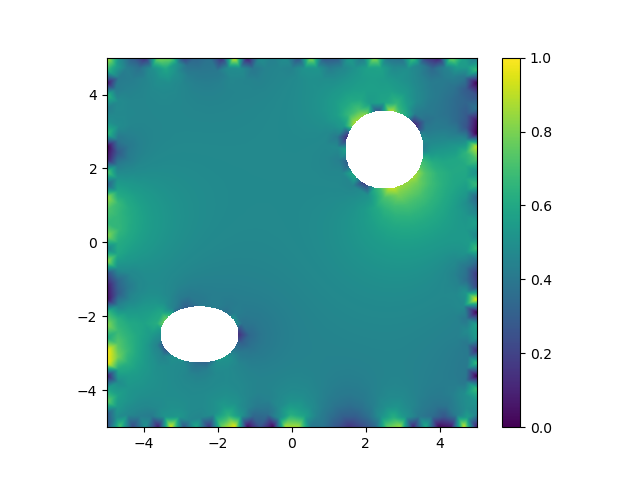}
        % \caption{Laplace2d-Dirichlet domain B}
         %\label{fig:laplace holes viz}
     \end{subfigure}
    \begin{subfigure}[b]{0.3\textwidth}
         \centering
         \includegraphics[width=\textwidth]{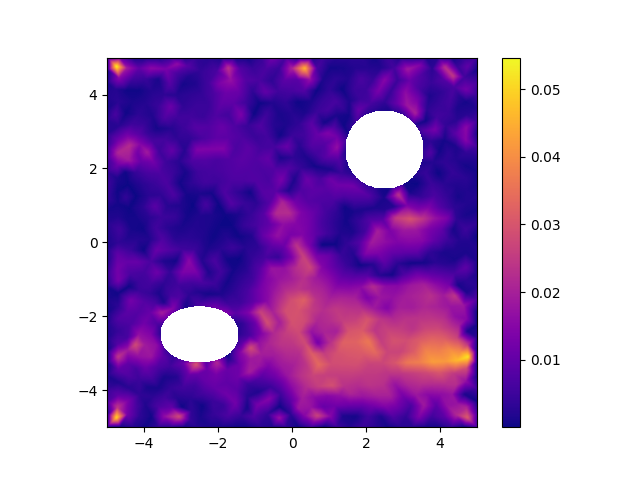}
         %\caption{Error of SNI}
         %\label{fig:laplace holes sni error viz}
     \end{subfigure}
     \begin{subfigure}[b]{0.3\textwidth}
         \centering
         \includegraphics[width=\textwidth]{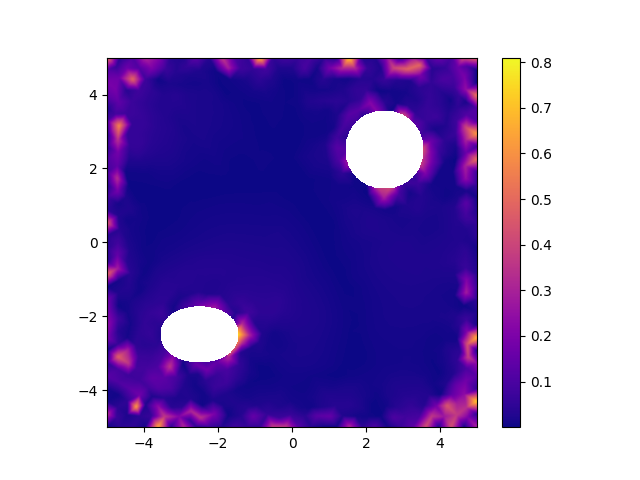}
         %\caption{Error of GNOT}
         %\label{fig:laplace holes gnot error viz}
     \end{subfigure}

     % \vskip\baselineskip
     \begin{subfigure}[b]{0.3\textwidth}
         \centering
         \includegraphics[width=\textwidth]{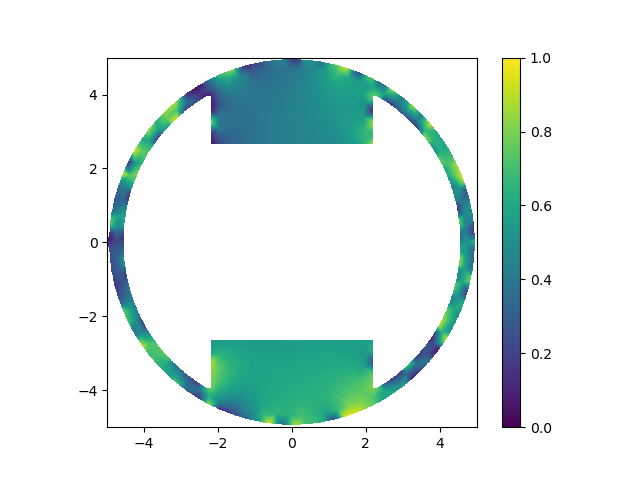}
         %\caption{Laplace2d-Dirichlet domain C}
        % \label{fig:laplace bosch viz}
     \end{subfigure}
    \begin{subfigure}[b]{0.3\textwidth}
         \centering
         \includegraphics[width=\textwidth]{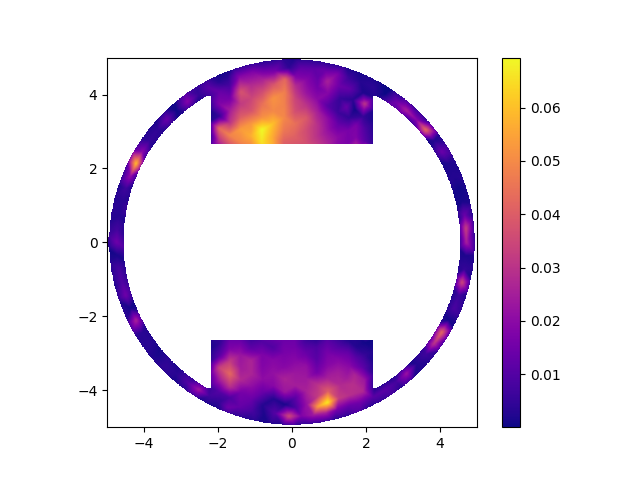}
         %\caption{Error of SNI}
         %\label{fig:laplace bosch sni error viz}
     \end{subfigure}
     \begin{subfigure}[b]{0.3\textwidth}
         \centering
         \includegraphics[width=\textwidth]{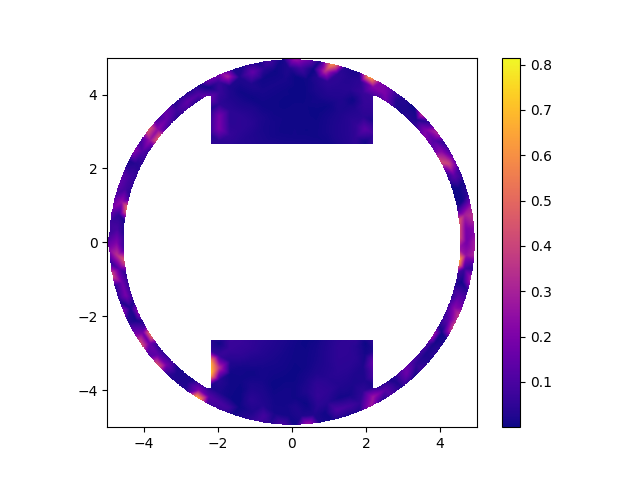}
         %\caption{Error of GNOT}
         %\label{fig:laplace bosch gnot error viz}
     \end{subfigure}
     
\caption{Visualization of test dataset of NonlinearPoisson2d on domain A, B and C. The three columns from left to right display the ground-truth solution, absolute error from SNI and GNOT direct inference.} 
        % For Laplace equations, the L2 relative error of GNOT direct inference are significantly higher than DDNO results. The DDNO errors are comparable to training errors, despite the high variance of results at some numbers of training samples. For Darcy equations, the performance of DDNO are only comparable to training with large number of samples ($\ge 20000$). GNOT direct inference struggles at holes domain, due to its clear difference in topology from training samples, while DDNO can reach much better errors.}
        \label{fig:nonlinear_poisson2d_viz}
\end{figure}

\begin{figure}[!h]
     \centering
     \begin{subfigure}[t]{1.0\textwidth}
         \centering
         \includegraphics[width=\linewidth]{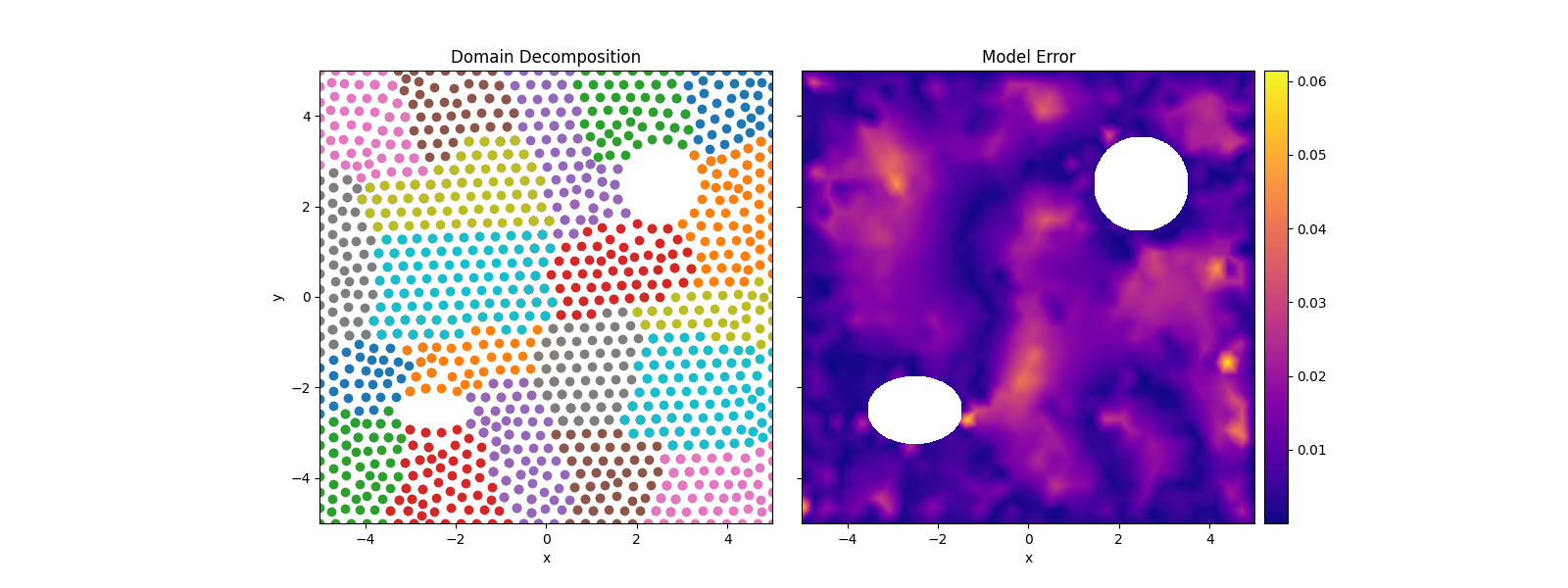}
         %\caption{Lap2d-M domain A}
         %\label{fig:laplace_m schwarz}
     \end{subfigure}

        \caption{Visualization of error distribution together with decomposed domain for Laplace2d-Dirichlet on domain B.}
        % For Laplace equations, the L2 relative error of GNOT direct inference are significantly higher than DDNO results. The DDNO errors are comparable to training errors, despite the high variance of results at some numbers of training samples. For Darcy equations, the performance of DDNO are only comparable to training with large number of samples ($\ge 20000$). GNOT direct inference struggles at holes domain, due to its clear difference in topology from training samples, while DDNO can reach much better errors.}
        \label{fig:visualization_of_loss_decomposition}
\end{figure}

\section{Broader Impacts}
\label{app:impact}

First, the proposed framework holds the potential to serve as an alternative to conventional PDE solving tools. Through its ability to address challenges related to geometry-generalization and data efficiency, the framework offers advantages that can significantly improve the efficiency of PDE solving. This improvement can have a positive impact on various industries, including engineering, physics, and finance, where PDEs are extensively employed for modeling and simulation purposes.

% Nevertheless, it is crucial to acknowledge that whether the framework can be adopted as a replacement for traditional PDE solving tools depends on several factors. These factors include the specific problem domain being considered, the availability of sufficient training data, and the desired level of accuracy required for the given application. Assessing these factors and considering the specific context is essential when determining the suitability of the framework as a replacement for traditional PDE solving approaches.

Second, the proposed three-level hierarchy for PDE generalization provides researchers with valuable directions for future exploration in neural operator research. This hierarchical structure offers a framework to systematically address the challenges associated with generalizing neural operators to new geometries and PDEs. By considering these three levels, researchers can focus on developing techniques and methodologies that improve the adaptability, flexibility, and scalability of neural operators.
Furthermore, current operator learning methods in the neural operator field are predominantly driven by data and do not adequately consider the underlying PDE information. In our research, we introduce domain decomposition into the neural operator domain to tackle the issue of geometric generalization, incorporating traditional PDE approaches. This research direction presents significant potential for further investigation.

\end{document}